\newcommand{\ie}{\textit{i.e.}\hspace{4pt}}
\newcommand{\cf}{\textit{cf.}\hspace{4pt}}
\newtheorem{theorem}{Theorem}
\newtheorem{definition}{Definition}
\newtheorem{lemma}{Lemma}
\newtheorem{proposition}{Proposition}
\newtheorem{corollary}{Corollary}
\newtheorem{remark}{Remark}
\def\RR{\mathbb R}
\newcommand{\X}{{\cal X}}
\newcommand{\D}{{\cal D}}
\def\D{{\cal D}}
\def\argmin{\mathop{\rm arg\, min}}
\def\R{\mathbb{R}}
\def\E{\mathbb E}
\def\PROB{{\mathbb P}}
\def\Pcal{{\cal P}}
\def\var{{\rm Var}}
\def\I{{\mathbb I}}
\def\pr{\PROB}
\newcommand{\roc}{\rm  ROC}
\newcommand{\Y}{{\cal Y}}
\def\vus{{\rm VUS}}
\newtheorem{assumption}{Assumption}
\begin{document}

\title{Scaling-up Empirical Risk Minimization:\\Optimization of Incomplete $U$-statistics}

\author{
St\'ephan Cl\'emen\c{c}on\thanks{LTCI, CNRS, T\'el\'ecom ParisTech, Universit\'e Paris-Saclay, 75013, Paris, France. \texttt{\{stephan.clemencon,igor.colin\}@telecom-paristech.fr}}~, 
Aur\'elien Bellet\thanks{Magnet Team, INRIA Lille -- Nord Europe, 59650 Villeneuve d'Ascq, France. \texttt{aurelien.bellet@inria.fr}}~, 
Igor Colin\footnotemark[1]
}

\maketitle


\begin{abstract} In a wide range of statistical learning problems such as ranking, clustering or metric learning among others, the risk is accurately estimated by $U$-statistics of degree $d\geq 1$, \emph{i.e.} functionals of the training data with low variance that take the form of averages over $k$-tuples. From a computational perspective, the calculation of such statistics is highly expensive even for a moderate sample size $n$, as it requires averaging $O(n^d)$ terms. This makes learning procedures relying on the optimization of such data functionals hardly feasible in practice. It is the major goal of this paper to show that, strikingly, such empirical risks can be replaced by drastically computationally simpler Monte-Carlo estimates based on $O(n)$ terms only, usually referred to as \textit{incomplete $U$-statistics}, without damaging the $O_{\mathbb{P}}(1/\sqrt{n})$ learning rate of \textit{Empirical Risk Minimization} (ERM) procedures. For this purpose, we establish uniform deviation results describing the error made when approximating a $U$-process by its incomplete version under appropriate complexity assumptions. Extensions to model selection, fast rate situations and various sampling techniques are also considered, as well as an application to stochastic gradient descent for ERM. Finally, numerical examples are displayed in order to provide strong empirical evidence that the approach we promote largely surpasses more naive subsampling techniques.
\end{abstract}
 

\section{Introduction}

In classification/regression, empirical risk estimates are sample mean statistics and the theory of \textit{Empirical Risk Minimization} (ERM) has been originally developed in this context, see \cite{DGL96}. The ERM theory essentially relies on the study of maximal deviations between these empirical averages and their expectations, under adequate complexity assumptions on the set of prediction rule candidates. The relevant tools are mainly concentration inequalities for empirical processes, see \cite{LT91} for instance.

In a wide variety of problems that received a good deal of attention in the machine learning literature and ranging from clustering to image recognition through ranking or learning on graphs, natural estimates of the risk are not basic sample means but take the form of averages of $d$-tuples, usually referred to as $U$-statistics in Probability and Statistics, see \cite{Lee90}. In \cite{CLV05} for instance, ranking is viewed as pairwise classification and the empirical ranking error of any given prediction rule is a $U$-statistic of order $2$, just like the \textit{within cluster point scatter} in cluster analysis \citep[see][]{CLEM14} or empirical performance measures in metric learning, refer to \cite{Cao2012a} for instance. Because empirical functionals are computed by averaging over tuples of sampling observations, they exhibit a complex dependence structure, which appears as the price to be paid for low variance estimates. \textit{Linearization techniques} \citep[see][]{Hoeffding48} are the main ingredient in studying the behavior of empirical risk minimizers in this setting, allowing to establish probabilistic upper bounds for the maximal deviation of collection of centered $U$-statistics under appropriate conditions by reducing the analysis to that of standard empirical processes. However, while the ERM theory based on minimization of $U$-statistics is now consolidated \citep[see][]{CLV08}, putting this approach in practice generally leads to significant computational difficulties that are not sufficiently well documented in the machine learning literature. In many concrete cases, the mere computation of the risk involves a summation over an extremely high number of tuples and runs out of time or memory on most machines.  

Whereas the availability of massive information in the Big Data era, which machine learning procedures could theoretically now rely on, has motivated the recent development of \textit{parallelized / distributed} approaches in order to scale-up certain statistical learning algorithms, see \cite{BBL11} or \cite{BCJM13} and the references therein, the present paper proposes to use \textit{sampling techniques} as a remedy to the apparent intractability of learning from data sets of explosive size, in order to break the current computational barriers. More precisely, it is the major goal of this article to study how a simplistic sampling technique (\textit{i.e.} drawing with replacement) applied to risk estimation, as originally proposed by \cite{Blom76} in the context of asymptotic pointwise estimation, may efficiently remedy this issue without damaging too much the ``reduced variance'' property of the estimates, while preserving the learning rates (including certain "fast-rate" situations).
For this purpose, we investigate to which extent a $U$-process, that is a collection of $U$-statistics, can be accurately approximated by a Monte-Carlo version (which shall be referred to as an \textit{incomplete $U$-process} throughout the paper) involving much less terms, provided it is indexed by a class of kernels of controlled complexity (in a sense that will be explained later). A maximal deviation inequality connecting the accuracy of the approximation to the number of terms involved in the approximant is thus established. This result is the key to the analysis of the statistical performance of minimizers of risk estimates when they are in the form of an incomplete $U$-statistic. In particular, this allows us to show the advantage of using this specific sampling technique, compared to more naive approaches with exactly the same computational cost, consisting for instance in first drawing a subsample and then computing a risk estimate of the form of a (complete) $U$-statistic based on it. We also show how to incorporate this sampling strategy into iterative statistical learning techniques based on stochastic gradient descent (SGD), see \cite{B98}. The variant of the SGD method we propose involves the computation of an incomplete $U$-statistic to estimate the gradient at each step.  For the estimator thus produced, rate bounds describing its statistical performance are established under mild assumptions. Beyond theoretical results, we present illustrative numerical experiments on metric learning and clustering with synthetic and real-world data that support the relevance of our approach. 


The rest of the article is organized as follows. 
In Section~\ref{sec:background}, we recall basic definitions and concepts pertaining to the theory of $U$-statistics/processes and present important examples in machine learning where natural estimates of the performance/risk measure are $U$-statistics. We then review the existing results for the empirical minimization of complete $U$-statistics.
In Section \ref{sec:approx}, we recall the notion of incomplete $U$-statistic and we derive maximal deviation inequalities describing the error made when approximating a $U$-statistic by its incomplete counterpart uniformly over a class of kernels that fulfills appropriate complexity assumptions. This result is next applied to derive (possibly fast) learning rates for minimizers of the incomplete version of the empirical risk and to model selection. Extensions to incomplete $U$-statistics built by means of other sampling schemes than sampling with replacement are also investigated. In Section \ref{sec:SGD}, estimation by means of incomplete $U$-statistics is applied to stochastic gradient descent for iterative ERM. Section~\ref{sec:exp} presents some numerical experiments. Finally, Section~\ref{sec:conclu} collects some concluding remarks.
Technical details are deferred to the Appendix.

\section{Background and Preliminaries}\label{sec:background}

As a first go, we briefly recall some key notions of the theory of $U$-statistics (Section~\ref{subsec:Ustat}) and provide several examples of statistical learning problems for which natural estimates of the performance/risk measure are in the form of $U$-statistics (Section~\ref{subsec:examples}). Finally, we review and extend the existing rate bound analysis for the empirical minimization of (complete) generalized $U$-statistics (Section~\ref{subsec:erm_complete}). Here and throughout, $\mathbb{N}^*$ denotes the set of all strictly positive integers, $\RR_+$ the set of nonnegative real numbers. 

\subsection{$U$-Statistics/Processes: Definitions and Properties}\label{subsec:Ustat}

For clarity, we recall the definition of generalized $U$%
-statistics.
An excellent account of properties and asymptotic theory of $U$-statistics can be found in \cite{Lee90}. 
\begin{definition}\label{def:Ustat}{\sc (Generalized $U$-statistic)}
Let $K\geq 1$ and $(d_1,\; \ldots,\; d_K)\in \mathbb{N}^{*K}$. Let $\mathbf{X}_{\{1,\;\ldots,\; n_k  \}}=(X^{(k)}_{1},\;\ldots,\; X^{(k)}_{n_k})$, $1\leq k\leq K$, be $K$ independent samples of sizes $n_k\geq d_k$ and composed of i.i.d. random variables taking their values in some measurable space $\X_k$ with distribution $F_k(dx)$ respectively. 
Let  $H:\X_1^{d_1}\times \cdots \times \X_K^{d_K}\rightarrow\mathbb{R}$ be a measurable function, square integrable with respect to the probability distribution $\mu=F_1^{\otimes d_1}\otimes \cdots \otimes F_K^{\otimes d_K}$. Assume in addition (without loss of generality) that $H(\mathbf{x}^{(1)},\; \ldots, \; \mathbf{x}^{(K)})$ is symmetric within each block of arguments $\mathbf{x}^{(k)}$ (valued in $\X^{d_k}_k$), $1\leq k\leq K$.
The generalized (or $K$-sample) $U$-statistic of degrees $(d_1,\; \ldots,\; d_K)$ with kernel $H$, is then defined as
\begin{equation}\label{eq:UstatG}
U_{\mathbf{n}}(H)=\frac{1}{\prod_{k=1}^K \binom{n_k}{d_k}}\sum_{I_1}\ldots\sum_{I_K} H(\mathbf{X}^{(1)}_{I_1},\; \mathbf{X}^{(2)}_{I_2},\; \ldots,\; \mathbf{X}^{(K)}_{I_K}),
\end{equation}
 where the symbol $\sum_{I_k}$ refers to summation over all $\binom{n_k}{d_k}$ subsets $\mathbf{X}^{(k)}_{I_k}=( X^{(k)}_{i_1},\;\ldots,\; X^{(k)}_{i_{d_k}})$ related to a set $I_k$ of $d_k$ indexes $1\leq i_1< \ldots <i_{d_k}\leq n_k$ and $\mathbf{n}=(n_1,\; \ldots,\; n_K)$.
\end{definition}

The above definition generalizes standard sample mean statistics, which correspond to the case $K=1=d_1$. More generally when $K=1$, $U_{\mathbf{n}}(H)$ is an average over all $d_1$-tuples of observations, while $K\geq 2$ corresponds to the multi-sample situation with a $d_k$-tuple for each sample $k\in\{1,\dots,K\}$. A $U$-process is defined as a collection of $U$-statistics indexed by a set $\mathcal{H}$ of kernels. This concept generalizes the notion of empirical process.

Many statistics used for pointwise estimation or
hypothesis testing are actually generalized $U$-statistics (\textit{e.g.} the sample
variance, the Gini mean difference, the Wilcoxon Mann-Whitney statistic,
Kendall tau). Their popularity mainly arises from their ``reduced variance''
property: the statistic $U_{\mathbf{n}}(H)$ has minimum variance among all
unbiased estimators of the parameter 
\begin{eqnarray}\label{eq:parameter}
\mu(H)&=&\mathbb{E}\left[H(X^{(1)}_{1},\;\ldots,\; X^{(1)}_{d_1},\; \ldots,\;
X^{(K)}_{1},\;\ldots,\; X^{(K)}_{d_K})\right]\\
&=& \int_{\mathbf{x}^{(1)}\in \X_1^{d_1}}\cdots  \int_{\mathbf{x}^{(K)}\in \X_K^{d_K}}  H(\mathbf{x}^{(1)},\; \ldots, \; \mathbf{x}^{(K)}) dF_1^{\otimes d_1}(\mathbf{x}^{(1)})\cdots dF_K^{\otimes d_K}(\mathbf{x}^{(K)})=\mathbb{E}\left[U_{\mathbf{n}}(H)  \right].\nonumber
\end{eqnarray}

Classically, the limit properties of these statistics
(law of large numbers, central limit theorem, \textit{etc.}) are investigated in an asymptotic framework
stipulating that, as the size of the full pooled sample 
\begin{equation}\label{eq:fullsize}
n\overset{def}{=}n_1+\;\ldots+n_K
\end{equation}
tends to infinity, we have: 
\begin{equation}\label{asymptotics}
n_k/n\rightarrow \lambda_k>0 \text{ for }
k=1,\;\ldots,\;K.
\end{equation} 
Asymptotic results and deviation/moment inequalities for $K$-sample $U$-statistics can be classically established by means of specific representations of this class of functionals, see \eqref{eq:Hoeffding} and \eqref{eq:Hajek} introduced in later sections. Significant progress in the analysis of $U$-statistics and $U$-processes has then recently been achieved by means of decoupling theory, see \cite{PenaGine99}.
For completeness, we point out that the asymptotic behavior of (multisample) $U$-statistics has been investigated under weaker integrability assumptions than that stipulated in Definition \ref{def:Ustat}, see \cite{Lee90}.

\subsection{Motivating Examples}\label{subsec:examples}

In this section, we review important supervised and unsupervised statistical learning problems where the empirical performance/risk measure is of the form of a generalized $U$-statistics. They shall serve as running examples throughout the paper.

\subsubsection{Clustering}\label{sec:clustering}

Clustering refers to the unsupervised learning task that consists in partitioning a set of data points $X_1,\; \ldots,\; X_n$ in a feature space $\mathcal{X}$ into a finite collection of subgroups depending on their similarity (in a sense that must be specified): roughly, data points in the same subgroup should be more similar to each other than to those lying in other subgroups. One may refer to Chapter 14 in \cite{FriedHasTib09} for an account of state-of-the-art clustering techniques. 
 Formally, let $M\geq 2$ be the number of desired clusters and consider a symmetric function $D: \X\times\X\rightarrow \R_+$ such that $D(x,x)=0$ for any $x\in \mathcal{X}$. $D$ measures the dissimilarity between pairs of observations $(x,x')\in \X^2$: the larger $D(x,x')$, the less similar $x$ and $x'$. For instance, if $\mathcal{X}\subset\R^d$, $D$ could take the form $D(x,x')=\Psi(\|x-x'\|_q)$, where $q\geq 1$, $\vert\vert a\vert\vert_q=(\sum_{i=1}^d\vert a_i\vert^q)^{1/q}$ for all $a\in \RR^d$ and $\Psi:\RR_+\rightarrow\RR_+$ is any borelian nondecreasing function such that $\Psi(0)=0$.
 In this context, the goal of clustering methods is to find a partition $\Pcal$ of the feature space $\X$ in a class $\Pi$ of partition candidates that minimizes the following \textit{empirical clustering risk}:
\begin{equation}\label{eq:emp_clust_risk}
\widehat{W}_{n}(\Pcal)=\frac{2}{n(n-1)}\sum_{1\leq i<j \leq n}D(X_{i},X_{j})\cdot\Phi_{\mathcal{P}}(X_{i},X_{j}),
\end{equation}
where $\Phi_{\mathcal{P}}(x,x')=\sum_{\mathcal{C}\in \mathcal{P}}\mathbb{I}\{(x,x')\in \mathcal{C}^2  \}$.
Assuming that the data $X_1,\; \ldots,\; X_n$ are i.i.d. realizations of a generic random variable $X$ drawn from an unknown probability distribution $F(dx)$ on $\mathcal{X}$, the quantity $\widehat{W}_{n}(\Pcal)$, also known as the \textit{intra-cluster similarity} or \textit{within cluster point scatter}, is a one sample $U$-statistic of degree two ($K=1$ and $d_1=2$) with kernel given by:
\begin{equation}\label{eq:kernel_clust} \forall (x,x')\in \X^2,\;\; 
H_{\mathcal{P}}(x,x')= D(x,x')\cdot \Phi_{\mathcal{P}}(x,x'),
\end{equation}
according to Definition \ref{def:Ustat} provided that $\int\int_{(x,x')\in \X^2}D^2(x,x')\cdot \Phi_{\Pcal}(x,x')F(dx)F(dx')<+\infty$.
The expectation of the empirical clustering risk $\widehat{W}_{n}(\Pcal)$  is given by
\begin{equation}\label{eq:clust_risk}
W(\Pcal)=\E\left[D(X,X')\cdot \Phi_{\Pcal}(X,X')\right],
\end{equation}
where $X'$ is an independent copy of the r.v. $X$, and is named the \textit{clustering risk} of the partition $\mathcal{P}$.
The statistical analysis of the clustering performance of minimizers $\widehat{\Pcal}_n$ of the empirical risk  \eqref{eq:emp_clust_risk} over a class $\Pi$ of appropriate complexity can be found in \cite{CLEM14}. Based on the theory of $U$-processes, it is shown in particular how to establish rate bounds for the excess of clustering risk of any empirical minimizer, $W(\widehat{\Pcal}_n)-\inf_{\Pcal\in \Pi}W(\Pcal)$ namely, under appropriate complexity assumptions on the cells forming the partition candidates.

\subsubsection{Metric Learning}\label{sec:metriclearning}

Many problems in machine learning, data mining and pattern recognition (such as the clustering problem described above) rely on a metric to measure the distance between data points. Choosing an appropriate metric for the problem at hand is crucial to the performance of these methods. Motivated by a variety of applications ranging from computer vision to information retrieval through bioinformatics, metric learning aims at adapting the metric to the data and has attracted a lot of interest in recent years \citep[see for instance][for an account of metric learning and its applications]{BHS14}.
As an illustration, we consider the metric learning problem for supervised classification. In this setting, we observe independent copies $(X_{1},Y_{1}),\ldots,(X_{n},Y_{n})$ of a random couple $(X,Y)$, where the r.v. $X$ takes values in some feature space $\X$ and $Y$ in a finite set of labels, $\Y=\{1,\; \ldots,\; C\}$ with $C\geq 2$ say. Consider a set $\mathcal{D}$ of distance measures $D: \X\times\X\rightarrow \R_+$. Roughly speaking, the goal of metric learning in this context is to find a metric under which pairs of points with the same label are close to each other and those with different labels are far away. The risk of a metric $D$ can be expressed as:
\begin{equation}
\label{eq:metricgen}
R(D)= \E\left[\phi\left((1-D(X,X')\cdot (2\mathbb{I}\{Y=Y' \}-1)\right)\right],
\end{equation}
where $\phi(u)$ is a convex loss function upper bounding the indicator function $\mathbb{I}\{u\geq 0  \}$, such as the hinge loss $\phi(u)=\max(0,1-u)$. The natural empirical estimator of this risk is 
\begin{equation}
\label{eq:metricemp}
R_{n}(D)= \frac{2}{n(n-1)}\sum_{1\leq i<j\leq n}\phi\left((D(X_{i},X_{j})-1)\cdot (2\mathbb{I}\{ Y_{i}=Y_{j}\} -1 )\right),
\end{equation}
which is a one sample $U$-statistic of degree two with kernel given by:
\begin{equation}
H_D\left((x,y),(x',y)  \right)=\phi\left((D(x,x')-1)\cdot (2\mathbb{I}\{ y=y' \} -1 )  \right).
\end{equation} 

The convergence to \eqref{eq:metricgen} of a minimizer of \eqref{eq:metricemp} has been studied in the frameworks of algorithmic stability \citep{Jin2009a}, algorithmic robustness \citep{Bellet2015a} and based on the theory of $U$-processes under appropriate regularization \citep{Cao2012a}.



\subsubsection{Multipartite Ranking}

Given objects described by a random vector of attributes/features $X\in \mathcal{X}$ and the (temporarily hidden) ordinal labels $Y\in\{1,\; \ldots,\; K\}$ assigned to it, the goal of \textit{multipartite ranking} is to rank them in the same order as that induced by the labels, on the basis of a training set of labeled examples.  This statistical learning problem finds many applications in a wide range of fields (\textit{e.g.} medicine, finance, search engines, e-commerce). Rankings are generally defined by means of a scoring function $s:\mathcal{X}\rightarrow \mathbb{R}$, transporting the natural order on the real line onto the feature space and the gold standard for evaluating the ranking performance of $s(x)$ is the $\roc$ manifold, or its usual summary the $\vus$ criterion ($\vus$ standing for \textit{Volume Under the $\roc$ Surface}), see \cite{ClemRob14} and the references therein.  In \cite{CRV13}, optimal scoring functions have been characterized as those that are optimal for all bipartite subproblems. In other words, they are increasing transforms of the likelihood ratio $dF_{k+1}/dF_{k}$, where $F_k$ denotes the class-conditional distribution for the $k$-th class. 
When the set of optimal scoring functions is non-empty, the authors also showed that it corresponds to the functions which maximize the volume under the $\roc$ surface 
$$
VUS(s)=\pr\{s(X_{1})<\ldots<s(X_{K})|Y_{1}=1,\; \ldots,\;Y_{K}=K\}.    
$$
Given $K$ independent samples $(X_{1}^{(k)},\ldots,X_{n_{k}}^{(k)})\overset{i.i.d.}{\sim}F_k(dx)$ for $k=1,\; \ldots,\; K$, the empirical counterpart of the VUS can be written in the following way: 
\begin{equation}\label{eq:emp_vus}
\widehat{VUS}(s)=\frac{1}{\prod^{K}_{k=1}n_{k}}\sum^{n_{1}}_{i_{1}=1}\ldots\sum^{n_{K}}_{i_{K}=1}\I\{s(X^{(1)}_{i_{1}})<\ldots<s(X^{(K)}_{i_{K}})\}.
\end{equation}
The empirical $\vus$ \eqref{eq:emp_vus} is a $K$-sample $U$-statistic of degree $(1,\; \ldots,\; 1)$ with kernel given by:
\begin{equation}
H_s(x_1,\; \ldots,\; x_K)=\mathbb{I}\{s(x_{1})<\ldots<s(x_{K})   \}.
\end{equation}

\subsection{Empirical Minimization of $U$-Statistics}
\label{subsec:erm_complete}

As illustrated by the examples above, many learning problems can be formulated as finding a certain rule $g$ in a class $\mathcal{G}$ in order to minimize a risk of the same form as \eqref{eq:parameter},
$\mu(H_g)$, with kernel $H=H_g$. Based on $K\geq 1$ independent i.i.d. samples 
$$
\mathbf{X}^{(k)}_{\{1,\;\ldots,\; n_k  \}}=(X^{(k)}_{1},\;\ldots,\; X^{(k)}_{n_k}) \text{ with } 1\leq k\leq K,$$
the ERM paradigm in statistical learning suggests to replace the risk by the $U$-statistic estimation $U_{\mathbf{n}}(H_g)$ in the minimization problem. The study of the performance of minimizers $\widehat{g}_{\mathbf{n}}$ of the empirical estimate $U_{\mathbf{n}}(H_g)$ over the class $\mathcal{G}$ of rule candidates naturally leads to analyze the fluctuations of the $U$-process
\begin{equation}
\left\{U_{\mathbf{n}}(H_g)- \mu(H_g):\; g\in \mathcal{G}  \right\}.
\end{equation}
Given the bound
\begin{equation}
\mu(H_{\widehat{g}_{\mathbf{n}}})-\inf_{g\in \mathcal{G}}\mu(H_g)\leq 2 \sup_{g\in \mathcal{G}}\vert U_{\mathbf{n}}(H_g)- \mu(H_g) \vert, 
\end{equation}
a probabilistic control of the maximal deviation $\sup_{g\in \mathcal{G}}\vert U_{\mathbf{n}}(H_g)- \mu(H_g) \vert$ naturally provides statistical guarantees for the generalization ability of the empirical minimizer $\widehat{g}_{\mathbf{n}}$. 
As shown at length in the case $K=1$ and $d_1=2$ in \cite{CLV08} and in \cite{CLEM14} for specific problems, this can be achieved under adequate complexity assumptions of the class $\mathcal{H}_{\mathcal{G}}=\{H_{g}:\; g\in \mathcal{G}  \}$. These results rely on the \textit{Hoeffding's representation} of $U$-statistics, which we recall now for clarity in the general multisample $U$-statistics setting.
Denote by $\mathfrak{S}_m$ the symmetric group of order $m$ for any $m\geq 1$ and by $\sigma(i)$ the $i$-th coordinate of any permutation $\sigma\in\mathfrak{S}_m$ for $1\leq i \leq m$. Let $\lfloor z\rfloor$ be the integer part of any real number $z$ and set 
$$
N=\min \left\{  \lfloor n_1/d_1\rfloor,\; \ldots,\; \lfloor n_K/d_K\rfloor \right\}.
$$
Observe that the $K$-sample $U$-statistic \eqref{eq:UstatG} can be expressed as
\begin{equation}\label{eq:Hoeffding}
U_{\mathbf{n}}(H)=\frac{1}{\prod_{k=1}^K n_k!}\sum_{\sigma_1\in \mathfrak{S}_{n_1}}\cdots \sum_{\sigma_K\in \mathfrak{S}_{n_K}}V_H\left(X^{(1)}_{\sigma_1(1)},\; \ldots,\; X^{(K)}_{\sigma_K(n_K)}\right),
\end{equation}
where
\begin{multline*}
V_H\left(X^{(1)}_1,\; \ldots,\; X^{(1)}_{n_1},\; \ldots,\; X_{1}^{(K)},\; \ldots,\; X_{n_K}^{(K)}   \right)
=\frac{1}{N}\Big[H\left(X^{(1)}_1,\; \ldots,\; X^{(1)}_{d_1},\; \ldots,\; X_{1}^{(K)},\; \ldots,\; X_{d_K}^{(K)}   \right)\\+
H\left(X^{(1)}_{d_1+1},\; \ldots,\; X^{(1)}_{2d_1},\; \ldots,\; X_{d_K+1}^{(K)},\; \ldots,\; X_{2d_K}^{(K)}   \right)+\ldots\\
+ H\left(X^{(1)}_{(N-1)d_1+1},\; \ldots,\; X^{(1)}_{N d_1},\; \ldots,\; X_{(N-1)d_K+1}^{(K)},\; \ldots,\; X_{N d_K}^{(K)}   \right)\Big].
\end{multline*}
This representation, sometimes referred to as the \textit{first Hoeffding's decomposition} \citep[see][]{Hoeffding48}, allows to reduce a first order analysis to the case of sums of i.i.d. random variables. 
The following result extends Corollary 3 in \cite{CLV08} to the multisample situation.
  \begin{proposition}\label{prop:Uproc}Let $\mathcal{H}$ be a collection of bounded symmetric kernels on $\prod_{k=1}^K\mathcal{X}_{k}^{d_k}$ such that
  \begin{equation}\label{cond:unifbound}
  \mathcal{M}_{\mathcal{H}}\overset{def}{=}\sup_{(H,x)\in \mathcal{H}\times\X }\vert H(x) \vert<+\infty.
  \end{equation}
  Suppose also that $\mathcal{H}$
  is a {\sc VC} major class of functions with finite Vapnik-Chervonenkis dimension $V<+\infty$. For all $\delta\in (0,1)$, we have with probability at least $1-\delta$,
   \begin{equation}\label{eq:unif_Uproc}
   \sup_{H\in \mathcal{H}} \left\vert U_{
   \mathbf{n}}(H)-\mu(H) \right\vert\leq \mathcal{M}_{\mathcal{H}}\left\{2\sqrt{\frac{2V\log(1+N)}{N}}+\sqrt{\frac{\log(1/\delta)}{N}}\right\},
   \end{equation}
   where $N=\min \left\{  \lfloor n_1/d_1\rfloor,\; \ldots,\; \lfloor n_K/d_K\rfloor \right\}$.
  \end{proposition}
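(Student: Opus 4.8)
The plan is to use the first Hoeffding decomposition \eqref{eq:Hoeffding} to dominate the fluctuations of the $U$-process by those of a single standard (i.i.d.) empirical process built on $N$ observations, and then to combine a classical {\sc VC}/symmetrization bound for its expectation with a bounded differences concentration inequality for its fluctuations.

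Write $Z=\sup_{H\in\mathcal H}|U_{\mathbf n}(H)-\mu(H)|$. For a tuple of permutations $\boldsymbol{\sigma}=(\sigma_1,\ldots,\sigma_K)\in\mathfrak S_{n_1}\times\ldots\times\mathfrak S_{n_K}$, set $Z_{\boldsymbol{\sigma}}=\sup_{H\in\mathcal H}|V_H(X^{(1)}_{\sigma_1(1)},\ldots,X^{(K)}_{\sigma_K(n_K)})-\mu(H)|$, so that $Z_{\boldsymbol{\sigma}}$ is the same deterministic functional of the reshuffled data as $\widetilde Z:=Z_{\mathrm{id}}=\sup_{H\in\mathcal H}|\frac{1}{N}\sum_{i=1}^N H(W_i)-\mu(H)|$ is of the original data, where the ``block'' variables $W_i=(X^{(1)}_{(i-1)d_1+1},\ldots,X^{(1)}_{id_1},\ldots,X^{(K)}_{(i-1)d_K+1},\ldots,X^{(K)}_{id_K})$, $1\le i\le N$, are i.i.d. with common law $\mu$ and satisfy $\mathbb E[H(W_i)]=\mu(H)$ (they use disjoint, hence independent, blocks of the observations). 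By \eqref{eq:Hoeffding}, $U_{\mathbf n}(H)-\mu(H)$ is the average over $\boldsymbol{\sigma}$ of the quantities $V_H(\cdots)-\mu(H)$; applying the triangle inequality and then taking the supremum over $\mathcal H$ gives $Z\le (\prod_k n_k!)^{-1}\sum_{\boldsymbol{\sigma}}Z_{\boldsymbol{\sigma}}$. Hence, for any nondecreasing convex $\Phi:\RR_+\to\RR_+$, Jensen's inequality yields $\Phi(Z)\le (\prod_k n_k!)^{-1}\sum_{\boldsymbol{\sigma}}\Phi(Z_{\boldsymbol{\sigma}})$, and taking expectations, since the law of the reshuffled sample does not depend on $\boldsymbol{\sigma}$ (i.i.d. observations within each sample), we obtain $\mathbb E[\Phi(Z)]\le \mathbb E[\Phi(\widetilde Z)]$. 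The problem is thus reduced to controlling the i.i.d. empirical process $\widetilde Z$ on $N$ observations.

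Taking $\Phi$ to be the identity, the expectation $\mathbb E[\widetilde Z]$ is bounded by the standard argument: symmetrization followed by Massart's finite class lemma applied conditionally on $W_1,\ldots,W_N$, the {\sc VC} major property of $\mathcal H$ (dimension $V$) controlling the relevant growth function by $(1+N)^V$; this delivers $\mathbb E[\widetilde Z]\le 2\mathcal M_{\mathcal H}\sqrt{2V\log(1+N)/N}$, the first term of \eqref{eq:unif_Uproc}. For the fluctuations, I would note that $\widetilde Z$, viewed as a function of the $N$ independent blocks $W_1,\ldots,W_N$, enjoys the bounded differences property with increments of order $\mathcal M_{\mathcal H}/N$, so that McDiarmid's inequality provides a sub-Gaussian bound for the moment generating function of $\widetilde Z-\mathbb E[\widetilde Z]$. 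Plugging $\Phi(u)=e^{\lambda u}$, $\lambda>0$, into the reduction above, combining with the bound on $\mathbb E[\widetilde Z]$, and optimizing over $\lambda$ (Chernoff bound) then yields $Z\le 2\mathcal M_{\mathcal H}\sqrt{2V\log(1+N)/N}+\mathcal M_{\mathcal H}\sqrt{\log(1/\delta)/N}$ with probability at least $1-\delta$.

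The step I expect to be the crux is the reduction of the second paragraph: one must pass the supremum over $\mathcal H$ through the permutation average induced by \eqref{eq:Hoeffding} and, crucially, show that the strong dependence among the $\prod_k n_k!$ copies $Z_{\boldsymbol{\sigma}}$ is harmless — this is exactly what the convexity of $\Phi$ together with the exchangeability of each i.i.d. sample buys, replacing what would otherwise be a hopeless union bound over all permutations by a comparison with a single average of $N$ i.i.d. terms. Beyond that, the only other point requiring attention is checking that the class of block‑restricted kernels $\{W\mapsto H(W):H\in\mathcal H\}$ is still {\sc VC} major with dimension at most $V$, which is immediate since it is a subfamily of restrictions of the functions in $\mathcal H$; the remaining ingredients (symmetrization, Massart's lemma, McDiarmid) are routine.
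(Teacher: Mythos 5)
Your proposal is correct and follows essentially the same route as the paper's proof in Appendix~\ref{app:propUproc}: Hoeffding's first decomposition plus Jensen's inequality for a convex nondecreasing $\psi$ to reduce to the block empirical process on $N$ i.i.d.\ terms, symmetrization and the {\sc VC}/Massart bound for its expectation, and the bounded difference inequality combined with Chernoff's method for the deviation term. The only (immaterial) difference is that you apply McDiarmid directly to $\widetilde Z$ as a function of the $N$ blocks, whereas the paper first passes to the Rademacher average $2\mathcal{R}_N$ inside $\psi$ and applies the bounded difference inequality to $\mathcal{R}_N$; both arguments carry the same slight looseness in the bounded-difference constant that the paper itself glosses over.
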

 Observe that, in the usual asymptotic framework \eqref{asymptotics}, the bound \eqref{eq:unif_Uproc} shows that the learning rate is, as expected, of order $O_{\mathbb{P}}(\sqrt{\log{n}/n})$, where $n$ denotes the size of the pooled sample.
 \begin{remark}{\sc (Uniform boundedness)}
 We point out that condition \eqref{cond:unifbound} is clearly satisfied for the class of kernels considered in the multipartite ranking situation, whatever the class of scoring functions considered. In the case of the clustering example, it is fulfilled as soon as the essential supremum of $D(X,X')\cdot \Phi_{\Pcal}(X,X')$ is uniformly bounded over $\Pcal\in \Pi$, whereas in the metric learning example, it is satisfied when the essential supremum of the r.v. $\phi((D(X,X')-1)\cdot (2\mathbb{I}\{ Y=Y' \}-1))$ is uniformly bounded over $D\in \mathcal{D}$. We underline that this simplifying condition can be easily relaxed and replaced by appropriate tail assumptions for the variables $H(X_1^{(1)},\; \ldots,\; X^{(K)}_{d_K})$, $H\in \mathcal{H}$, combining the arguments of the subsequent analysis with the classical ``truncation trick'' originally introduced in \cite{FukNagaev}.
 \end{remark}
  \begin{remark}\label{rk:comp1}{\sc (Complexity assumptions)}
  Following in the footsteps of \cite{CLV08} which considered $1$-sample $U$-statistics of degree 2, define the Rademacher average 
   \begin{equation}\label{eq:Rad1}
    \mathcal{R}_{N}=\sup_{H\in\mathcal{H}}\frac{1}{N}\left\vert \sum_{l=1}^{N}\epsilon_l
    H\left(X^{(1)}_{(l-1)d_1+1},\;\ldots,\; X^{(1)}_{ld_1},\;\ldots,\; X^{(K)}_{(l-1)d_K+1},\;\ldots,\; X^{(K)}_{ld_K}\right)\right\vert,
    \end{equation}
    where $\epsilon_1,\; \ldots,\; \epsilon_{N}$ are independent Rademacher random variables (random symmetric sign variables), independent from the $X^{(k)}_i$'s. As can be seen by simply examining the proof of Proposition \ref{prop:Uproc} (Appendix~\ref{app:propUproc}), a control of the maximal deviations similar to \eqref{eq:unif_Uproc} relying on this particular complexity measure can be obtained: the first term on the right hand side is then replaced by the expectation of the Rademacher average $\mathbb{E}[\mathcal{R}_{N}]$, up to a constant multiplicative factor. This expected value can be bounded by standard metric entropy techniques and in the case where $\mathcal{H}$ is a {\sc VC} major class of functions of dimension $V$, we have: 
    $$
    \mathbb{E}[\mathcal{R}_N]\leq \mathcal{M}_{\mathcal{H}}\sqrt{\frac{2V\log (N+1)}{N}}.$$
    See Appendix~\ref{app:propUproc} for further details.
  \end{remark}

\section{Empirical Minimization of Incomplete $U$-Statistics}\label{sec:approx}

We have seen in the last section that the empirical minimization of $U$-statistics leads to a learning rate of $O_{\mathbb{P}}(\sqrt{\log{n}/n})$. However, the computational cost required to find the empirical minimizer in practice is generally prohibitive, as the number of terms to be summed up to compute the $U$-statistic \eqref{eq:UstatG} is equal to:
$$
 \binom{n_1}{d_1}\times \cdots \times  \binom{n_K}{d_K}.
$$
In the usual asymptotic framework \eqref{asymptotics}, it is of order $O(n^{d_1+\ldots+d_K})$ as $n\rightarrow+\infty$.
It is the major purpose of this section to show that, in the minimization problem, the $U$-statistic $U_{\mathbf{n}}(H_g)$ can be replaced  by a Monte-Carlo estimation, referred to as an \textit{incomplete $U$-statistic}, whose computation requires to average much less terms, without damaging the learning rate (Section~\ref{subsec:approx}). We further extend these results to model selection (Section~\ref{subsec:modelselect}), fast rates situations (Section ~\ref{subsec:fastrates}) and alternative sampling strategies (Section~\ref{subsec:sampling}).

\subsection{Uniform Approximation of Generalized $U$-Statistics}\label{subsec:approx}
As a remedy to the computational issue mentioned above, the concept of \textit{incomplete generalized $U$%
-statistic} has been introduced in the seminal contribution of \cite{Blom76}. The calculation of such a functional involves a summation over low cardinality subsets of the $\binom{n_k}{d_k}$ $d_k$-tuples of indices, $1\leq k\leq K$, solely. In the simplest formulation,
the subsets of indices are obtained by \textit{sampling independently with replacement}, leading to
the following definition.

\begin{definition}{\sc (Incomplete Generalized $U$-statistic)}
Let $B\geq1$. The incomplete version of the $U$-statistic \eqref{eq:UstatG} based on $B$ terms is defined by:
\begin{equation}\label{eq:UstatI}
\widetilde{U}_{B}(H)=\frac{1}{B}\sum_{I=(I_1,\;\ldots,\, I_K)\in\mathcal{D}_B} H(\mathbf{X}^{(1)}_{I_1},\;\ldots,\; \mathbf{X}^{(K)}_{I_K})=\frac{1}{B}\sum_{I\in \mathcal{D}_B}H(\mathbf{X}_I),
\end{equation}
where $\mathcal{D}_B$ is a set of cardinality $B$ built by sampling with replacement in the set 
\begin{equation}
\Lambda=\{((i^{(1)}_1,\;\ldots,\; i^{(1)}_{d_1}),\; \ldots,\; (i^{(K)}_1,\;\ldots,\; i^{(K)}_{d_K})):\; 1\leq i^{(k)}_1<\ldots<i^{(k)}_{d_k}\leq n_k,\; 1\leq k \leq K\},
\end{equation}
and $\mathbf{X}_I=(\mathbf{X}^{(1)}_{I_1},\; \ldots,\; \mathbf{X}^{(K)}_{I_K})$ for all $I=(I_1,\; \ldots,\; I_K)\in \Lambda$.
\end{definition}%

\begin{figure}[t]
\centering
\includegraphics[width=\textwidth]{./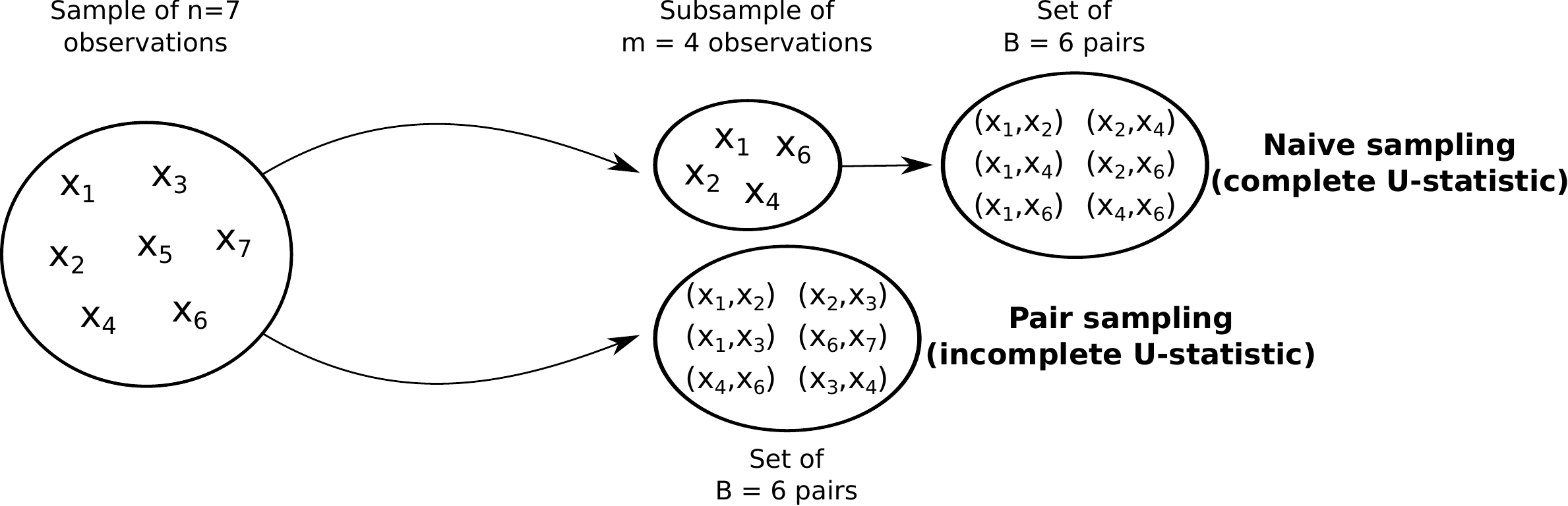}
\caption{Illustration of the difference between an incomplete $U$-statistic and a complete $U$-statistic based on a subsample. For simplicity, we focus on the case $K=1$ and $d_1=2$. In this simplistic example, a sample of $n=7$ observations is considered. To construct a complete $U$-statistic of reduced complexity, we first sample a set of $m=4$ observations and then form all possible pairs from this subsample, \emph{i.e.} $B=m(m-1)/2=6$ pairs in total. In contrast, an incomplete $U$-statistic with the same number of terms is obtained by sampling $B$ pairs directly from the set $\Lambda$ of all possible pairs based on the original statistical population.}
\label{fig:ustat_illustr}
\end{figure}

We stress that the distribution of a complete $U$-statistic built from subsamples of reduced sizes $n'_k$ drawn uniformly at random is quite different from that of an incomplete $U$-statistic based on $B = \prod_{k=1}^K \binom{n'_k}{d_k}$ terms sampled with replacement in $\Lambda$, although they involve the summation of the same number of terms, as depicted by Fig. \ref{fig:ustat_illustr}.

In practice, $B$ should be chosen much smaller than the
cardinality of $\Lambda $, namely $\#\Lambda =\prod_{k=1}^{K}\binom{n_{k}}{%
d_{k}}$, in order to overcome the computational issue previously mentioned.
We emphasize the fact that the cost related to the computation of the value taken by
the kernel $H$ at a given point $(x_{I_{1}}^{(1)},\;\ldots
,\;x_{I_{K}}^{(K)})$ depending on the form of $H$ is not considered here: the focus is on the number of terms involved in the summation solely. As an
estimator of $\mu (H)$, the statistic \eqref{eq:UstatI} is still unbiased, \textit{i.e.} $\mathbb{E}[\widetilde{U}_{B}(H) ]=\mu(H)$,
but its variance is naturally larger than that of the complete $U$-statistic $U_n(H)$.
Precisely, writing the variance of the r.v. $\widetilde{U}_B(H)$ as the expectation of its conditional variance given $(\mathbf{X}_I)_{I\in \Lambda}$ plus the variance of its conditional expectation given $(\mathbf{X}_I)_{I\in \Lambda}$, we obtain
\begin{equation}\label{eq:var_incomp}
\mathrm{Var}(\widetilde{U}_{B}(H))=\left(1-\frac{1}{B}\right)\mathrm{Var}(U_{\mathbf{n}%
}(H))+
\frac{1}{B}\mathrm{Var}(H(X^{(1)}_1,\; \ldots,\; X^{(K)}_{d_K})).
\end{equation}

One may easily check that $\mathrm{Var}(\widetilde{U}_{B}(H))\geq \mathrm{Var}(U_{\mathbf{n}%
}(H))$, and the difference vanishes as $B$ increases. Refer to \cite{Lee90} for further details (see p. 193 therein). Incidentally, we underline that
the empirical variance of \eqref{eq:UstatI} is not easy to compute either
since it involves summing approximately $\#\Lambda $ terms and bootstrap
techniques should be used for this purpose, as proposed in \cite{BerTres06}.
The asymptotic properties of incomplete $U$-statistics have been
investigated in several articles, see \cite{Janson84, BrownKildea78,
Enqvist78}. The angle embraced in the present paper is of very different nature:
the key idea we promote here is to use incomplete versions of collections of 
$U$-statistics in learning problems such as that described in Section~\ref{subsec:examples}. The result stated below shows that this approach solves the numerical problem, while not damaging the learning rates under appropriate complexity assumptions on the collection $\mathcal{H}$ of (symmetric) kernels $H$ considered, the complexity being described here in terms of {\sc VC} dimension for simplicity.
In  particular, it reveals that concentration results established for $U$%
-processes (\textit{i.e.} collections of $U$-statistics) such as Proposition \ref{prop:Uproc} may extend to their
incomplete versions, as shown by the following theorem.

\begin{theorem}\label{thm:main}{\sc (Maximal deviation)} Let $\mathcal{H}$ be a collection of bounded symmetric kernels on $\prod_{k=1}^K\mathcal{X}_{k}^{d_k}$ that fulfills the assumptions of Proposition \ref{prop:Uproc}.
 Then, the following assertions hold true.
\begin{itemize}
\item[(i)]For all $\delta\in (0,1)$, with probability at least $1-\delta$, we have: $\forall \mathbf{n}=(n_1,\; \ldots,\; n_K)\in\mathbb{N}^{*K}$, $\forall B\geq 1$,
\begin{equation*}
 \sup_{H\in \mathcal{H}} \left\vert \widetilde{U}_B(H)-U_{
\mathbf{n}}(H) \right\vert\leq \mathcal{M}_{\mathcal{H}}\times
\sqrt{2\frac{V\log(1+\# \Lambda)+\log (2/\delta)}{B}}
\end{equation*}
\item[(ii)] For all $\delta\in (0,1)$, with probability at least $1-\delta$, we have: $\forall \mathbf{n} \in\mathbb{N}^{*K}$, $\forall B\geq 1$,
\begin{multline*}
\frac{1}{\mathcal{M}_{\mathcal{H}}}\sup_{H\in \mathcal{H}}\left\vert \widetilde{U}_{B}(H)-\mu(H) \right\vert\leq 
2\sqrt{\frac{2 V\log(1+N)}{N}}+\sqrt{\frac{\log(2/\delta)}{N}}\\+\sqrt{2\frac{V\log(1+\#\Lambda)+ \log(4/\delta)}{B}},
\end{multline*}
where $N=\min\{ \lfloor n_1/d_1\rfloor,\ldots,
\lfloor n_K/d_K \rfloor\}$.
\end{itemize}
\end{theorem}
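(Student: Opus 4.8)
The plan is to prove the approximation bound (i) first, and then obtain (ii) by a plain triangle inequality combined with Proposition~\ref{prop:Uproc}. For (i), fix $\mathbf{n}$ and $B$ and condition on the pooled sample, equivalently on the family $(\mathbf{X}_I)_{I\in\Lambda}$ of all $\#\Lambda=\prod_{k=1}^{K}\binom{n_k}{d_k}$ sub-tuples. Writing $\mathcal{D}_B=\{\xi_1,\ldots,\xi_B\}$ with $\xi_1,\ldots,\xi_B$ i.i.d.\ uniform on $\Lambda$, we have $\widetilde{U}_B(H)=\frac1B\sum_{j=1}^{B}H(\mathbf{X}_{\xi_j})$, whose conditional expectation equals $\frac1{\#\Lambda}\sum_{I\in\Lambda}H(\mathbf{X}_I)=U_{\mathbf{n}}(H)$ by \eqref{eq:UstatG}. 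Thus, conditionally, $\widetilde{U}_B(H)-U_{\mathbf{n}}(H)$ is a centered average of $B$ i.i.d.\ bounded random variables, and the remaining task is to control it uniformly over $H\in\mathcal{H}$.

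Since adding the constant $\mathcal{M}_{\mathcal{H}}$ to $H$ leaves $\widetilde{U}_B(H)-U_{\mathbf{n}}(H)$ unchanged and makes the kernel satisfy $0\leq H+\mathcal{M}_{\mathcal{H}}\leq 2\mathcal{M}_{\mathcal{H}}$, the layer-cake identity $H(x)+\mathcal{M}_{\mathcal{H}}=\int_0^{2\mathcal{M}_{\mathcal{H}}}\mathbb{I}\{H(x)>t-\mathcal{M}_{\mathcal{H}}\}\,dt$ together with a pointwise bound on the integrand yields
\[
\sup_{H\in\mathcal{H}}\bigl|\widetilde{U}_B(H)-U_{\mathbf{n}}(H)\bigr|\ \leq\ 2\mathcal{M}_{\mathcal{H}}\sup_{H\in\mathcal{H},\,t\in\mathbb{R}}\Bigl|\tfrac1B\textstyle\sum_{j=1}^{B}\mathbb{I}\{H(\mathbf{X}_{\xi_j})>t\}-\tfrac1{\#\Lambda}\sum_{I\in\Lambda}\mathbb{I}\{H(\mathbf{X}_I)>t\}\Bigr|.
\]
By the {\sc VC} major assumption, the class of sets $\{\{x:H(x)>t\}:H\in\mathcal{H},\,t\in\mathbb{R}\}$ has Vapnik--Chervonenkis dimension $V$, hence by the Sauer--Shelah lemma it induces at most $\sum_{i=0}^{V}\binom{\#\Lambda}{i}\leq(1+\#\Lambda)^{V}$ distinct subsets of the finite set $\{\mathbf{X}_I:I\in\Lambda\}$; consequently the last supremum is a maximum over at most $(1+\#\Lambda)^{V}$ terms, each being the deviation of an average of $B$ i.i.d.\ $[0,1]$-valued variables from its conditional mean. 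Applying Hoeffding's inequality to each term, a union bound over the $(1+\#\Lambda)^{V}$ sets, and the choice $u=\sqrt{(V\log(1+\#\Lambda)+\log(2/\delta))/(2B)}$ making the total failure probability $2(1+\#\Lambda)^{V}e^{-2Bu^{2}}$ equal to $\delta$, we obtain a conditional bound which, as neither $u$ nor the probability depends on the sample, holds unconditionally; multiplying through by $2\mathcal{M}_{\mathcal{H}}$ gives exactly assertion (i) (the quantifiers over $\mathbf{n}$ and $B$ being read as ``for each fixed $\mathbf{n}$ and $B$'', since the right-hand side is deterministic).

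For (ii), write $\sup_{H\in\mathcal{H}}|\widetilde{U}_B(H)-\mu(H)|\leq\sup_{H\in\mathcal{H}}|\widetilde{U}_B(H)-U_{\mathbf{n}}(H)|+\sup_{H\in\mathcal{H}}|U_{\mathbf{n}}(H)-\mu(H)|$, bound the first summand by part (i) at confidence level $\delta/2$ (which turns $\log(2/\delta)$ into $\log(4/\delta)$) and the second by Proposition~\ref{prop:Uproc} at confidence level $\delta/2$ (which turns $\log(1/\delta)$ into $\log(2/\delta)$), and take a union bound over the two events; dividing by $\mathcal{M}_{\mathcal{H}}$ then gives the claimed inequality.

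The only genuinely delicate point is the middle step: passing from the real-valued kernel class to a {\sc VC}-type finite-cardinality bound via the layer-cake decomposition, while making the numerical constants come out exactly as stated. In particular, the shift by $\mathcal{M}_{\mathcal{H}}$ that renders the kernels nonnegative is what keeps this a one-sided argument (superlevel sets only), thereby avoiding a spurious factor of $2$ inside the logarithm. Everything else --- the conditioning on the sample, Hoeffding's inequality, the union bounds, and the triangle-inequality step for (ii) --- is routine.
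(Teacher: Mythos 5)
Your proof is correct and follows essentially the same route as the paper's: condition on the observations, note that $\widetilde{U}_B(H)$ is then a centered average of $B$ conditionally i.i.d.\ bounded terms around $U_{\mathbf{n}}(H)$, reduce the supremum to a maximum over at most $(1+\#\Lambda)^V$ events via Sauer's lemma, apply Hoeffding's inequality with a union bound, and deduce (ii) from (i) and Proposition~\ref{prop:Uproc} at level $\delta/2$ each via the triangle inequality. The only difference is your layer-cake reduction to superlevel sets before invoking Sauer's lemma, which is in fact the more careful way to exploit the {\sc VC} major hypothesis (the paper applies Sauer directly to the vectors of real kernel values, a step that is only literally justified after such a reduction to indicators); the constants come out identical.
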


\begin{remark}\label{rk:comp2} {\sc (Complexity assumptions continued)}
We point out that a bound of the same order as that stated above can be obtained under standard metric entropy conditions by means of classical chaining arguments, or under the assumption that the Rademacher average defined by
\begin{equation}\label{eq:Rad2}
\widetilde{\mathcal{R}}_B=\sup_{H\in \mathcal{H}}\frac{1}{B}\left\vert \sum_{b=1}^B\epsilon_b\left\{ \sum_{I\in \Lambda}\zeta_b(I)H(\mathbf{X}_I)\right\} \right\vert
\end{equation}
has an expectation of the order $O(1/\sqrt{B})$. The quantity $\zeta_b(I)$ indicates whether the subset of indexes $I$ has been picked at the $b$-th draw ($\zeta_b(I)=+1$) or not ($\zeta_b(I)=0$), see the calculation at the end of Appendix~\ref{subsec:proofcor}. Equipped with this notation, notice that the $\zeta_b$'s are i.i.d. multinomial random variables such that $\sum_{I\in \Lambda}\zeta_b(I)=+1$. This assumption can be easily shown to be fulfilled in the case where $\mathcal{H}$ is a {\sc VC } major class of finite {\sc VC} dimension (see the proof of Theorem \ref{thm:main} in Appendix~\ref{app:thmmain}). Notice however that although the variables $ \sum_{I\in \Lambda}\zeta_b(I)H(\mathbf{X}_I)$, $1\leq b\leq B$, are conditionally i.i.d. given $(\mathbf{X}_I)_{I\in \Lambda}$, they are not independent and the quantity \eqref{eq:Rad2} cannot be related to complexity measures of the type \eqref{eq:Rad1} mentioned in Remark \ref{rk:comp1}. 
\end{remark}
\begin{remark}
We underline that, whereas $\sup_{H\in \mathcal{H}}\vert U_{\mathbf{n}}(H)-\mu(H) \vert$ can be proved to be of order $O_{\mathbb{P}}(1/n)$ under adequate complexity assumptions in the specific situation where $\{U_{\mathbf{n}}(H):\; H\in \mathcal{H} \}$ is a collection of degenerate $U$-statistics (see Section \ref{subsec:fastrates}), the bound $(i)$ in Theorem \ref{thm:main} cannot be improved in the degenerate case. Observe indeed that, conditioned upon the observations $X^{(k)}_l$, the deviations of the approximation \eqref{eq:UstatI} from its mean are of order $O_{\mathbb{P}}(1/\sqrt{B})$, since it is a basic average of $B$ i.i.d. terms.
\end{remark}

From the theorem stated above, one may straightforwardly deduce a bound on the excess risk of kernels $\widehat{H}_{B}$ minimizing the incomplete version of the empirical risk based on $B$ terms, \textit{i.e.} such that
\begin{equation}\label{eq:inc_emp_min}
\widetilde{U}_{B}\left(\widehat{H}_{B}\right)= \min_{H \in \mathcal{H}}\widetilde{U}_{B}(H).
\end{equation}

\begin{corollary}\label{cor} Let $\mathcal{H}$ be a collection of symmetric kernels on $\prod_{k=1}^K\X_{k}^{d_k}$ that satisfies the conditions stipulated in Proposition \ref{prop:Uproc}. Let $\delta>0$. For any minimizer $\widehat{H}_{B}$ of the statistical estimate of the risk \eqref{eq:UstatI}, the following assertions hold true
\begin{itemize}
\item[(i)] We have with probability at least $1-\delta$: $\forall \mathbf{n}\in \mathbb{N}^{*K}$, $\forall B\geq 1$,
\begin{multline*}
\mu (\widehat{H}_{B}) - \inf_{H\in \mathcal{H}}\mu(H) \leq 2\mathcal{M}_{\mathcal{H}}\times\\ \left\{
2\sqrt{\frac{2 V\log(1+N)}{N}}+\sqrt{\frac{\log(2/\delta)}{N}}+\sqrt{2\frac{V\log(1+\#\Lambda)+ \log(4/\delta)}{B}}\right\}.
\end{multline*}
\item[(ii)] We have: $\forall \mathbf{n}\in \mathbb{N}^{*K}$, $\forall B\geq 1$,
\begin{multline*}
\mathbb{E}\left[ \sup_{H\in \mathcal{H}}\left\vert \widetilde{U}_B(H)-\mu(H)  \right\vert  \right]\leq \mathcal{M}_{\mathcal{H}}\left\{ 2 \sqrt{\frac{2V\log (1+N)}{N}}+\sqrt{\frac{2(\log 2 + V\log (1+\#\Lambda))}{B}}\right\}.
\end{multline*}
\end{itemize}
\end{corollary}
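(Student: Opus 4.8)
The plan is to deduce both assertions from the maximal deviation bounds of Theorem~\ref{thm:main} combined with the comparison argument at the heart of ERM analysis.

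For \textbf{(i)}, I would invoke the classical observation that, since $\widehat{H}_B$ minimizes $\widetilde{U}_B$ over $\mathcal{H}$, for every $H\in\mathcal{H}$
\[
\mu(\widehat{H}_B)-\mu(H)=\big(\mu(\widehat{H}_B)-\widetilde{U}_B(\widehat{H}_B)\big)+\big(\widetilde{U}_B(\widehat{H}_B)-\widetilde{U}_B(H)\big)+\big(\widetilde{U}_B(H)-\mu(H)\big)\le 2\sup_{H'\in\mathcal{H}}\big\vert\widetilde{U}_B(H')-\mu(H')\big\vert ,
\]
the middle term being nonpositive. Since the right-hand side does not depend on $H$, taking the supremum over $H\in\mathcal{H}$ turns the left-hand side into $\mu(\widehat{H}_B)-\inf_{H\in\mathcal{H}}\mu(H)$; plugging in assertion~(ii) of Theorem~\ref{thm:main}, which holds on an event of probability at least $1-\delta$ simultaneously for all $\mathbf{n}$ and $B$, yields the stated inequality, the prefactor $2\mathcal{M}_{\mathcal{H}}$ being exactly the factor $2$ of the comparison bound times the $\mathcal{M}_{\mathcal{H}}$ of Theorem~\ref{thm:main}(ii).

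For \textbf{(ii)}, I would first split, by the triangle inequality,
\[
\sup_{H\in\mathcal{H}}\big\vert\widetilde{U}_B(H)-\mu(H)\big\vert\le \sup_{H\in\mathcal{H}}\big\vert\widetilde{U}_B(H)-U_{\mathbf{n}}(H)\big\vert+\sup_{H\in\mathcal{H}}\big\vert U_{\mathbf{n}}(H)-\mu(H)\big\vert ,
\]
and bound the expectation of each term separately. For the second (complete $U$-process) term, I would use the first Hoeffding decomposition~\eqref{eq:Hoeffding}: by convexity of the supremum and Jensen's inequality the expected deviation of $U_{\mathbf{n}}$ is dominated by that of the i.i.d.\ block average, which a standard symmetrization step bounds by $2\,\mathbb{E}[\mathcal{R}_N]$; the VC-major entropy estimate recalled in Remark~\ref{rk:comp1}, namely $\mathbb{E}[\mathcal{R}_N]\le\mathcal{M}_{\mathcal{H}}\sqrt{2V\log(1+N)/N}$, then produces the first term $2\mathcal{M}_{\mathcal{H}}\sqrt{2V\log(1+N)/N}$. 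For the first (incomplete-versus-complete) term, I would condition on the full sample $(\mathbf{X}_I)_{I\in\Lambda}$: given this $\sigma$-field, $\widetilde{U}_B(H)$ is an average of $B$ i.i.d.\ random variables bounded in absolute value by $\mathcal{M}_{\mathcal{H}}$ with conditional mean $U_{\mathbf{n}}(H)$, so reusing the very computation in the proof of Theorem~\ref{thm:main}(i) — a Hoeffding bound for each fixed $H$, a union bound over the at most $(1+\#\Lambda)^V$ distinct behaviours of $\mathcal{H}$ on the finite index set $\Lambda$ (Sauer's lemma applied to the VC-major class), and integration of the resulting sub-Gaussian tail — controls $\mathbb{E}\big[\sup_{H\in\mathcal{H}}\vert\widetilde{U}_B(H)-U_{\mathbf{n}}(H)\vert\,\big\vert\,(\mathbf{X}_I)_{I\in\Lambda}\big]$ by $\mathcal{M}_{\mathcal{H}}\sqrt{2(\log 2+V\log(1+\#\Lambda))/B}$; taking expectations and adding the two bounds gives the claim.

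The comparison inequality and the symmetrization/Jensen manipulations are routine; the one point deserving care is the passage from the high-probability bound of Theorem~\ref{thm:main}(i) to a bound in expectation for the incomplete-versus-complete term, i.e.\ integrating the tail $\mathbb{P}(\,\cdot\,>t)\le 2(1+\#\Lambda)^V\exp\!\big(-Bt^2/(2\mathcal{M}_{\mathcal{H}}^2)\big)$ so as to land on exactly $\mathcal{M}_{\mathcal{H}}\sqrt{2(\log 2+V\log(1+\#\Lambda))/B}$ (conveniently done by first bounding the second moment), together with checking that the conditioning argument loses no independence — precisely the subtlety flagged in Remark~\ref{rk:comp2} about the variables $\sum_{I\in\Lambda}\zeta_b(I)H(\mathbf{X}_I)$ being conditionally but not unconditionally independent.
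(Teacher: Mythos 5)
Your proposal follows essentially the same route as the paper: assertion (i) is exactly the standard ERM comparison bound combined with Theorem~\ref{thm:main}(ii), and for assertion (ii) you use the same triangle-inequality split, the same Hoeffding-decomposition/symmetrization/Rademacher bound for the complete $U$-process term, and the same conditioning-plus-Sauer argument for the incomplete-versus-complete term. The one place you diverge is the final step for that last term: you propose to integrate the sub-Gaussian tail $\mathbb{P}(\,\cdot>t)\le 2(1+\#\Lambda)^V\exp(-Bt^2/(2\mathcal{M}_{\mathcal{H}}^2))$ (or bound the second moment), but neither route lands exactly on $\mathcal{M}_{\mathcal{H}}\sqrt{2(\log 2+V\log(1+\#\Lambda))/B}$ --- tail integration leaves an additive remainder and the second-moment route gives $\sqrt{2(\log(2M)+1)}$ in place of $\sqrt{2\log(2M)}$. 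The paper instead applies Hoeffding's lemma to get the conditional moment generating function bound $\mathbb{E}[\exp(s\sum_k\mathcal{Z}_k(H))\mid(\mathbf{X}_I)]\le e^{Bs^2\mathcal{M}_{\mathcal{H}}^2/2}$ and then the standard maximal inequality for a finite family of sub-Gaussian variables, $\mathbb{E}[\max_{i\le M}|Z_i|]\le\sigma\sqrt{2\log(2M)}$ with $\sigma=\mathcal{M}_{\mathcal{H}}\sqrt{B}$, which yields the stated constant exactly; up to this (inessential, constant-level) substitution your argument is complete and correct.
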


The first assertion of Theorem \ref{thm:main} provides a control of the deviations between the $U$-statistic \eqref{eq:UstatG} and its incomplete counterpart \eqref{eq:UstatI} uniformly over the class $\mathcal{H}$. As the number of terms $B$ increases, this deviation decreases at a rate of $O(1/\sqrt{B})$. The second assertion of Theorem \ref{thm:main} gives a maximal deviation result with respect to $\mu(H)$.
Observe in particular that, with the asymptotic settings previously specified, $N = O(n)$ and $\log(\#\Lambda) = O(\log n)$ as $n\rightarrow +\infty$. The bounds stated above thus show that, for a number $B=B_{n}$ of terms tending to infinity at a rate $O(n)$ as $%
n\rightarrow +\infty $, the maximal deviation $\sup_{H\in 
\mathcal{H}}|\widetilde{U}_{B}(H)-\mu (H)|$ is asymptotically of the
order $O_{\mathbb{P}}((\log(n)/n)^{1/2})$, just like $\sup_{H\in \mathcal{H}}|U_{%
\mathbf{n}}(H)-\mu (H)|$, see bound \eqref{eq:unif_Uproc} in Proposition~\ref{prop:Uproc}.
In short, when considering an incomplete $U$-statistic \eqref{eq:UstatI} with $B=O(n)$ terms only, the learning rate for the corresponding minimizer is of the same order as that of the minimizer of the complete risk \eqref{eq:UstatG}, whose computation requires to average $\#\Lambda = O(n^{d_1+\ldots+d_K})$ terms.
Minimizing such incomplete $U$-statistics thus yields a
significant gain in terms of computational cost while fully preserving the learning rate.
In contrast, as implied by Proposition~\ref{prop:Uproc}, the minimization of a complete $U$-statistic involving $O(n)$ terms, obtained by drawing subsamples of sizes $n'_k = O(n^{1/(d_1+\ldots+d_K)})$ uniformly at random, leads to a rate of convergence of $O(\sqrt{\log(n)/n^{1/(d_1+\ldots+d_K)} })$, which is much slower except in the trivial case where $K=1$ and $d_1=1$.
 These striking results are summarized in Table \ref{table:rate_cardinal}.

The important practical consequence of the above is that when $n$ is too large for the complete risk \eqref{eq:UstatG} to be used, one should instead use the incomplete risk \eqref{eq:UstatI} (setting the number of terms $B$ as large as the computational budget allows).

\begin{table}[t]
\centering
\begin{tabularx}{\textwidth}{lll}
\toprule
Empirical risk criterion & Nb of terms & Rate bound \\
\midrule
\vspace*{0.4cm}
Complete $U$-statistic  & $O(n^{d_1+\ldots+d_K})$      & $O_{\mathbb{P}}(\sqrt{\log(n)/n})$     \\ 
\bigskip
Complete $U$-statistic based on subsamples & $O(n)$        & $O_{\mathbb{P}}\left(\sqrt{\log(n)/n^{\frac{1}{d_1+\ldots+d_K}}}\right)$     \\
\bigskip
\textbf{Incomplete $\boldsymbol{U}$-statistic (our result)}  & $\boldsymbol{O(n)}$        & $\boldsymbol{O_{\mathbb{P}}(\sqrt{\log(n)/n})}$    \\ 
\bottomrule 
\end{tabularx}
\caption{Rate bound for the empirical minimizer of several empirical risk criteria \textit{versus} the number of terms involved in the computation of the criterion. For a computational budget of $O(n)$ terms, the rate bound for the incomplete $U$-statistic criterion is of the same order as that of the complete $U$-statistic, which is a huge improvement over a complete $U$-statistic based on a subsample.}
\label{table:rate_cardinal}
\end{table}

\subsection{Model Selection Based on Incomplete $U$-Statistics}
\label{subsec:modelselect}

Automatic selection of the model complexity is a crucial issue in machine learning: it includes the number of clusters in cluster analysis \citep[see][]{CLEM14} or the choice of the number of possible values taken by a piecewise constant scoring function in multipartite ranking for instance  \citep[\cf][]{CV09ieee}. In the present situation, this boils down to choosing the adequate level of complexity of the class of kernels $\mathcal{H}$, measured through its (supposedly finite) {\sc VC} dimension for simplicity, in order to minimize the (theoretical) risk of the empirical minimizer. It is the purpose of this subsection to show that the incomplete $U$-statistic \eqref{eq:UstatI} can be used to define a penalization method to select a prediction rule with nearly minimal risk, avoiding procedures based on data splitting/resampling and extending the celebrated \textit{structural risk minimization} principle, see \cite{Vapnik99}. Let $\mathcal{H}$ be the collection of all symmetric kernels on $\prod_{k=1}^K\X_{k}^{d_k}$ and set $\mu^*=\inf_{H\in \mathcal{H}}\mu(H)$.
Let $\mathcal{H}_{1},\mathcal{H}_{2},\; \ldots$ be  a sequence of uniformly bounded major subclasses of $\mathcal{H}$, of increasing complexity (\textit{\sc VC} dimension). For any $m\geq 1$, let $V_m$ denote the {\sc VC} dimension of the class $\mathcal{H}_m$ and set $\mathcal{M}_{\mathcal{H}_m}=\sup_{(H,x)\in \mathcal{H}_m\times \X}\vert H(x)\vert<+\infty$. We suppose that there exists $\mathcal{M}<+\infty$ such that $\sup_{m\geq 1}\mathcal{M}_{\mathcal{H}_m}\leq \mathcal{M}$. Given $1\leq B\leq \# \Lambda$ and $m\geq 1$, the complexity penalized empirical risk of a solution $\widetilde{U}_{B,m}$ of the ERM problem \eqref{eq:inc_emp_min} with $\mathcal{H}=\mathcal{H}_m$ is
\begin{equation}
\widetilde{U}_B(\widehat{H}_{B,m})+\text{pen}(B,m),
\end{equation}
where the quantity $\text{pen}(B,m)$ is a \textit{distribution free} penalty given by:
\begin{eqnarray}\label{eq:penalty}
\text{pen}(B,m)&=&2\mathcal{M}_{\mathcal{H}_m}\left\{  \sqrt{\frac{2V_m\log (1+N)}{N}}+\sqrt{\frac{2(\log 2 + V_m\log (1+\#\Lambda))}{B}}\right\}\nonumber\\
&+& 2\mathcal{M}\sqrt{\frac{(B+n)\log m}{B^2}}.
\end{eqnarray}
As shown in Assertion $(ii)$ of Corollary \ref{cor}, the quantity above is an upper bound for the expected maximal deviation $\mathbb{E}[\sup_{H\in \mathcal{H}_m}\vert \widetilde{U}_B(H) -\mu(H) \vert ]$ and is thus a natural penalty candidate to compensate the overfitting within class $\mathcal{H}_m$. We thus propose to select
\begin{equation}
\widehat{m}_B=\argmin_{m\geq 1}\left\{ \widetilde{U}_B(\widehat{H}_{B,m})+\text{pen}(B,m)  \right\}.
\end{equation}
As revealed by the theorem below, choosing $B=O(n)$, the prediction rule $\widehat{H}_{\widehat{m}_B}$ based on a penalized criterion involving the summation of $O(n)$ terms solely, achieves a nearly optimal trade-off between the bias and the distribution free upper bound \eqref{eq:penalty} on the variance term.

\begin{theorem}\label{thm:selec} {\sc (Oracle inequality)}
Suppose that Theorem \ref{thm:main}'s assumptions are fulfilled for all $m\geq 1$ and that $\sup_{m\geq 1}\mathcal{M}_{\mathcal{H}_m}\leq \mathcal{M}<+\infty$. Then, we have: $\forall \mathbf{n}\in \mathbb{N}^{*K}$, $\forall B\in\{1,\; \ldots,\; \#\Lambda  \}$, 
\begin{equation*}
\mu (\widehat{H}_{B,\widehat{m}}) - \mu^{*} \leq \inf_{k \geq 1} \left\{\inf_{H\in \mathcal{H}_m}\mu(H)-\mu^{*}+ \textnormal{pen}(B,m)\right\}+\mathcal{M}\frac{\sqrt{2\pi(B+n)}}{B}.
\end{equation*}
\end{theorem}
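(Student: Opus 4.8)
The plan is to run the classical structural risk minimization analysis (in the spirit of \cite{Vapnik99}), the only genuinely new ingredient being a concentration estimate for the incomplete $U$-process that accounts for the $\sqrt{(B+n)/B^2}$ scale in the penalty; the inequality will be established in expectation, i.e. for $\mathbb{E}[\mu(\widehat H_{B,\widehat m})]$.

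First I would set up the decomposition. Fix $m\ge 1$ and, for $\varepsilon>0$, pick a \emph{deterministic} $H_m^{*}\in\mathcal{H}_m$ with $\mu(H_m^{*})\le \inf_{H\in\mathcal{H}_m}\mu(H)+\varepsilon$, and write $Z_m:=\sup_{H\in\mathcal{H}_m}|\widetilde U_B(H)-\mu(H)|$. Using successively that $\widehat m$ minimizes $\widetilde U_B(\widehat H_{B,\cdot})+\textnormal{pen}(B,\cdot)$ and that $\widehat H_{B,m}$ minimizes $\widetilde U_B$ over $\mathcal{H}_m$, one obtains, on every realization,
\begin{equation*}
\mu(\widehat H_{B,\widehat m})-\mu^{*}\le\Big(Z_{\widehat m}-\textnormal{pen}(B,\widehat m)\Big)+\Big(\widetilde U_B(H_m^{*})-\mu(H_m^{*})\Big)+\Big(\inf_{H\in\mathcal{H}_m}\mu(H)-\mu^{*}+\varepsilon\Big)+\textnormal{pen}(B,m).
\end{equation*}
Taking expectations and letting $\varepsilon\downarrow 0$, the middle term drops out because $\widetilde U_B$ is an unbiased estimator of $\mu$ at the \emph{fixed} kernel $H_m^{*}$, so it remains only to prove $\mathbb{E}[Z_{\widehat m}-\textnormal{pen}(B,\widehat m)]\le\mathcal M\sqrt{2\pi(B+n)}/B$ and then to take the infimum over $m$.

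Next, since $\widehat m$ is data-dependent, I would bound $Z_{\widehat m}-\textnormal{pen}(B,\widehat m)\le\sup_{m\ge1}(Z_m-\textnormal{pen}(B,m))_+$, whence
\begin{equation*}
\mathbb{E}\Big[Z_{\widehat m}-\textnormal{pen}(B,\widehat m)\Big]\le\sum_{m\ge1}\mathbb{E}\Big[\big(Z_m-\textnormal{pen}(B,m)\big)_+\Big]=\sum_{m\ge1}\int_0^{\infty}\mathbb{P}\big(Z_m-\textnormal{pen}(B,m)>t\big)\,dt.
\end{equation*}
By Corollary~\ref{cor}$(ii)$ (and $\mathcal M_{\mathcal{H}_m}\le\mathcal M$), $\mathbb{E}[Z_m]$ does not exceed the first bracket of $\textnormal{pen}(B,m)$; setting $\sigma:=\mathcal M\sqrt{B+n}/B$, the remaining summand of $\textnormal{pen}(B,m)$ is precisely $2\sigma\sqrt{\log m}$, so the key estimate needed is the sub-Gaussian tail
\begin{equation}\label{eq:plan-conc}
\mathbb{P}\big(Z_m>\mathbb{E}[Z_m]+t\big)\le\exp\!\big(-t^2/(2\sigma^2)\big),\qquad t>0,\ m\ge1.
\end{equation}
Granting \eqref{eq:plan-conc}, the inequality $(a+b)^2\ge a^2+b^2$ for $a,b\ge0$ gives $\mathbb{P}(Z_m-\textnormal{pen}(B,m)>t)\le m^{-2}e^{-t^2/(2\sigma^2)}$; then, using $\sum_{m\ge1}m^{-2}\le 2$ and $\int_0^{\infty}e^{-t^2/(2\sigma^2)}\,dt=\sigma\sqrt{\pi/2}=\sigma\sqrt{2\pi}/2$, one gets exactly $\sum_{m\ge1}\mathbb{E}[(Z_m-\textnormal{pen}(B,m))_+]\le\sigma\sqrt{2\pi}=\mathcal M\sqrt{2\pi(B+n)}/B$, which, together with the first step, gives the claimed bound once \eqref{eq:plan-conc} is in hand.

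The hard part is \eqref{eq:plan-conc}. I would like to apply the bounded-differences (McDiarmid) inequality to $Z_m$ seen as a function of the $n$ observations and of the $B$ sampled index-tuples $\mathcal{D}_B=(I_1,\dots,I_B)$; the obstacle is that the dependence is coupled, since in $\widetilde U_B(H)=B^{-1}\sum_{b}H(\mathbf{X}_{I_b})$ a single observation $X^{(k)}_j$ may in the worst case appear in all $B$ summands, so its naive Lipschitz coefficient is $2\mathcal M$ instead of $2\mathcal M/B$, and a crude McDiarmid would only yield the useless proxy of order $\mathcal M^2 n$. The way I would get around this is to condition on $\mathcal{D}_B$: with $\mathcal{D}_B$ fixed, changing one $I_b$ moves $\widetilde U_B(H)$, hence $Z_m$, by at most $2\mathcal M/B$, contributing a variance budget of order $\mathcal M^2/B$; while, varying the data with $\mathcal{D}_B$ held fixed, the change of $Z_m$ caused by $X^{(k)}_j$ is at most $(2\mathcal M/B)\,|\{b:j\in I_{b,k}\}|$, whose square has expectation of order $\mathcal M^2/B^2$ once averaged over the with-replacement sampling (for which $\mathbb{E}|\{b:j\in I_{b,k}\}|=Bd_k/n_k$), so that summing over the $n$ data coordinates gives a budget of order $\mathcal M^2 n/B^2$. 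Combining the two contributions through $\mathbb{E}[Z_m\mid\mathcal{D}_B]$ --- a bounded-differences/Efron--Stein step --- produces the overall sub-Gaussian proxy $\sigma^2$ of order $\mathcal M^2(B+n)/B^2$ and hence \eqref{eq:plan-conc}; the majorisation $\mathbb{E}[Z_m]\le$ (first bracket of $\textnormal{pen}(B,m)$) used above is exactly Corollary~\ref{cor}$(ii)$, which itself rests on Theorem~\ref{thm:main}. In short, the real work lies in turning the worst-case-unbounded data-side increments into an averaged, $O(\mathcal M/B)$-per-coordinate budget; the remaining steps are routine structural-risk-minimization bookkeeping.
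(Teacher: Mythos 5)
Your overall architecture coincides with the paper's proof (Appendix~\ref{app:thmselec}): the same penalty-calibrated decomposition, the same union bound over $m$ with the $(a+b)^2\ge a^2+b^2$ trick turning the $2\mathcal{M}\sqrt{(B+n)\log m}/B$ part of the penalty into a factor $m^{-2}$, and the same Gaussian integral $\int_0^\infty e^{-t^2/(2\sigma^2)}\,dt=\sigma\sqrt{\pi/2}$ producing the additive $\mathcal{M}\sqrt{2\pi(B+n)}/B$. (You handle the bias term through a fixed near-minimizer $H_m^{*}$, the paper through $\mathbb{E}[\widetilde U_B(\widehat H_{B,m})]\le\mu_m^{*}$; both work, and both versions of the argument really prove the oracle inequality in expectation.) All of this bookkeeping is correct, and the majorization $\mathbb{E}[Z_m]\le$ first bracket of $\mathrm{pen}(B,m)$ via Corollary~\ref{cor}$(ii)$ is exactly what the paper uses.

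The gap is in the sub-Gaussian tail $\mathbb{P}(Z_m>\mathbb{E}[Z_m]+t)\le e^{-t^2/(2\sigma^2)}$ with $\sigma^2=\mathcal{M}^2(B+n)/B^2$, which you rightly single out as the crux: your proposed repair does not deliver it. McDiarmid's inequality needs \emph{uniform} worst-case increments. Conditionally on $\mathcal{D}_B$, the increment of $Z_m$ in the coordinate $X^{(k)}_j$ is $(2\mathcal{M}/B)\,\#\{b:j\in I_b\}$, and for bad configurations (e.g.\ the same tuple drawn all $B$ times) the sum of squared increments over the data coordinates is of order $\mathcal{M}^2$, not the required $\mathcal{M}^2 n/B^2$, so the conditional bounded-differences bound is not uniformly of the right size. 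Replacing worst-case increments by their average over the with-replacement sampling is exactly what Efron--Stein permits, but Efron--Stein controls only the variance, not exponential moments, so it cannot feed the tail integral; upgrading averaged increments to exponential concentration would require the entropy method (exponential Efron--Stein, self-bounding functions), and it is not at all clear the proxy would come out as $\mathcal{M}^2(B+n)/B^2$. For what it is worth, the paper's own Lemma~\ref{lem:proba} sidesteps the issue entirely: it applies the bounded difference inequality to $Z_m$ viewed as a function of the $(B+n)$ variables $(X^{(1)}_1,\dots,X^{(K)}_{n_K},\zeta_1,\dots,\zeta_B)$ and asserts that \emph{every} jump is bounded by $2\mathcal{M}/B$ --- precisely the point you object to for the data coordinates. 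So you have correctly located the one step that needs real work, but the conditioning/averaging argument you sketch does not close it; your proof is incomplete exactly where the paper's is terse.
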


We point out that the argument used to obtain the above result can be straightforwardly extended to other (possibly data-dependent) complexity penalties \citep[\cf][]{Mas06}, see the proof in Appendix~\ref{app:thmselec}.

\subsection{Fast Rates for ERM of Incomplete $U$-Statistics}
\label{subsec:fastrates}

In \cite{CLV08}, it has been proved that, under certain ``low-noise'' conditions, the minimum variance property of the $U$-statistics used to estimate the ranking risk (corresponding to the situation $K=1$ and $d_1=2$) leads to learning rates faster than $O_{\mathbb{P}}(1/\sqrt{n})$. These results rely on the \textit{Hajek projection}, a linearization technique originally introduced in \cite{Hoeffding48} for the case of one sample $U$-statistics and next extended to the analysis of a much larger class of functionals in \cite{Haj68}. It consists in writing $U_{\mathbf{n}}(H)$ as the sum of the orthogonal projection
\begin{equation}
\label{eq:Hajek}
\widehat{U}_{\mathbf{n}}(H)=\sum_{k=1}^K\sum_{i=1}^{n_k}\mathbb{E}\left[U_{\mathbf{n}}(H)\mid X_i^{(k)} \right]-(n-1)\mu(H),
\end{equation}
which is itself a sum of $K$ independent basic sample means based on i.i.d. r.v.'s (of the order $O_{\mathbb{P}}(1/%
\sqrt{n})$ each, after recentering), plus a possible negligible term. This representation was used for instance by \cite{GramsSerfling73} to refine the CLT in the multisample $U$-statistics framework. Although useful as a theoretical tool, it should be noticed that the quantity $\widehat{U}_{\mathbf{n}}(H)$ is not of practical interest, since the conditional expectations involved in the summation are generally unknown.

Although incomplete $U$-statistics do not share the minimum variance property (see Section~\ref{subsec:approx}), we will show that the same fast rate bounds for the excess risk as those reached by ERM of $U$-statistics (corresponding to the summation of $O(n^2)$ pairs of observations) can be attained by empirical ranking risk minimizers, when estimating the ranking risk by incomplete $U$-statistics involving the summation of $o(n^2)$ terms solely. 

\par For clarity (and comparison purpose), we first recall the statistical learning framework considered in \cite{CLV08}. Let $(X,Y)$ be a pair of random variables defined on the same probability space, where $Y$ is a real-valued label and $X$ models some input information taking its values in a measurable space $\mathcal{X}$ hopefully useful to predict $Y$. Denoting by $(X',Y')$ an independent copy of the pair $(X,Y)$. The goal pursued here is to learn how to rank the input observations $X$ and $X'$, by means of an antisymmetric \textit{ranking rule} $r:\mathcal{X}^2\rightarrow \{ -1,\; +1 \}$ (\textit{i.e.} $r(x,x')=-r(x'x)$ for any $(x,x')\in\mathcal{X}^2$), so as to minimize the \textit{ranking risk}
\begin{equation}\label{eq:ranking_risk}
L(r)=\mathbb{P}\{(Y-Y')\cdot r(X,X')<0  \}.
\end{equation}
The minimizer of the ranking risk is the ranking rule $r^*(X,X')=2\mathbb{I}\{\mathbb{P}\{Y>Y '\mid (X,X')\} \geq \mathbb{P}\{Y<Y '\mid (X,X') \}-1$ \citep[see Proposition 1 in][]{CLV08}.
The natural empirical counterpart of \eqref{eq:ranking_risk} based on a sample of independent copies $(X_1,Y_1),\; \ldots,\; (X_n,Y_n)$ of the pair $(X,Y)$ is the $1$-sample $U$-statistic $U_n(H_r)$ of degree two with kernel $H_r((x,y),(x',y'))=\mathbb{I}\{ (y-y')\cdot r(x,x')<0 \}$ for all $(x,y)$ and $(x',y'))$ in $\mathcal{X}\times \mathbb{R}$ given by:
\begin{equation}\label{eq:emp_ranking_risk}
L_n(r)=U_n(H_r)=\frac{2}{n(n-1)}\sum_{i<j}\mathbb{I}\{(Y_i-Y_j)\cdot r(X_i,X_j)<0  \}.
\end{equation}
Equipped with these notations, a statistical version of the excess risk $\Lambda(r)=L(r)-L(r^*)$ is a $U$-statistic $\lambda_n(r)$ with kernel $q_r=H_r-H_{r^*}$. The key ``noise-condition'', which allows to exploit the Hoeffding/Hajek decomposition of $\Lambda_n(r)$, is stated below.
\begin{assumption}\label{assump:noise}
There exist constants $c>0$ and $\alpha\in[0,1]$ such that:
$$
\forall r\in \mathcal{R},\;\; Var\left(h_r(X,Y)\right )\leq c \Lambda(r)^{\alpha},
$$
where we set $h_r(x,y)=\mathbb{E}[q_r((x,y),(X',Y')]$.
\end{assumption}
Recall incidentally that very general sufficient conditions guaranteeing that this assumption holds true have been exhibited, see Section 5 in \cite{CLV08} (notice that the condition is void for $\alpha=0$). Since our goal is to explain the main ideas rather than achieving a high level of generality, we consider a very simple setting, stipulating that the cardinality of the class of ranking rule candidates $\mathcal{R}$ under study is finite, $\#\mathcal{R}=M<+\infty$, and that the optimal rule $r^*$ belongs to $\mathcal{R}$. The following proposition is a simplified version of the fast rate result proved in \cite{CLV08} for the empirical minimizer $\widehat{r}_n=\argmin_{r\in \mathcal{R}}L_n(r)$.
\begin{proposition}(\cite{CLV08}, \textsc{Corollary 6}) Suppose that Assumption \ref{assump:noise} is fulfilled.
Then, there exists a universal constant $C>0$ such that for all $\delta\in (0,1)$, we have: $\forall n\geq 2$,
\begin{equation}
L(\widehat{r}_n)-L(r^*)\leq C\left( \frac{\log(M/\delta)}{n} \right)^{\frac{1}{2-\alpha}}.
\end{equation}
\end{proposition}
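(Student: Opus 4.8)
The plan is to follow the classical fast-rate route for $U$-statistics, combining Hoeffding's decomposition with the optimality of the empirical minimizer and the variance control provided by Assumption~\ref{assump:noise}.

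First I would decompose the excess-risk $U$-statistic. With $q_r=H_r-H_{r^*}$ one has $\mu(q_r)=\Lambda(r)$ and $\lambda_n(r):=U_n(q_r)=L_n(r)-L_n(r^*)$. Let $h_r(x,y)=\mathbb{E}[q_r((x,y),(X',Y'))]$ be the first Hajek projection (so that $\mathbb{E}[h_r]=\Lambda(r)$) and $\bar h_r=h_r-\Lambda(r)$. Hoeffding's decomposition then reads
\[
\lambda_n(r)=\Lambda(r)+\frac{2}{n}\sum_{i=1}^n \bar h_r(X_i,Y_i)+W_n(r),
\]
where $W_n(r)$ is a completely degenerate $U$-statistic of degree two with kernel $\tilde q_r((x,y),(x',y'))=q_r((x,y),(x',y'))-h_r(x,y)-h_r(x',y')+\Lambda(r)$, uniformly bounded by $4$. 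Since $r^*\in\mathcal{R}$ and $\widehat r_n$ minimizes $L_n$ over $\mathcal{R}$, we have $\lambda_n(\widehat r_n)\le 0$, which gives the basic comparison inequality
\[
\Lambda(\widehat r_n)\le \frac{2}{n}\left|\sum_{i=1}^n \bar h_{\widehat r_n}(X_i,Y_i)\right|+\bigl|W_n(\widehat r_n)\bigr|.
\]

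Second, I would bound both terms uniformly over the finite class. The linear part is an average of i.i.d. centered variables bounded by $2$, with $\mathrm{Var}(\bar h_r)=\mathrm{Var}(h_r)\le c\,\Lambda(r)^\alpha$ by Assumption~\ref{assump:noise}; Bernstein's inequality plus a union bound over the $M$ rules gives, with probability at least $1-\delta/2$, for every $r\in\mathcal{R}$,
\[
\left|\frac1n\sum_{i=1}^n\bar h_r(X_i,Y_i)\right|\le \sqrt{\frac{2c\,\Lambda(r)^\alpha\log(2M/\delta)}{n}}+\frac{C_1\log(2M/\delta)}{n}.
\]
For the degenerate part I would invoke an exponential (or moment) deviation inequality for bounded degenerate $U$-statistics of order two, which yields $W_n(r)=O_{\mathbb{P}}(1/n)$ rather than merely $O_{\mathbb{P}}(1/\sqrt n)$, together with a union bound, to obtain $\sup_{r\in\mathcal{R}}|W_n(r)|\le C_2\log(2M/\delta)/n$ with probability at least $1-\delta/2$. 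Since $1/(2-\alpha)\le 1$, this $1/n$-scale term is never larger than the target rate and is therefore absorbed.

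Third, on the intersection event (probability at least $1-\delta$), writing $u=\Lambda(\widehat r_n)$ and $\varepsilon=\log(2M/\delta)/n$, the previous displays combine into a self-bounding inequality
\[
u\le C_3\sqrt{u^{\alpha}\varepsilon}+C_4\,\varepsilon .
\]
Solving it (if $C_3\sqrt{u^{\alpha}\varepsilon}\ge C_4\varepsilon$ then $u^{2-\alpha}\le 4C_3^2\varepsilon$, whence $u\le(4C_3^2)^{1/(2-\alpha)}\varepsilon^{1/(2-\alpha)}$; otherwise $u\le 2C_4\varepsilon\le 2C_4\varepsilon^{1/(2-\alpha)}$ because $\varepsilon\le 1$) yields $\Lambda(\widehat r_n)\le C\,\varepsilon^{1/(2-\alpha)}$ for a universal constant $C$; the case $\varepsilon\ge 1$ is trivial since $\Lambda\le 1$, and $\log(2M/\delta)\lesssim\log(M/\delta)$ is absorbed into $C$. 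The main obstacle is the control of the second-order degenerate term $W_n(r)$: one must use a genuine deviation bound for degenerate $U$-statistics (a Hoeffding-type bound applied directly to $\lambda_n$ would only give the slow $1/\sqrt n$ fluctuation) to confirm that it contributes only at the $1/n$ scale and is dominated by the target rate; everything else is the standard self-bounding computation made possible by the variance control of Assumption~\ref{assump:noise}.
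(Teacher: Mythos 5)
Your outline is correct, but note that the paper does not actually prove this proposition: it is imported verbatim from \cite{CLV08} (Corollary~6 there), and what you have written is essentially a reconstruction of the original proof of that cited result --- Hoeffding decomposition of $\lambda_n(r)$ into the Hajek (linear) part plus a completely degenerate remainder $W_n(r)$, Bernstein plus a union bound for the linear part using $\mathrm{Var}(h_r)\le c\,\Lambda(r)^{\alpha}$, a separate exponential bound showing the degenerate part lives at the $1/n$ scale, and the standard resolution of the self-bounding inequality $u\le C_3\sqrt{u^{\alpha}\varepsilon}+C_4\varepsilon$. The closest thing to a proof in the present paper is the argument for Theorem~\ref{thm:fast}, whose first lemma obtains the same intermediate inequality $\Lambda(r)\le \Lambda_n(r)+\sqrt{2c\Lambda(r)^{\alpha}\log(M/\delta)/n}+4\log(M/\delta)/(3n)$ in one shot, by applying the Bernstein-type exponential inequality for $U$-statistics (Theorem~A, p.~201 of \cite{Ser80}) directly to $\Lambda_n(r)-\Lambda(r)$ and taking a union bound over the $M$ rules, with no explicit decomposition and no separate control of the degenerate part. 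The trade-off is this: the one-shot route is much shorter, but Serfling's inequality is stated with the variance of the \emph{full kernel} $q_r$ as its variance parameter, whereas Assumption~\ref{assump:noise} only controls the variance of its projection $h_r$; your decomposition is the careful way to exploit the hypothesis exactly as stated, at the price of needing a genuine tail bound for bounded degenerate second-order $U$-statistics (Arcones--Gin\'e type). On that last point, be aware that such tail bounds contain an $\exp(-c\sqrt{nu})$ component, so the clean $\sup_{r\in\mathcal{R}}|W_n(r)|\le C_2\log(2M/\delta)/n$ you assert is only valid in the regime $\log(M/\delta)\lesssim n$; since outside that regime the target bound exceeds $1\ge\Lambda(\widehat r_n)$ and there is nothing to prove (as you already note for $\varepsilon\ge 1$), this is a cosmetic rather than substantive gap.
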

Consider now the minimizer $\widetilde{r}_B$ of the incomplete $U$-statistic risk estimate
\begin{equation}
\widetilde{U}_B(H_r)=\frac{1}{B}\sum_{k=1}^B\sum_{(i,j):1\leq i<j\leq n}\epsilon_k((i,j))\mathbb{I}\{(Y_i-Y_j)\cdot r(X_i,X_j)<0  \}
\end{equation}
 over $\mathcal{R}$, where $\epsilon_k((i,j))$ indicates whether the pair $(i,j)$ has been picked at the $k$-th draw ($\epsilon_k((i,j))=1$ in this case, which occurs with probability $1/\binom{n}{2}$) or not (then, we set $\epsilon_k((i,j))=0$). Observe that $\widetilde{r}_B$ also minimizes the empirical estimate of the excess risk $\widetilde{\Lambda}_B(r)=\widetilde{U}_B(q_r)$ over $\mathcal{R}$.

\begin{theorem}\label{thm:fast}
Let $\alpha\in [0,1]$ and suppose that Assumption~\ref{assump:noise} is fulfilled. If we set $B = O(n^{2/(2-\alpha)})$, there exists some constant $C<+\infty$ such that, for all $\delta\in (0,1)$, we have with probability at least $1-\delta$: $\forall n\geq 2$,
\begin{equation*}
L(\widetilde{r}_B)-L(r^*)\leq C\left(\frac{\log(M/ \delta)}{n}  \right)^{\frac{1}{2-\alpha}}.  
\end{equation*}
\end{theorem}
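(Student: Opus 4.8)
The plan is to run the usual localization (``fast rate'') argument on the incomplete risk and to absorb the extra fluctuation produced by with-replacement sampling into the target rate, exploiting the prescribed budget $B=O(n^{2/(2-\alpha)})$. First I would reduce to two uniform deviations. Since $\widetilde U_B$ is linear in its kernel, $\widetilde U_B(H_r)-\widetilde U_B(H_{r^*})=\widetilde U_B(q_r)=\widetilde\Lambda_B(r)$, so $\widetilde r_B$ also minimizes $\widetilde\Lambda_B$ over $\mathcal R$; because $q_{r^*}\equiv 0$ this gives $\widetilde\Lambda_B(\widetilde r_B)\le\widetilde\Lambda_B(r^*)=0$, whence, writing $\Lambda(r)=\mu(q_r)$,
\[
\Lambda(\widetilde r_B)\le \mu(q_{\widetilde r_B})-\widetilde U_B(q_{\widetilde r_B})\le \sup_{r\in\mathcal R}\big\{\mu(q_r)-U_n(q_r)\big\}+\sup_{r\in\mathcal R}\big\{U_n(q_r)-\widetilde U_B(q_r)\big\}.
\]
It then suffices to control these two suprema.

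Next I would bound the second supremum, the ``incompleteness gap''. Conditionally on the sample $\{(X_i,Y_i)\}_{i\le n}$, the $B$ summands $q_r(\mathbf X_{I_b})$, $b=1,\dots,B$, defining $\widetilde U_B(q_r)$ are i.i.d., lie in $[-1,1]$, and have conditional mean $U_n(q_r)$; a Hoeffding bound for the average of $B$ such variables plus a union bound over the $M$ rules yields $\sup_{r\in\mathcal R}|U_n(q_r)-\widetilde U_B(q_r)|\le\sqrt{2\log(4M/\delta)/B}$ with (conditional, hence unconditional) probability at least $1-\delta/2$. For $B$ of order $n^{2/(2-\alpha)}$, and since $1/(2-\alpha)\ge 1/2$, this is $O\big((\log(M/\delta)/n)^{1/(2-\alpha)}\big)$. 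I would use this elementary estimate rather than assertion $(i)$ of Theorem~\ref{thm:main}, which, applied to the finite class $\{q_r\}$, would introduce a spurious $\log(\#\Lambda)\asymp\log n$ factor absent from the announced rate.

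For the first supremum I would invoke the fast-rate machinery of \cite{CLV08} essentially verbatim: the Hoeffding decomposition writes $U_n(q_r)-\mu(q_r)$ as $\tfrac{2}{n}\sum_{i=1}^n\big(h_r(X_i,Y_i)-\Lambda(r)\big)$ plus a degenerate $U$-statistic with bounded kernel; Bernstein's inequality applied to the linear part (bounded summands with variance $\le c\,\Lambda(r)^\alpha$ by Assumption~\ref{assump:noise}), a standard deviation bound for bounded degenerate $U$-statistics applied to the remainder, a union bound over $\mathcal R$, and a Young-type inequality $2\sqrt{ab}\le\tfrac12 a+2b$ together give, for a suitable constant $C_1$ and with probability at least $1-\delta/2$, the uniform bound $\mu(q_r)-U_n(q_r)\le\tfrac12\Lambda(r)+C_1(\log(M/\delta)/n)^{1/(2-\alpha)}$ over $r\in\mathcal R$. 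Intersecting the two events (probability $\ge 1-\delta$) and combining with the displayed reduction gives $\Lambda(\widetilde r_B)\le\tfrac12\Lambda(\widetilde r_B)+C(\log(M/\delta)/n)^{1/(2-\alpha)}$; rearranging yields $L(\widetilde r_B)-L(r^*)=\Lambda(\widetilde r_B)\le 2C(\log(M/\delta)/n)^{1/(2-\alpha)}$, which is the claim.

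I expect the third step — the localization together with the deviation bound for the degenerate component — to be the only genuinely delicate part in principle, but it is imported directly from \cite{CLV08}. The new ingredient, the crude control of the incompleteness gap, is elementary; the real substance is the bookkeeping showing that the budget $B\asymp n^{2/(2-\alpha)}$ is exactly large enough for the $O(1/\sqrt{B})$ sampling error to be dominated by the intrinsic fast rate $O(n^{-1/(2-\alpha)})$.
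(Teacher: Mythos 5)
Your proposal is correct and follows essentially the same route as the paper's proof: both exploit $\widetilde U_B(q_{\widetilde r_B})\le 0$ to reduce the excess risk to a Bernstein-type deviation bound for the complete $U$-statistic $U_n(q_r)$ (using Assumption~\ref{assump:noise} to control the variance by $c\,\Lambda(r)^\alpha$) plus a conditional Hoeffding/union-bound control of the incompleteness gap over the finite class $\mathcal R$, and then balance the two with $B=O(n^{2/(2-\alpha)})$. The only differences are presentational: the paper invokes Serfling's Bernstein inequality for $U$-statistics as a black box where you re-derive it via the Hajek/degenerate decomposition, and the paper solves the resulting self-bounding inequality $\Lambda\le a\Lambda^{\alpha/2}+b$ directly where you linearize it with a Young-type inequality.
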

As soon as $\alpha<1$, this result shows that the same fast rate of convergence as that reached by $\widehat{r}_n$ can be attained by the ranking rule $\widetilde{r}_B$,  which minimizes an empirical version of the ranking risk involving the summation of $O(n^{2/(2-\alpha)})$ terms solely. For comparison purpose, minimization of the criterion \eqref{eq:ranking_risk} computed with a number of terms of the same order leads to a rate bound of order $O_{\mathbb{P}}(n^{1/(2-\alpha)^2})$.

Finally, we point out that fast rates for the clustering problem have been also investigated in \cite{CLEM14}, see Section~5.2 therein. The present analysis can be extended to the clustering framework by means of the same arguments.





\subsection{Alternative Sampling Schemes}
\label{subsec:sampling}

\begin{figure}[t]
\centering
\includegraphics[width=0.6\textwidth]{./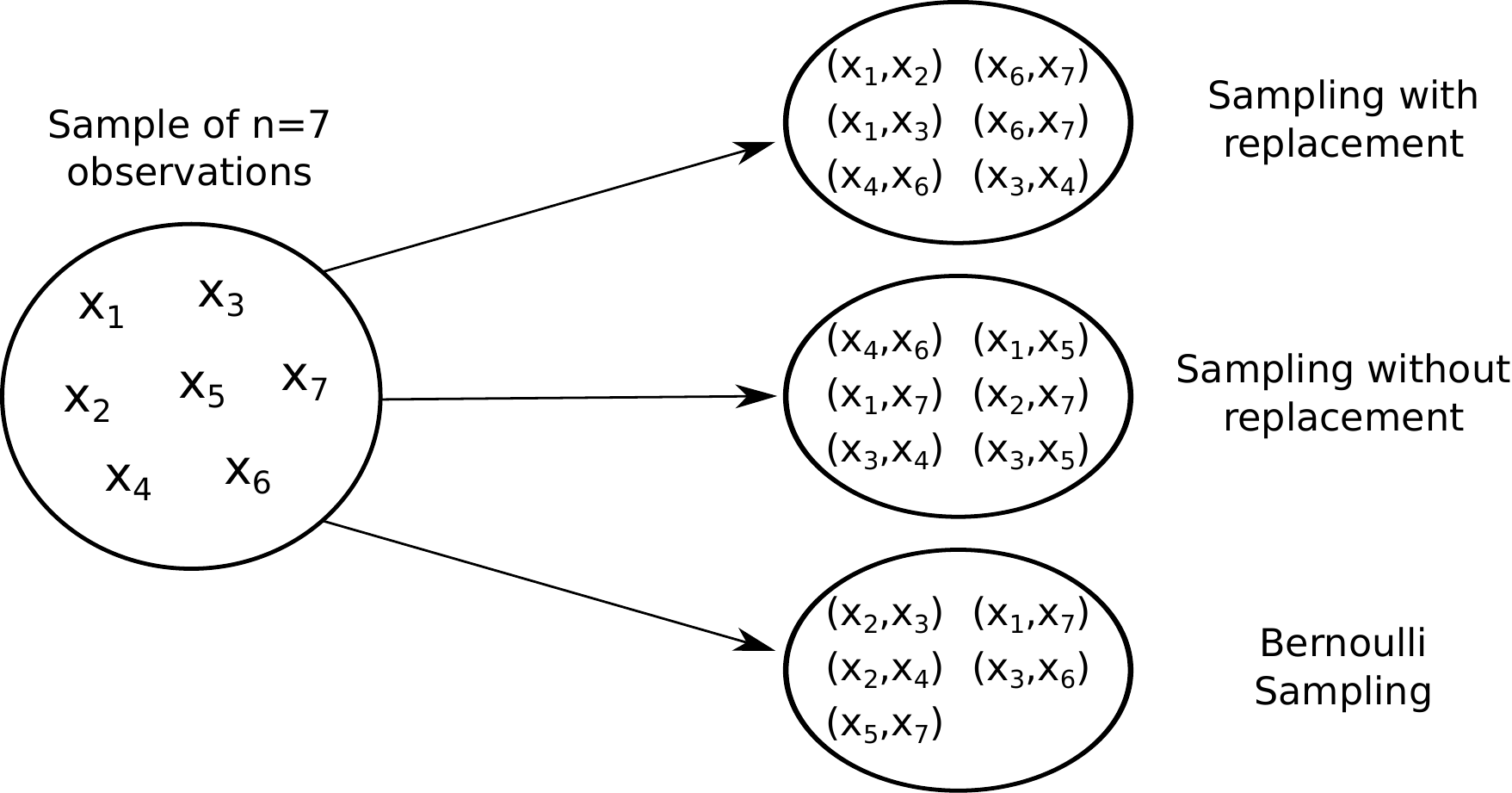}
\caption{Illustration of different sampling schemes for approximating a $U$-statistic. For simplicity, consider again the case $K=1$ and $d_1=2$. Here $n=7$ and the expected number of terms is $B=6$. Sampling with or without replacement results in exactly $B$ terms, with possible repetitions when sampling with replacement, \emph{e.g.} $(x_6,x_7)$ in this example. In contrast, Bernoulli sampling with $\pi_I=B/\#\Lambda$ results in $B$ terms only in expectation, with individual realizations that may exhibit more or fewer terms.}
\label{fig:samp_illustr}
\end{figure}

Sampling with replacement is not the sole way of approximating generalized $U$-statistics with a controlled computational cost. As proposed in \cite{Janson84}, other sampling schemes can be considered, Bernoulli sampling or sampling without replacement in particular (see Figure~\ref{fig:samp_illustr} for an illustration). We now explain how the results of this paper can be extended to these situations. The population of interest is the set $\Lambda$ and a \textit{survey sample} of (possibly random) size $b \leq n$ is any subset $s$ of cardinality $b =b(s)$ less than $\#\Lambda$ in the power set $\mathcal{P}(\Lambda)$. Here, a general  \textit{survey scheme without replacement} is any conditional probability distribution $R$ on the set of all possible samples $s\in \mathcal{P}(\Lambda)$ given $(\mathbf{X}_{I})_{I\in \Lambda}$. For any $I\in \Lambda$, the first order \textit{inclusion probability} $\pi_I(R)= \mathbb{P}_{R}\{I \in S\}$,
 is the probability that the unit $I$ belongs to a random sample $S$ drawn from distribution $R$. We set $\boldsymbol{\pi}(R)= (\pi_I(R))_{I\in \Lambda}$. 
 The second order inclusion probabilities are denoted by
 $\pi_{I,J}(R) = \mathbb{P}_{R}\{(I,J)\in S^2\}$ for any $I\neq J$ in $\Lambda$. When no confusion is possible, we omit to mention the dependence in $R$ when writing the first/second order probabilities of inclusion. The information related to the observed sample $S \subset \Lambda$ is fully enclosed in the random vector $\boldsymbol{\Delta} = (\Delta(I))_{I\in \Lambda}$, where $\Delta(I) =\mathbb{I}\{ I\in S \}$ for all $I\in \Lambda$.
 The $1$-d marginal distributions of the sampling scheme $\boldsymbol{\Delta}_n$ are the Bernoulli distributions with parameters $\pi_{I}$, $I\in \Lambda$, and the covariance matrix of the r.v. $\boldsymbol{\Delta}_{\mathbf{n}}$ is given by
 $ \Gamma = \left\{\pi_{I,J} - \pi_{I}\pi_{J}  \right\}_{I, J}$ with the convention $\pi_{I,I}=\pi_I$ for all $I\in \Lambda$.
 Observe that, equipped with the notations above, $\sum_{I\in \Lambda}\Delta(I) = b(S)$.
 
 \par One of the simplest survey plans is the Poisson scheme (without replacement), for which the $\Delta(I)$'s are independent Bernoulli random variables with parameters $\pi_I$, $I\in \Lambda$, in $(0,1)$. The first order inclusion probabilities fully characterize such a plan. Observe in addition that the size $b(\mathcal{S})$ of a sample generated this way is random with expectation $B=\mathbb{E}[b(S)\mid (\mathbf{X}_{I})_{I\in \Lambda}]=\sum_{I\in \Lambda}\pi_I$. The situation where the $\pi_I$'s are all equal corresponds to the Bernoulli sampling scheme: $\forall I\in \Lambda$, $\pi_I=B/\#\Lambda$. The Poisson survey scheme plays a crucial role in sampling theory, inso far as a wide range of survey schemes can be viewed as conditional Poisson schemes, see \cite{Hajek64}. 
 For instance, one may refer to \cite{Cochran77} or \cite{Dev87} for accounts of survey sampling techniques. 
 
 Following in the footsteps of the seminal contribution of \cite{HT51}, an estimate of \eqref{eq:UstatG} based on a sample drawn from a survey scheme $R$ with first order inclusion probabilities $(\pi_I)_{I\in \Lambda}$ is given by:
 \begin{equation}\label{eq:HT}
 \bar{U}_{HT}(H)=\frac{1}{\#\Lambda}\sum_{I\in \Lambda}\frac{\Delta(I)}{\pi_I}H(\mathbf{X}_I),
 \end{equation}
 with the convention that $0/0=0$. Notice that it is an unbiased estimate of \eqref{eq:UstatG}: 
 $$
 \mathbb{E}[ \bar{U}_{HT}(H) \mid (\mathbf{X}_{I})_{I\in \Lambda} ]=U_{\mathbf{n}}(H).$$
 In the case where the sample size is deterministic, its conditional variance is given by:
 $$
 Var( \bar{U}_{HT}(H) \mid (\mathbf{X}_{I})_{I\in \Lambda})=\frac{1}{2}\sum_{I\neq J}\left(\frac{H(\mathbf{X}_I)}{\pi_I}-\frac{H(\mathbf{X}_J)}{\pi_J}  \right)^2(\pi_{I,J}-\pi_I\pi_J).
 $$
 We point out that the computation of \eqref{eq:HT} involves summing over a possibly random number of terms, equal to $B=\mathbb{E}[b(S)]=\sum_{I\in \Lambda}\pi_I$ in average and whose variance is equal to $Var(b(S))=\sum_{I\in \Lambda}\pi_I(1-\pi_I)+\sum_{I\neq J}\{\pi_{I,J}-\pi_I\pi_J\}$.
 
 Here, we are interested in the situation where the $\Delta(I)$'s are independent from $(X_I)_{I\in \Lambda}$, and either a sample of size $B\leq \#\Lambda$ fixed in advance is chosen uniformly at random among the $\binom{\#\Lambda}{B}$ possible choices (this survey scheme is sometimes referred to as \textit{rejective sampling} with equal first order inclusion probabilities), or else it is picked by means of a Bernoulli sampling with parameter $B/\#\Lambda$. Observe that, in both cases, we have $\pi_I=B/\#\Lambda$ for all $I\in \Lambda$. The following theorem shows that in both cases, similar results as those obtained for \textit{sampling with replacement} can be derived for minimizers of the Horvitz-Thompson risk estimate \eqref{eq:HT}.
 
 \begin{theorem}\label{thm:main2}  Let $\mathcal{H}$ be a collection of bounded symmetric kernels on $\prod_{k=1}^K\mathcal{X}_{k}^{d_k}$ that fulfills the assumptions involved in Proposition \ref{prop:Uproc}. Let $B\in \{1,\; \ldots,\; \# \Lambda  \}$. Suppose that, for any $H\in \mathcal{H}$, $\bar{U}_{HT}(H)$ is the incomplete $U$-statistic based on either a Bernoulli sampling scheme with parameter $B/\#\Lambda$ or else a sampling without replacement scheme of size $B$.
 For all $\delta\in (0,1)$, we have with probability at least $1-\delta$: $\forall \mathbf{n}\in \mathbb{N}^{*K}$, $\forall B\in \{1,\; \ldots,\; \# \Lambda  \}$,
 \begin{equation*}
 \sup_{H\in \mathcal{H}}\left\vert \bar{U}_{HT}(H)- U_{\mathbf{n}}(H)\right\vert \leq 2\mathcal{M}_{\mathcal{H}}\sqrt{\frac{\log(2(1+\#\Lambda)^V/\delta)}{B}}+\frac{2\log(2(1+\#\Lambda)^V/\delta)\mathcal{M}_{\mathcal{H}}}{3B},
 \end{equation*}
 in the case of the Bernoulli sampling design, and
  \begin{equation*}
  \sup_{H\in \mathcal{H}}\left\vert \bar{U}_{HT}(H)- U_{\mathbf{n}}(H)\right\vert \leq \sqrt{2}\mathcal{M}_{\mathcal{H}} \sqrt{\frac{\log(2(1+\#\Lambda)^V/\delta)}{B} },
  \end{equation*}
  in the case of the sampling without replacement design.
 \end{theorem}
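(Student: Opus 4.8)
The plan is to argue conditionally on the data $\mathcal{X}_\Lambda:=(\mathbf{X}_I)_{I\in\Lambda}$, exploiting that in both designs the sampling vector $\boldsymbol{\Delta}=(\Delta(I))_{I\in\Lambda}$ is drawn independently of $\mathcal{X}_\Lambda$ and has common first-order inclusion probability $\pi:=B/\#\Lambda$. Since all the $\pi_I$ are equal, the Horvitz--Thompson estimate collapses to $\bar{U}_{HT}(H)=\frac1B\sum_{I\in\Lambda}\Delta(I)H(\mathbf{X}_I)$, so that $\bar{U}_{HT}(H)-U_{\mathbf{n}}(H)=Z(H)$ with $Z(H):=\frac1B\sum_{I\in\Lambda}(\Delta(I)-\pi)H(\mathbf{X}_I)$, a linear functional of $\boldsymbol{\Delta}$ that is centered given $\mathcal{X}_\Lambda$. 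Because the final bounds will depend on the data only through $\#\Lambda$, a deviation bound valid conditionally on $\mathcal{X}_\Lambda$ will be distribution-free, hence valid unconditionally and uniformly in $\mathbf{n}$; uniformity in $B$ is then obtained by coupling the successive draws exactly as in the proof of Theorem~\ref{thm:main}.

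Next I would make the supremum over $\mathcal{H}$ tractable using the same {\sc VC} mechanics as in Proposition~\ref{prop:Uproc}. Writing each bounded kernel through the layer-cake identity as an integral over $t\in[0,\mathcal{M}_{\mathcal{H}}]$ of the indicators $\mathbb{I}\{H(\mathbf{X}_I)>t\}$ and $\mathbb{I}\{H(\mathbf{X}_I)<-t\}$, and invoking that $\mathcal{H}$ is a {\sc VC} major class of dimension $V$ (so its super/sub-level sets form a {\sc VC} class of dimension at most $V$), one gets $\sup_{H\in\mathcal{H}}|Z(H)|\le c\,\mathcal{M}_{\mathcal{H}}\sup_{A}|\Sigma_A|$ for an absolute constant $c$ (equal to $2$ in the fixed-size case, where $\sum_{I\in\Lambda}\Delta(I)=B$ permits exact recentering), with $\Sigma_A:=\frac1B\sum_{I\in A}(\Delta(I)-\pi)$; and conditionally on $\mathcal{X}_\Lambda$ only the traces $\{I:\mathbf{X}_I\in A\}$ matter, of which Sauer's lemma allows at most $(1+\#\Lambda)^V$. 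Thus the problem reduces to a concentration bound for a single $\Sigma_A$ followed by a union bound over at most $(1+\#\Lambda)^V$ events (alternatively, one may route this through a Rademacher-average estimate of the type in Remark~\ref{rk:comp2}).

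For the Bernoulli design the $\Delta(I)$ are i.i.d.\ $\mathrm{Bernoulli}(\pi)$, so each $\Sigma_A$ is an average of independent, centered summands bounded in absolute value by $1/B$ with total variance $\le |A|\pi/B^2\le 1/B$; Bernstein's inequality together with the union bound over the $\le(1+\#\Lambda)^V$ traces gives, conditionally on $\mathcal{X}_\Lambda$, a tail of the form $2(1+\#\Lambda)^V\exp\!\big(-(Bu^2/2)/(1+u/3)\big)$. Setting this equal to $\delta$, inverting in $u$, multiplying by the slicing constant times $\mathcal{M}_{\mathcal{H}}$ and unconditioning produces the first displayed inequality, with its $O(1/\sqrt B)$ main term and the $O(1/B)$ Bernstein correction (indeed Bernstein rather than a range-based bound is essential here, since the range of $(\Delta(I)-\pi)H(\mathbf{X}_I)$ is order $\mathcal{M}_{\mathcal{H}}$ while its variance is only order $\pi\,\mathcal{M}_{\mathcal{H}}^2$).

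For the rejective/without-replacement design $\boldsymbol{\Delta}$ is the indicator vector of a size-$B$ subset chosen uniformly at random, so the $\Delta(I)$ are dependent and $\sum_{I\in A}\Delta(I)$ is hypergeometric: this is the one genuinely new point, since Bernstein can no longer be applied to $\Sigma_A$ as a sum of independent terms. The remedy is the classical comparison, going back to Hoeffding (and sharpened by Serfling), that a sample sum drawn without replacement from a finite population is at least as concentrated as the corresponding sum drawn with replacement; applied to the $\{0,1\}$-valued population $(\mathbb{I}\{\mathbf{X}_I\in A\})_{I\in\Lambda}$ of range one, mean $|A|/\#\Lambda$, sampled $B$ times, it yields the clean sub-Gaussian tail $\mathbb{P}(|\Sigma_A|>u\mid\mathcal{X}_\Lambda)\le 2\exp(-2Bu^2)$ with no correction term. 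A union bound over the $\le(1+\#\Lambda)^V$ traces, inversion and the slicing constant $2$ then give the second displayed inequality. The main obstacle is therefore confined to this last case and is dispatched by invoking the Hoeffding/Serfling finite-population inequality in place of a direct independence argument; everything else is routine once the conditioning-on-$\mathcal{X}_\Lambda$ device and the {\sc VC}-dimension reduction of Proposition~\ref{prop:Uproc} are in place.
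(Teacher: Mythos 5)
Your proof is correct and, at the level of the probabilistic machinery, coincides with the paper's: both condition on $(\mathbf{X}_I)_{I\in\Lambda}$, exploit the independence of the design from the data, apply Bernstein's inequality in the Bernoulli case and a Hoeffding/Serfling finite-population inequality in the without-replacement case, and then take a union bound over the at most $(1+\#\Lambda)^V$ configurations permitted by Sauer's lemma before inverting the tail. The one genuine divergence is the reduction of the supremum over $\mathcal{H}$. The paper applies the concentration inequalities directly to the real-valued sums $\sum_{I\in\Lambda}(\Delta(I)-B/\#\Lambda)H(\mathbf{X}_I)$ and invokes Sauer on the value vectors $(H(\mathbf{X}_I))_{I\in\Lambda}$ themselves (which, strictly speaking, is only literally justified for finitely-valued kernels); you instead pass through the layer-cake identity to reduce everything to level-set indicator sums $\Sigma_A$, which is the more scrupulous way to use the {\sc VC}-major hypothesis for genuinely real-valued $H$, at the price of a slicing constant and of having to count both super- and sub-level sets in the union bound (so the cardinality is really of order $2(1+\#\Lambda)^V$ rather than $(1+\#\Lambda)^V$). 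As a consequence your route does not reproduce the theorem's constants exactly: in the Bernoulli case the variance proxy for $\Sigma_A$ is $|A|\pi/B^2\le 1/B$ and, after multiplying by the slicing factor $2\mathcal{M}_{\mathcal{H}}$, the leading term comes out as $2\sqrt{2}\,\mathcal{M}_{\mathcal{H}}\sqrt{\log(2(1+\#\Lambda)^V/\delta)/B}$ rather than $2\mathcal{M}_{\mathcal{H}}\sqrt{\cdot}$, with a similar factor on the $O(1/B)$ correction; in the without-replacement case the constants happen to match. This is a cosmetic discrepancy, not a gap: the bounds have the same form and rate, and your observation that Bernstein (variance of order $\pi\mathcal{M}_{\mathcal{H}}^2$ per term versus range of order $\mathcal{M}_{\mathcal{H}}$) is what produces the $O(1/B)$ correction in the Bernoulli case, while the fixed-size design admits a clean sub-Gaussian tail, is exactly the point of the two separate statements in the theorem.
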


We highlight the fact that, from a computational perspective, sampling with replacement is undoubtedly much more advantageous than Bernoulli sampling or sampling without replacement. Indeed, although its expected value is equal to $B$, the size of a Bernoulli sample is stochastic and the related sampling algorithm requires a loop through the elements $I$ of $\Lambda$ and the practical implementation of sampling without replacement is generally based on multiple iterations of sampling with replacement, see \cite{tille2006sampling}.

\section{Application to Stochastic Gradient Descent for ERM}\label{sec:SGD}

The theoretical analysis carried out in the preceding sections focused on the properties of empirical risk minimizers but ignored the issue of finding such a minimizer. In this section, we show that the sampling technique introduced in Section \ref{sec:approx} also provides practical means of scaling up iterative statistical learning techniques. Indeed, large-scale training of many machine learning models, such as {\sc SVM}, {\sc Deep Neural Networks} or {\sc soft $K$-means} among others, is based on stochastic gradient descent (SGD in abbreviated form), see \cite{B98}. When the risk is of the form \eqref{eq:parameter},  we now investigate the benefit of using, at each iterative step, a gradient estimate of the form of an incomplete $U$-statistic, instead of an estimate of the form of a complete $U$-statistic with exactly the same number of terms based on subsamples drawn uniformly at random.

Let $\Theta\subset \mathbb{R}^q$ with $q\geq 1$ be some parameter space and $H:\prod_{k=1}^K\mathcal{X}_k^{d_k}\times \Theta\rightarrow \mathbb{R}$ be a loss function which is convex and differentiable in its last argument. Let $(X^{(k)}_1,\; \ldots,\; X^{(k)}_{d_k})$, $1\leq k\leq K$, be $K$ independent random vectors with distribution $F_k^{\otimes d_k}(dx)$ on $\X_k^{d_k}$ respectively such that the random vector $H(X^{(1)}_1,\; \ldots,\; X^{(1)}_{d_1},\; \ldots,\; X^{(K)}_1,\; \ldots,\; X^{(K)}_{d_K};\; \theta)$ is square integrable for any $\theta\in \Theta$. For all $\theta\in \Theta$, set 
$$
L(\theta)=\mathbb{E}[H(X^{(1)}_1,\; \ldots,\; X^{(1)}_{d_1},\; \ldots,\; X^{(K)}_1,\; \ldots,\; X^{(K)}_{d_K};\; \theta)]=\mu(H(\cdot ;\; \theta))
$$ and consider the \textit{risk minimization} problem $\min_{\theta\in \Theta}L( \theta)$.
Based on $K$ independent i.i.d. samples $X^{(k)}_1,\; \ldots,\; X^{(k)}_{n_k}$ with $1\leq k\leq K$, the empirical version of the risk function is
$\theta\in \Theta\mapsto \widehat{L}_{\mathbf{n}}(\theta)\overset{def}{=}U_{\mathbf{n}}(H(\cdot;\; \theta))$. Here and throughout, we denote by $\nabla_{\theta}$ the gradient operator w.r.t. $\theta$.

\paragraph{Gradient descent} Many practical machine learning algorithms use variants of the standard gradient descent method, following the iterations:
\begin{equation}\label{eq:iter1}
\theta_{t+1}=\theta_t-\eta_t\nabla_{\theta}\widehat{L}_{\mathbf{n}}(\theta_t),
\end{equation}
with an arbitrary initial value $\theta_0\in\Theta$ and a learning rate (step size) $\eta_t\geq 0$ such that $\sum_{t=1}^{+\infty}\eta_t=+\infty$ and $\sum_{t=1}^{+\infty}\eta^2_t<+\infty$.

Here we place ourselves in a large-scale setting, where the sample sizes $n_1,\; \ldots,\; n_K$ of the training data sets are so large that computing the gradient of $\widehat{L}_{\mathbf{n}}$
\begin{equation}\label{eq:emp_gradient}
\widehat{g}_{\mathbf{n}}(\theta)=\frac{1}{\prod_{k=1}^K \binom{n_k}{d_k}}\sum_{I_1}\ldots\sum_{I_K} \nabla_{\theta} H(\mathbf{X}^{(1)}_{I_1} ; \mathbf{X}^{(2)}_{I_2}; \ldots ;\mathbf{X}^{(K)}_{I_K};\; \theta)
\end{equation}
at each iteration \eqref{eq:iter1} is computationally too expensive. Instead, Stochastic Gradient Descent uses an unbiased estimate $\widetilde{g}(\theta)$ of the gradient \eqref{eq:emp_gradient} that is cheap to compute.
A natural approach consists in replacing \eqref{eq:emp_gradient} by a complete $U$-statistic constructed from subsamples of reduced sizes $n'_k<<n_k$ drawn uniformly at random, leading to the following gradient estimate:
\begin{equation}\label{eq:emp_compgradient}
\widetilde{g}_{\mathbf{n}'}(\theta)=\frac{1}{\prod_{k=1}^K \binom{n'_k}{d_k}}\sum_{I_1}\ldots\sum_{I_K} \nabla_{\theta} H(\mathbf{X}^{(1)}_{I_1} ; \mathbf{X}^{(2)}_{I_2}; \ldots ;\mathbf{X}^{(K)}_{I_K};\; \theta),
\end{equation}
where the symbol $\sum_{I_k}$ refers to summation over all $\binom{n'_k}{d_k}$ subsets $\mathbf{X}^{(k)}_{I_k}=( X^{(k)}_{i_1},\;\ldots,\; X^{(k)}_{i_{d_k}})$ related to a set $I_k$ of $d_k$ indexes $1\leq i_1< \ldots <i_{d_k}\leq n'_k$ and $\mathbf{n'}=(n'_1,\; \ldots,\; n'_K)$.

We propose an alternative strategy based on the sampling scheme described in Section~\ref{sec:approx}, \emph{i.e.} a gradient estimate in the form of an \emph{incomplete} $U$-statistic:
\begin{equation}\label{eq:gradient_UstatI}
\widetilde{g}_B(\theta)=\frac{1}{B}\sum_{(I_1,\;\ldots,\, I_K)\in\mathcal{D}_B} \nabla_{\theta}H(\mathbf{X}^{(1)}_{I_1},\;\ldots,\; \mathbf{X}^{(K)}_{I_K};\; \theta),
\end{equation}
where $\D_B$ is built by sampling with replacement in the set $\Lambda$.


It is well-known that the variance of the gradient estimate negatively impacts on the convergence of SGD. Consider for instance the case where the loss function $H$ is $(1/\gamma)$-smooth in its last argument, \ie $\forall \theta_1,\theta_2\in\Theta$:
$$\|\nabla_{\theta}H(\cdot;\;\theta_1) - \nabla_{\theta}H(\cdot;\;\theta_2)\| \leq \frac{1}{\gamma}\|\theta_1-\theta_2\|.$$
Then one can show that if $\widetilde{g}$ is the gradient estimate:
\begin{eqnarray*}
\mathbb{E}[\widehat{L}_{\mathbf{n}}(\theta_{t+1})] &=& \mathbb{E}[\widehat{L}_{\mathbf{n}}(\theta_t-\eta_t\widetilde{g}(\theta_t))]\\
& \leq & \mathbb{E}[\widehat{L}_{\mathbf{n}}(\theta_{t})] - \eta_t\|\mathbb{E}[\widehat{g}_{\mathbf{n}}(\theta_{t})]\|^2 + \frac{\eta_t^2}{2\gamma}\mathbb{E}[\|\widetilde{g}(\theta_t)\|^2]\\
& \leq & \mathbb{E}[\widehat{L}_{\mathbf{n}}(\theta_{t})] - \eta_t\left(1-\frac{\eta_t}{2\gamma}\right)\mathbb{E}[\|\widehat{g}_{\mathbf{n}}(\theta_{t})\|^2] + \frac{\eta_t^2}{2\gamma}\var[\widetilde{g}(\theta_t)].
\end{eqnarray*}
In other words, the smaller the variance of the gradient estimate, the larger the expected reduction in objective value. Some recent work has focused on variance-reduction strategies for SGD when the risk estimates are basic sample means \citep[see for instance][]{Le-Roux2012a,Johnson2013a}.

In our setting where the risk estimates are of the form of a $U$-statistic, we are interested in comparing the variance of $\widetilde{g}_{\mathbf{n}'}(\theta)$ and $\widetilde{g}_B(\theta)$ when $B = \prod_{k=1}^K \binom{n'_k}{d_k}$ so that their computation requires to average over the same number of terms and thus have similar computational cost.\footnote{Note that sampling $B$ sets from $\Lambda$ to obtain \eqref{eq:gradient_UstatI} is potentially more efficient than sampling $n'_k$ points from $\mathbf{X}_{\{1,...,n_k\}}$ for each $k=1,\dots,K$ and then forming all combinations to obtain \eqref{eq:emp_compgradient}.} Our result is summarized in the following proposition.

\begin{proposition}
\label{prop:var_comp}
Let $B = \prod_{k=1}^K \binom{n'_k}{d_k}$ for $n'_k\ll n_k$, $k=1,\; \ldots,\; K$. In the asymptotic framework \eqref{asymptotics}, we have:
\begin{equation*}
\var[\widetilde{g}_{\mathbf{n}'}(\theta)] = O\left(\frac{1}{\sum_{k=1}^K n'_k}\right),\quad
\var[\widetilde{g}_B(\theta)] = O\left(\frac{1}{\prod_{k=1}^K \binom{n'_k}{d_k}}\right),
\end{equation*}
as $n'=n'_1+\ldots+n'_K\rightarrow +\infty$.
\end{proposition}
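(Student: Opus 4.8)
The plan is to reduce everything to the elementary variance theory of generalized $U$-statistics, treating each of the $q$ coordinates of the gradient estimates as a real-valued $U$-statistic with kernel the corresponding coordinate of $\nabla_{\theta}H(\cdot;\theta)$ (in the sense of Definition~\ref{def:Ustat}) and summing the $q$ coordinate-wise variances at the end, which only costs the fixed multiplicative factor $q$. Throughout I set $C_H:=\E[\,\|\nabla_{\theta}H(X^{(1)}_1,\ldots,X^{(K)}_{d_K};\theta)\|^2\,]$, which is finite under the square-integrability hypothesis on $\nabla_{\theta}H$. The two ingredients I would combine are: the $O(1/N)$ control of the variance of a \emph{complete} generalized $U$-statistic, read off directly from the first Hoeffding representation~\eqref{eq:Hoeffding}; and the conditional-variance identity~\eqref{eq:var_incomp} for the \emph{incomplete} one.

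\emph{The two complete $U$-statistics.} Observe first that $\widetilde{g}_{\mathbf{n}'}(\theta)=U_{\mathbf{n}'}(\nabla_{\theta}H(\cdot;\theta))$. By~\eqref{eq:Hoeffding}, $U_{\mathbf{n}'}$ is an average --- over tuples of permutations of the subsamples --- of copies of the function $V$ built as in~\eqref{eq:Hoeffding} from the kernel $\nabla_{\theta}H(\cdot;\theta)$; this $V$ is itself the average of $N':=\min_{1\leq k\leq K}\lfloor n'_k/d_k\rfloor$ \emph{independent} blocks, each distributed as $\nabla_{\theta}H(X^{(1)}_1,\ldots,X^{(K)}_{d_K};\theta)$. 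Using that the variance of an average of exchangeable (hence equi-variance) terms never exceeds their common variance, one gets
\[
\var\left[\widetilde{g}_{\mathbf{n}'}(\theta)\right]\ \leq\ \var\left[V\right]\ =\ \frac{1}{N'}\,\var\left(\nabla_{\theta}H(X^{(1)}_1,\ldots,X^{(K)}_{d_K};\theta)\right)\ \leq\ \frac{C_H}{N'}.
\]
Under the asymptotic framework~\eqref{asymptotics} applied to $\mathbf{n}'$ one has $N'=\Theta(n')$, so that $\var[\widetilde{g}_{\mathbf{n}'}(\theta)]=O(1/n')=O(1/\sum_{k=1}^{K}n'_k)$, which is the first assertion. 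The very same computation with $\mathbf{n}$ in place of $\mathbf{n}'$ yields $\var[\widehat{g}_{\mathbf{n}}(\theta)]=\var[U_{\mathbf{n}}(\nabla_{\theta}H(\cdot;\theta))]\leq C_H/N=O(1/n)$, with $N=\min_k\lfloor n_k/d_k\rfloor$.

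\emph{The incomplete $U$-statistic.} Conditionally on the family $(\mathbf{X}_I)_{I\in\Lambda}$, the estimate $\widetilde{g}_B(\theta)=\frac{1}{B}\sum_{b=1}^{B}\nabla_{\theta}H(\mathbf{X}_{I_b};\theta)$ is an average of $B$ i.i.d. random vectors, each uniformly distributed over $\{\nabla_{\theta}H(\mathbf{X}_I;\theta):I\in\Lambda\}$ and with conditional mean $\frac{1}{\#\Lambda}\sum_{I\in\Lambda}\nabla_{\theta}H(\mathbf{X}_I;\theta)=\widehat{g}_{\mathbf{n}}(\theta)$. Applying the law of total variance --- the vector-valued counterpart of~\eqref{eq:var_incomp} --- and bounding the conditional variance by the conditional second moment before using the tower property gives
\[
\var\left[\widetilde{g}_B(\theta)\right]\ =\ \frac{1}{B}\,\E\left[\var\left(\nabla_{\theta}H(\mathbf{X}_{I_1};\theta)\mid (\mathbf{X}_I)_{I\in\Lambda}\right)\right]\ +\ \var\left[\widehat{g}_{\mathbf{n}}(\theta)\right]\ \leq\ \frac{C_H}{B}\ +\ O\!\left(\frac{1}{n}\right).
\]
In the regime in which incomplete $U$-statistics are meant to be used, namely $B=O(n)$ (equivalently subsample sizes $n'_k$ of order $n^{1/(d_1+\ldots+d_K)}$, which is exactly what makes $B=\prod_{k}\binom{n'_k}{d_k}$ comparable to the $O(n)$ terms one can afford), the residual contribution $\var[\widehat{g}_{\mathbf{n}}(\theta)]=O(1/n)$ is of order at most $1/B$, whence $\var[\widetilde{g}_B(\theta)]=O(1/B)=O(1/\prod_{k=1}^K\binom{n'_k}{d_k})$, which is the second assertion.

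\emph{Main obstacle.} None of the steps is technically deep; the two points to watch are (a) the passage from the scalar $U$-statistic theory to the vector-valued gradient, which I handle simply by summing the $q$ coordinate-wise variances, and (b) ensuring that it is the $1/B$ term --- the sampling-with-replacement noise --- and not the residual error $\var[\widehat{g}_{\mathbf{n}}(\theta)]=O(1/n)$ of the underlying complete $U$-statistic that dictates the rate of $\var[\widetilde{g}_B(\theta)]$; this is precisely where one invokes that $B=\prod_{k}\binom{n'_k}{d_k}$ is kept of order at most $n$.
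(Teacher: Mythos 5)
Your proof is correct, but it takes a genuinely different route from the paper's. The paper's proof specializes to $K=1$, $d_1=2$, invokes the \emph{second-order} Hoeffding/Hajek decomposition $U_n(H)=\mu(H)+2T_n(H)+W_n(H)$ to compute the variance of the complete $U$-statistic exactly, namely $\var[U_{n'}(H)]=4\sigma_1^2/n'+2\sigma_2^2/(n'(n'-1))$, and then plugs this into the total-variance identity \eqref{eq:var_incomp} for the incomplete version; the extension to general $(K,d_1,\ldots,d_K)$ is only asserted. You instead work coordinate-wise in full generality and content yourself with upper bounds: the \emph{first} Hoeffding representation \eqref{eq:Hoeffding}, combined with the elementary fact that an average of identically distributed terms has variance at most their common variance, gives $\var[\widetilde g_{\mathbf{n}'}(\theta)]\leq C_H/N'$, and the law of total variance gives $\var[\widetilde g_B(\theta)]\leq C_H/B+\var[\widehat g_{\mathbf n}(\theta)]$. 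Since the proposition only claims $O(\cdot)$ rates, this suffices. What the paper's exact computation buys is tightness: it shows that for a non-degenerate kernel the variance of the subsample-based complete $U$-statistic is of order exactly $1/n'$ (driven by $\sigma_1^2>0$), so the separation between the two gradient estimators is real and not an artifact of loose bounding. What your argument buys is generality and, notably, rigor on one point the paper glosses over: for the $1/B$ sampling noise to dominate the residual $\var[\widehat g_{\mathbf n}(\theta)]=O(1/n)$ of the underlying complete $U$-statistic, one needs $1/n=O(1/B)$, i.e. $B=\prod_k\binom{n'_k}{d_k}=O(n)$; the hypothesis $n'_k\ll n_k$ alone does not guarantee this (take $K=1$, $d_1=2$, $n'=n^{0.9}$, for which the true rate is $O(1/n)$, not $O(1/B)$). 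You correctly identify and invoke the regime $B=O(n)$ in which the statement holds and in which incomplete $U$-statistics are meant to be used, whereas the paper's proof passes from \eqref{eq:var_incomp} to the conclusion without comment.
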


Proposition~\ref{prop:var_comp} shows that the convergence rate of $\var[\widetilde{g}_B(\theta)]$ is faster than that of $\var[\widetilde{g}_{\mathbf{n}'}(\theta)]$ except when $K=1$ and $d_1=1$. Thus the expected improvement in objective function at each SGD step is larger when using a gradient estimate in the form of \eqref{eq:gradient_UstatI} instead of \eqref{eq:emp_compgradient}, although both strategies require to average over the same number of terms. This is also supported by the experimental results reported in the next section.



\section{Numerical Experiments}\label{sec:exp}

We show the benefits of the sampling approach promoted in this paper on two applications: metric learning for classification, and model selection in clustering.

\subsection{Metric Learning}

In this section, we focus on the metric learning problem (see Section~\ref{sec:metriclearning}). As done in much of the metric learning literature, we restrict our attention to the family of pseudo-distance functions $D_{\boldsymbol{M}}:\mathbb{R}^d\times\mathbb{R}^d\rightarrow\mathbb{R}_+$ defined as
$$D_{\boldsymbol{M}}(\boldsymbol{x},\boldsymbol{x}') = (\boldsymbol{x}-\boldsymbol{x}')\boldsymbol{M}(\boldsymbol{x}-\boldsymbol{x}')^T,$$
where $\boldsymbol{M}\in\mathbb{S}_+^{d}$, and $\mathbb{S}_+^{d}$ is the cone of $d\times d$ symmetric positive-semidefinite (PSD) matrices.

Given a training sample $\{(\boldsymbol{x}_i,y_i)\}_{i=1}^n$ where $\boldsymbol{x}_i\in\mathbb{R}^d$ and $y_i\in\{1,\dots,C\}$, let $y_{ij}=1$ if $y_i=y_j$ and 0 otherwise for any pair of samples. Given a threshold $b\geq 0$, we define the empirical risk as follows:
\begin{equation}
\label{eq:riskexp}
R_{n}(D_{\boldsymbol{M}})= \frac{2}{n(n-1)}\sum_{1\leq i<j\leq n}\left[y_{ij}(b-D_{\boldsymbol{M}}(\boldsymbol{x}_{i},\boldsymbol{x}_{j}))\right]_+,
\end{equation}
where $[u]_+ = \max(0,1-u)$ is the hinge loss. This risk estimate is convex and was used for instance by \citet{Jin2009a} and \citet{Cao2012a}. Our goal is the find the empirical risk minimizer among our family of distance functions, i.e.:
\begin{equation}
\label{eq:ermexp}
\widehat{\boldsymbol{M}} = \argmin_{\boldsymbol{M}\in\mathbb{S}_+^{d}} R_{n}(D_{\boldsymbol{M}}).
\end{equation}
In our experiments, we use the following two data sets:
\begin{itemize}
\item \textbf{Synthetic data set}: some synthetic data that we generated for illustration. $X$ is a mixture of 10 gaussians in $\mathbb{R}^{40}$ -- each one corresponding to a class -- such that all gaussian means are contained in an subspace of dimension 15 and their shared covariance matrix is proportional to identity with a variance factor such that some overlap is observed. That is, the solution to the metric learning problem should be proportional to the linear projection over the subspace containing the gaussians means. Training and testing sets contain respectively 50,000 and 10,000 observations.
\item \textbf{MNIST data set}: a handwritten digit classification data set which has 10 classes and consists of 60,000 training images and 10,000 test images.\footnote{\url{http://yann.lecun.com/exdb/mnist/}} This data set has been used extensively to benchmark metric learning \citep{Weinberger2009a}. As done by previous authors, we reduce the dimension from 784 to 164 using PCA so as to retain 95\% of the variance, and normalize each sample to unit norm.
\end{itemize}
Note that for both datasets, merely computing the empirical risk \eqref{eq:riskexp} for a given $\boldsymbol{M}$ involves averaging over more than $10^9$ pairs. 

We conduct two types of experiment. In Section~\ref{sec:expone}, we subsample the data before learning and evaluate the performance of the ERM on the subsample. In Section~\ref{sec:expsgd}, we use Stochastic Gradient Descent to find the ERM on the original sample, using subsamples at each iteration to estimate the gradient.

\subsubsection{One-Time Sampling}
\label{sec:expone}

We compare two sampling schemes to approximate the empirical risk:
\begin{itemize}
\item Complete $U$-statistic: $p$ indices are uniformly picked at random in $\{1, \ldots, n\}$. The empirical risk is approximated using any possible pair formed by the $p$ indices, that is $\frac{p(p-1)}{2}$ pairs.
\item Incomplete $U$-statistic: the empirical risk is approximated using $\frac{p(p-1)}{2}$ pairs picked uniformly at random in $\{1, \ldots, n\}^2$.
\end{itemize}

For each strategy, we use a projected gradient descent method in order to solve \eqref{eq:ermexp}, using several values of $p$ and averaging the results over 50 random trials. As the testing sets are large, we evaluate the test risk on 100,000 randomly picked pairs.

\begin{figure}[t]
\centering
\subfigure[Synthetic data set]{\includegraphics[width=0.48\columnwidth]{./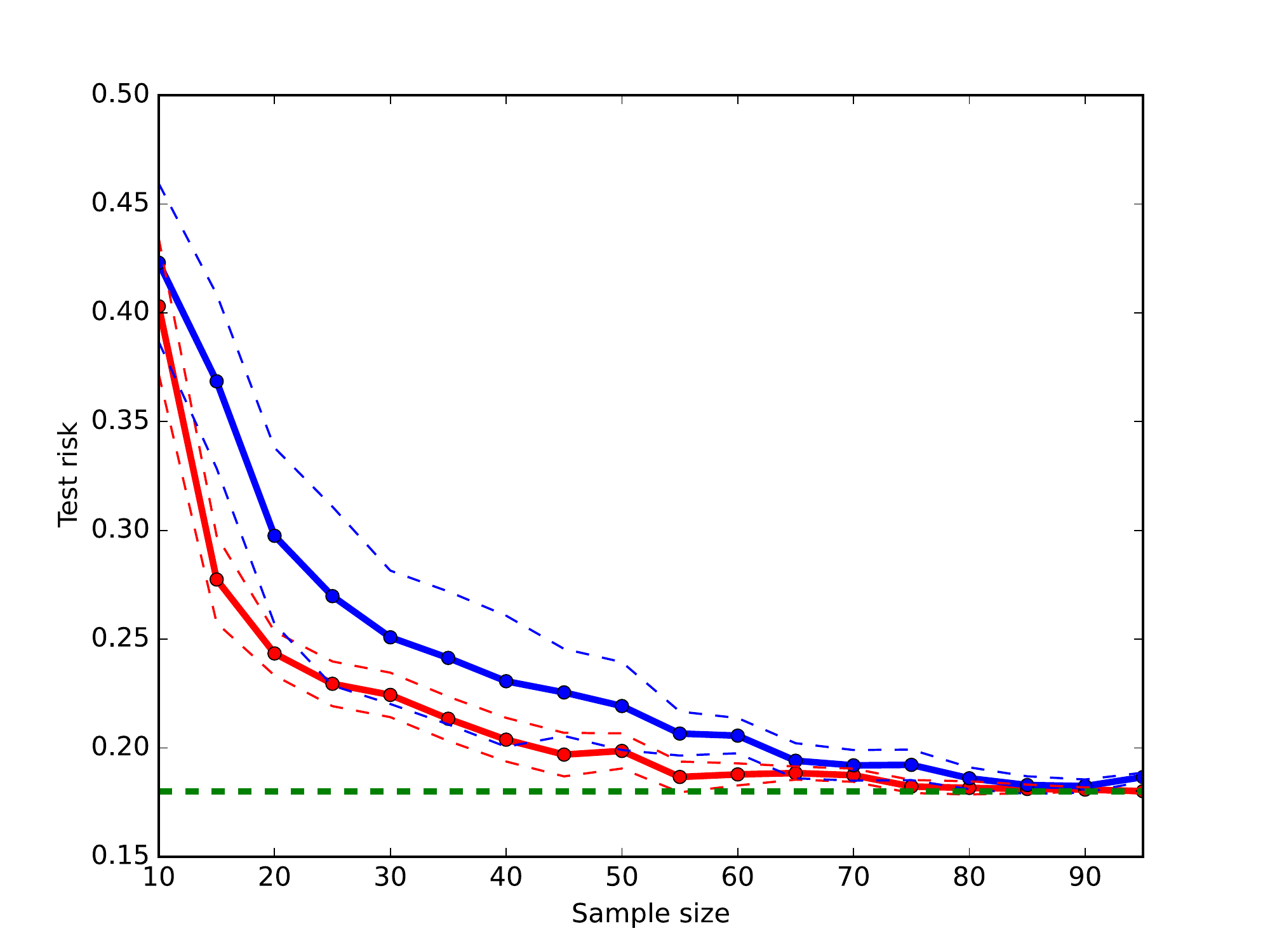}
\label{fig:metric_learning_synthetic}}
\subfigure[MNIST data set]{\includegraphics[width=0.48\columnwidth]{./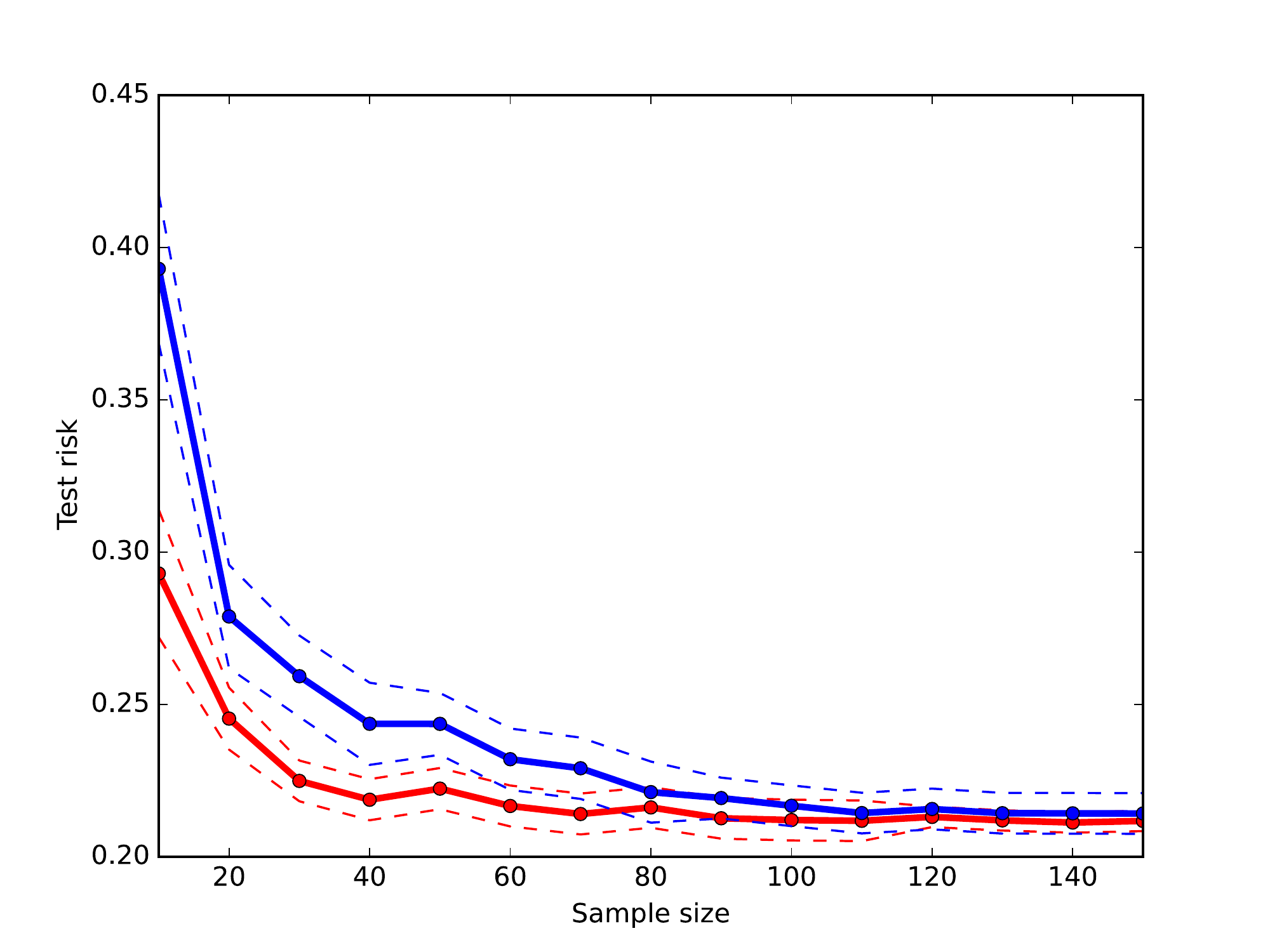} 
\label{fig:metric_learning_mnist}}
 \caption{Test risk with respect to the sample size $p$ of the ERM when the risk is approximated using complete (blue) or incomplete (red) $U$-statistics. Solid lines represent means and dashed ones represent standard deviation. For the synthetic data set, the green dotted line represent the performance of the true risk minimizer.}
 \label{fig:metric_learning1}
\end{figure}

\begin{figure}[t]
\centering
\subfigure[Synthetic data set]{\includegraphics[width=0.48\columnwidth]{./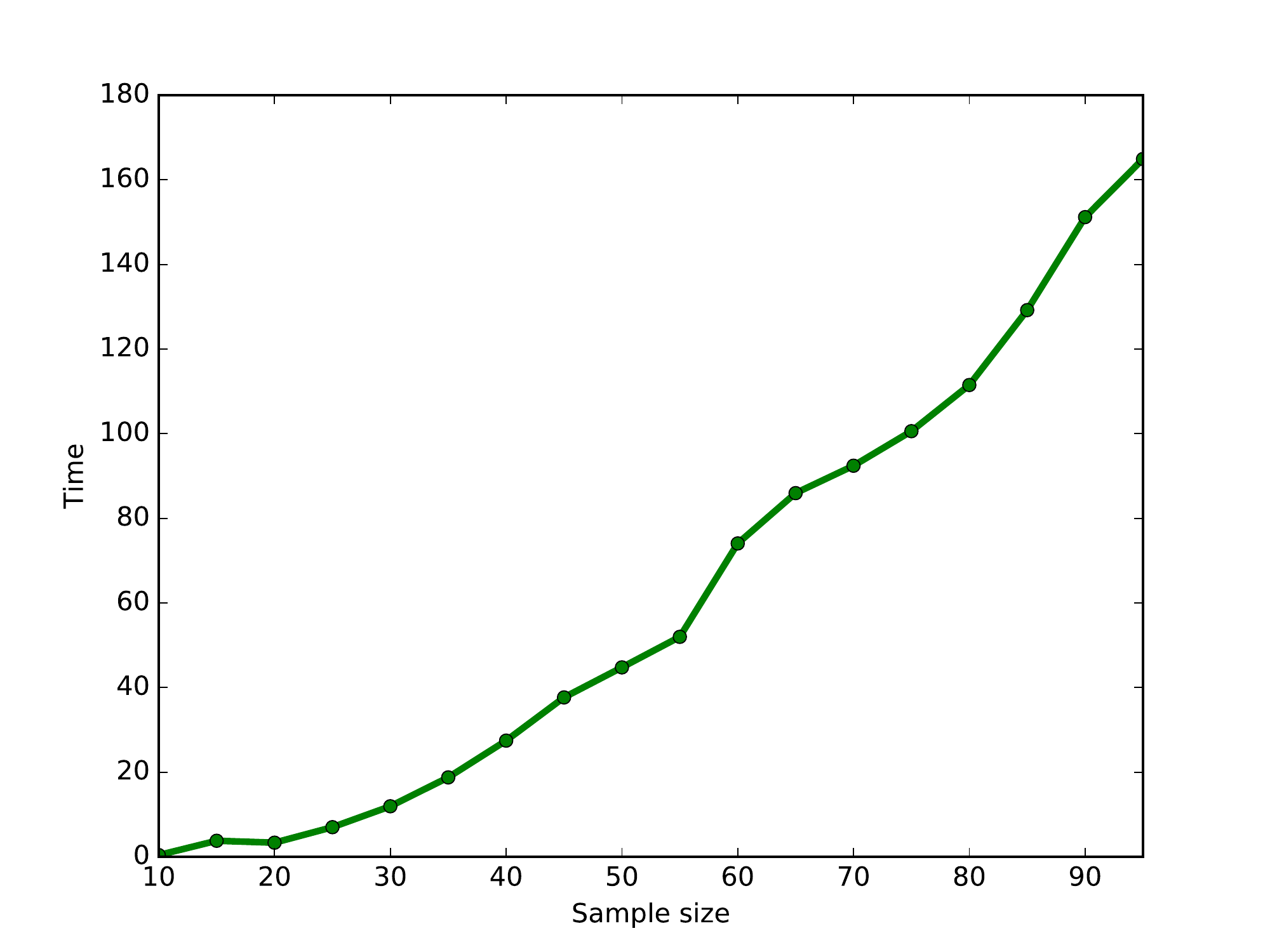}
\label{fig:metric_learning_synthetic_time}}
\subfigure[MNIST data set]{\includegraphics[width=0.48\columnwidth]{./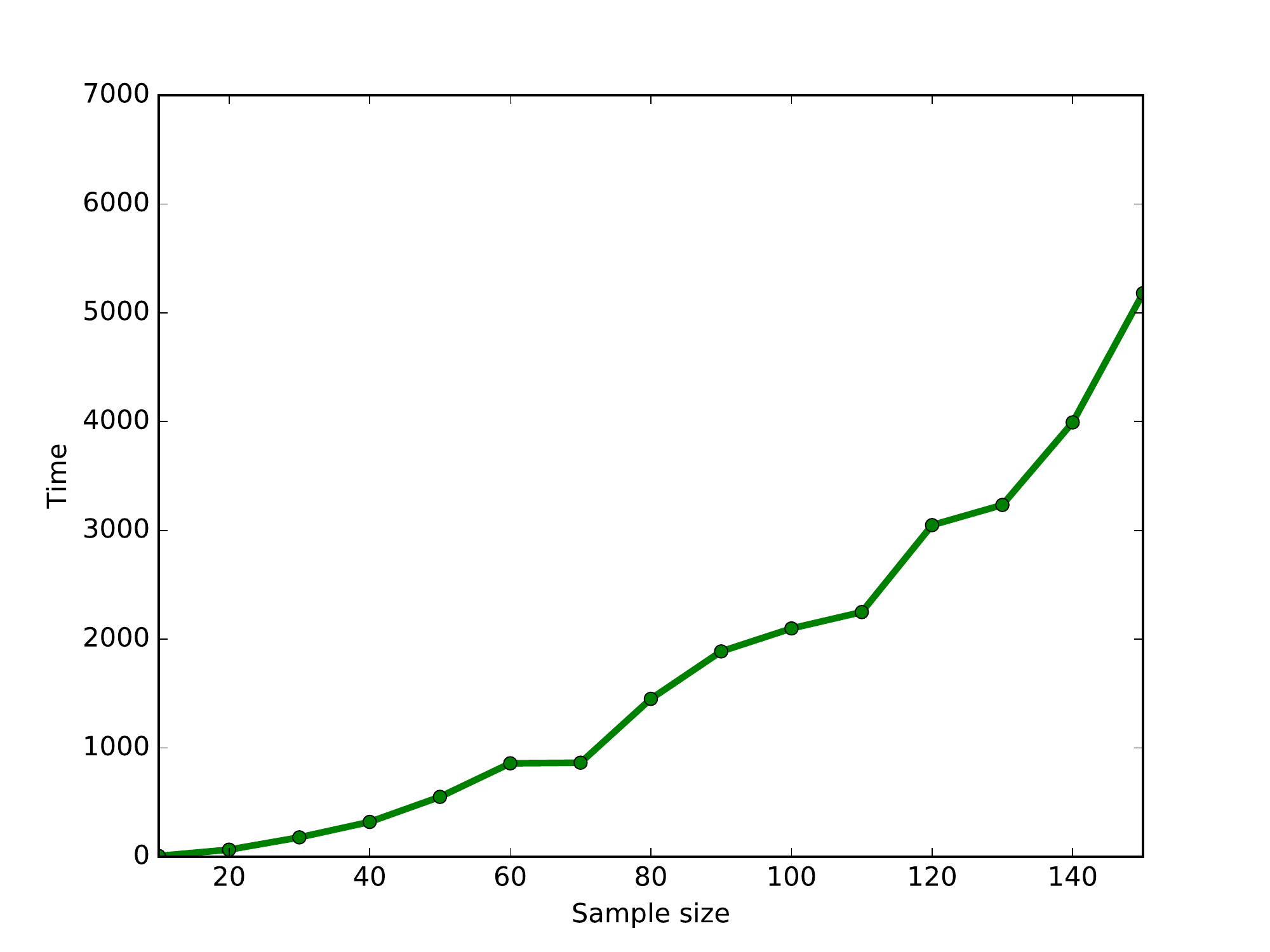} 
\label{fig:metric_learning_mnist_time}}
 \caption{Average training time (in seconds) with respect to the sample size $p$.}
 \label{fig:metric_learning_time}
\end{figure}


Figure~\ref{fig:metric_learning_synthetic} shows the test risk of the ERM with respect to the sample size $p$ for both sampling strategies on the synthetic data set. As predicted by our theoretical analysis, the incomplete $U$-statistic strategy achieves a significantly smaller risk on average. For instance, it gets within 5\% error of the true risk minimizer for $p = 50$, while the complete $U$-statistic needs $p > 80$ to reach the same performance. This represents twice more computational time, as shown in Figure~\ref{fig:metric_learning_synthetic_time} (as expected, the runtime increases roughly quadratically with $p$). The incomplete $U$-statistic strategy also has the advantage of having a much smaller variance between the runs, which makes it more reliable.
The same conclusions hold for the MNIST data set, as can be seen in Figure~\ref{fig:metric_learning_mnist} and Figure~\ref{fig:metric_learning_mnist_time}. 

\subsubsection{Stochastic Gradient Descent}
\label{sec:expsgd}

In this section, we focus on solving the ERM problem \eqref{eq:ermexp} using Stochastic Gradient Descent and compare two approaches (analyzed in Section~\ref{sec:SGD}) to construct a mini-batch at each iteration. The first strategy, SGD-Complete, is to randomly draw (with replacement) a subsample and use the complete $U$-statistic associated with the subsample as the gradient estimate. The second strategy, SGD-Incomplete (the one we promote in this paper), consists in sampling an incomplete $U$-statistic with the same number of terms as in SGD-Complete.

For this experiment, we use the MNIST data set. We set the threshold in \eqref{eq:riskexp} to $b=2$ and the learning rate of SGD at iteration $t$ to $\eta_t=1/(\eta_0t)$ where $\eta_0\in\{1,2.5,5,10,25,50\}$. To reduce computational cost, we only project our solution onto the PSD cone at the end of the algorithm, following the ``one projection'' principle used by \citet{Chechik2010a}. We try several values $m$ for the mini-batch size, namely $m\in\{10,28,55,105,253\}$.\footnote{For each $m$, we can construct a complete $U$-statistic from $n'$ samples with $n'(n'-1)/2=m$ terms.}
For each mini-batch size, we run SGD for 10,000 iterations and select the learning rate parameter $\eta_0$ that achieves the minimum risk on 100,000 pairs randomly sampled from the training set. We then estimate the generalization risk using 100,000 pairs randomly sampled from the test set.

\begin{figure}[t]
\centering
\subfigure{\includegraphics[width=0.33\columnwidth]{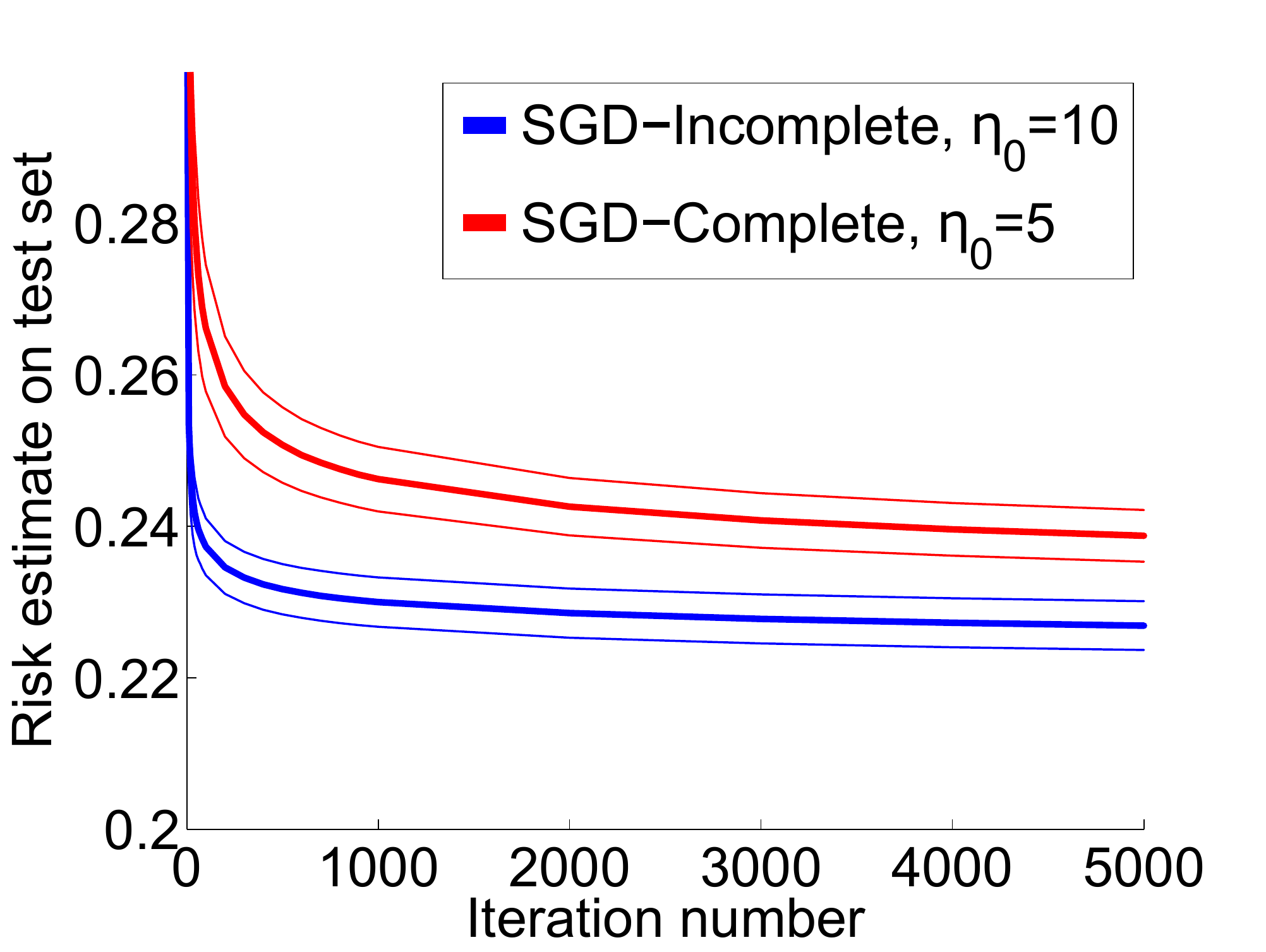}
}
\hspace*{-0.3cm}\subfigure{\includegraphics[width=0.33\columnwidth]{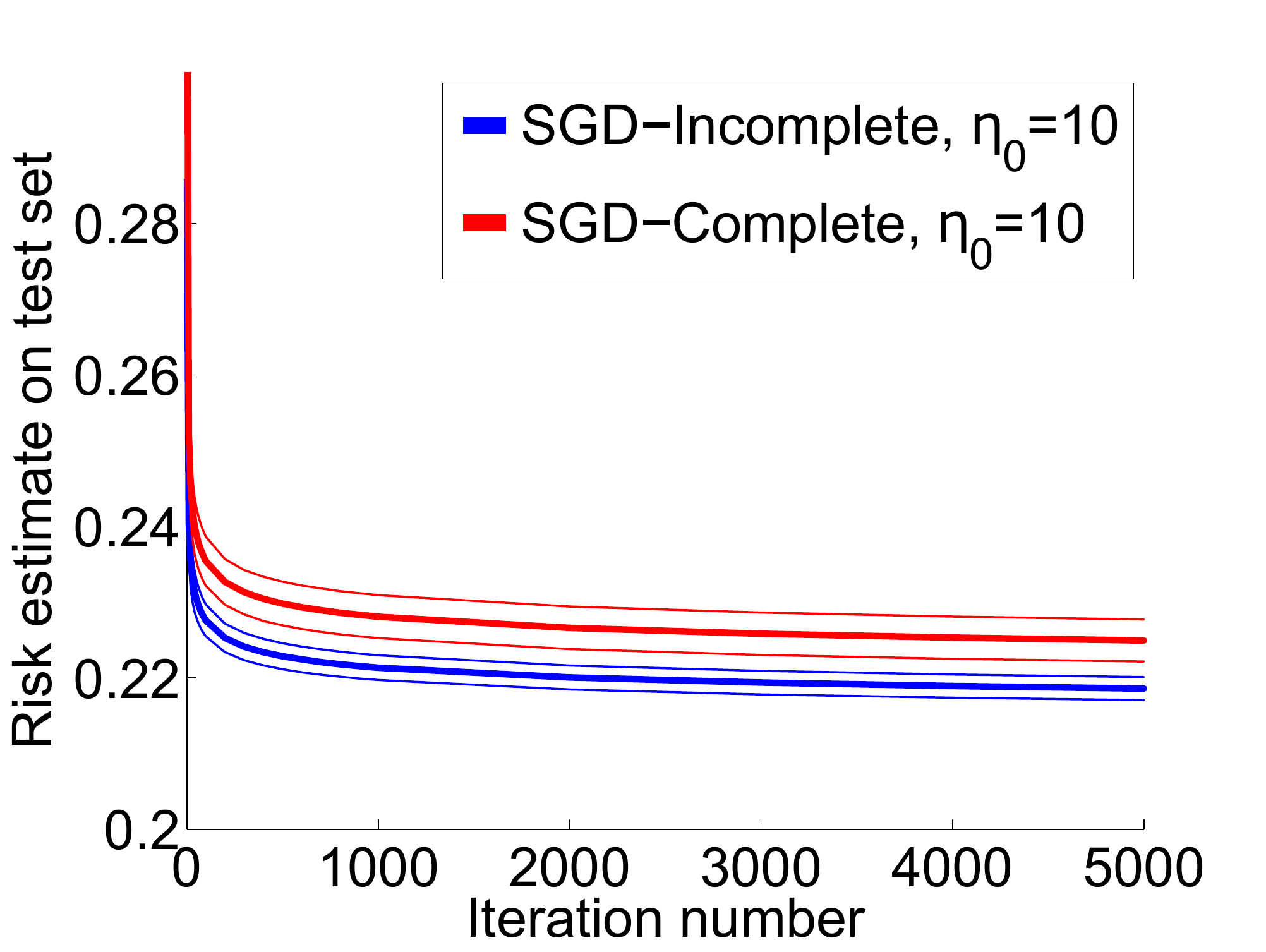}
}
\hspace*{-0.3cm}\subfigure{\includegraphics[width=0.33\columnwidth]{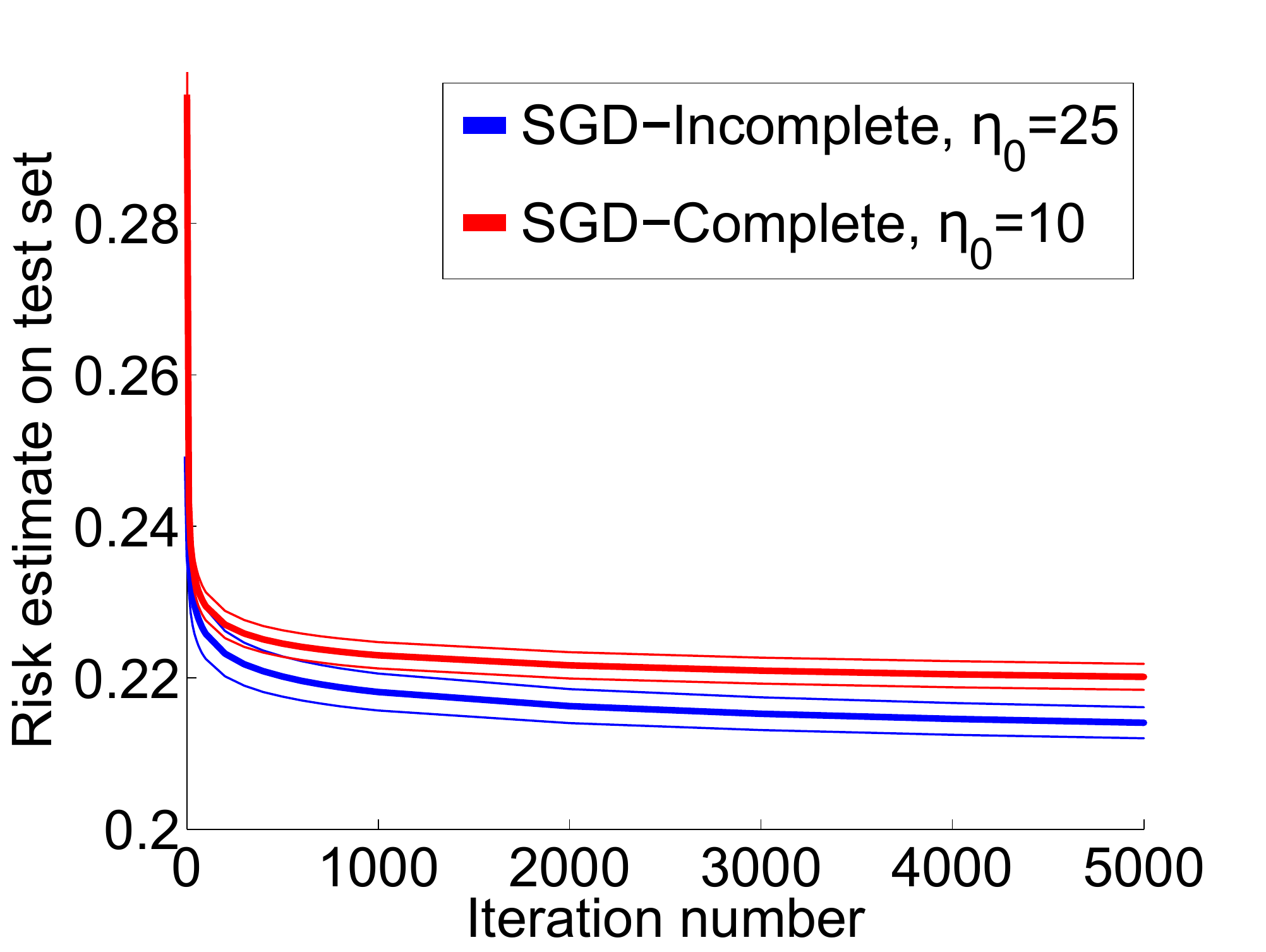}
}
\addtocounter{subfigure}{-3}
\subfigure[$m=10$]{\includegraphics[width=0.33\columnwidth]{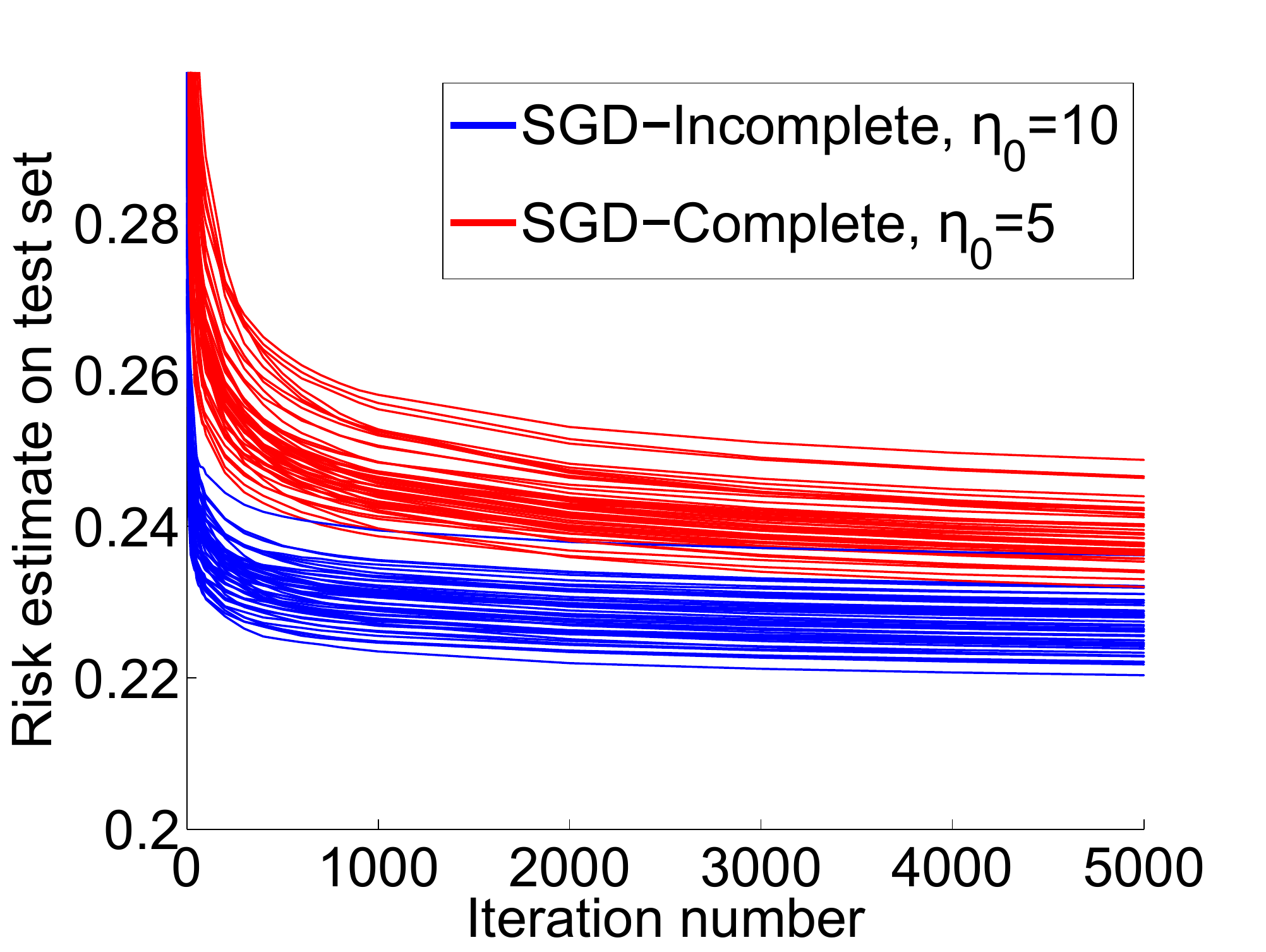}
\label{m10ind}}
\hspace*{-0.3cm}\subfigure[$m=55$]{\includegraphics[width=0.33\columnwidth]{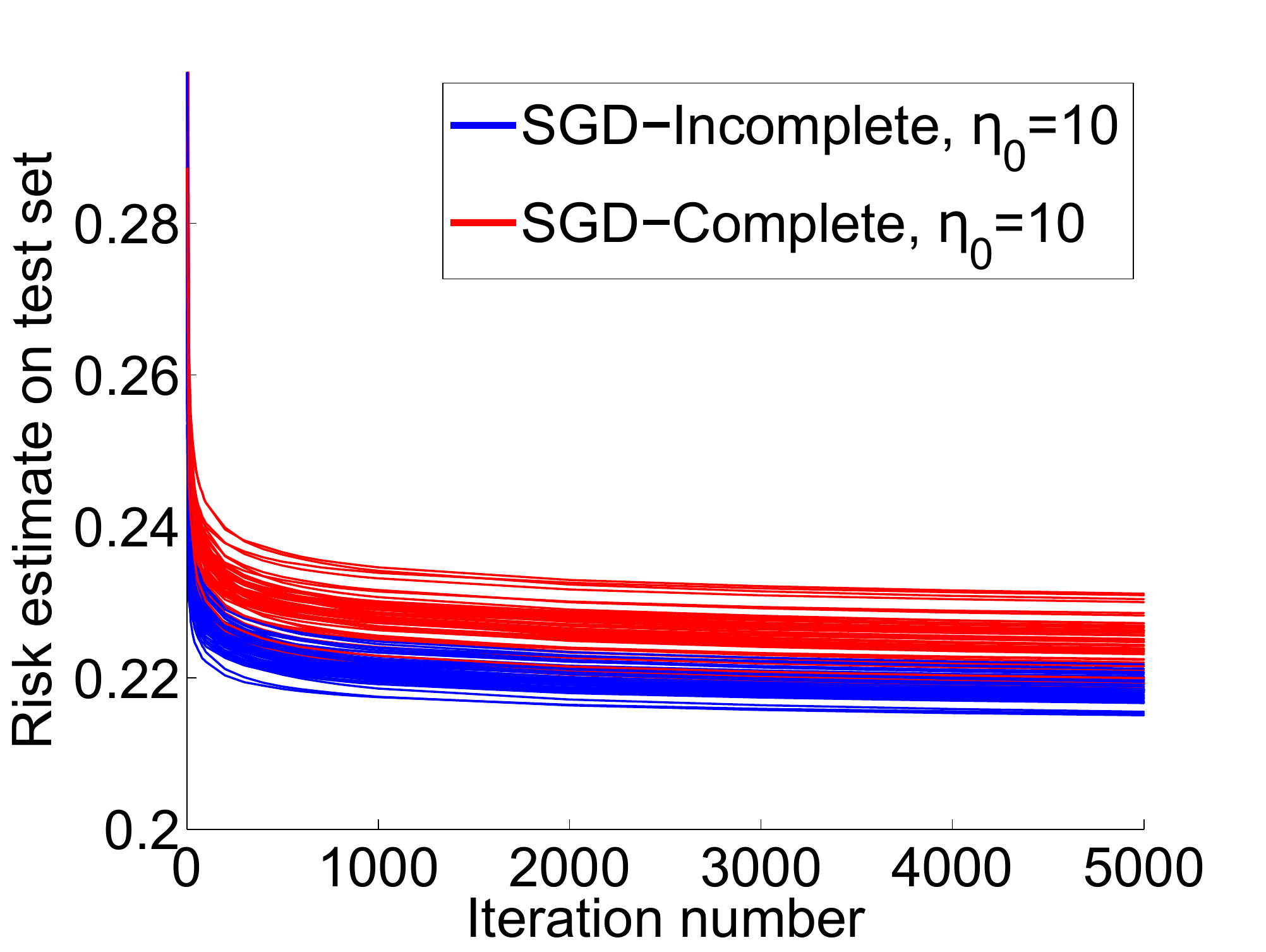} 
\label{m55ind}}
\hspace*{-0.3cm}\subfigure[$m=253$]{\includegraphics[width=0.33\columnwidth]{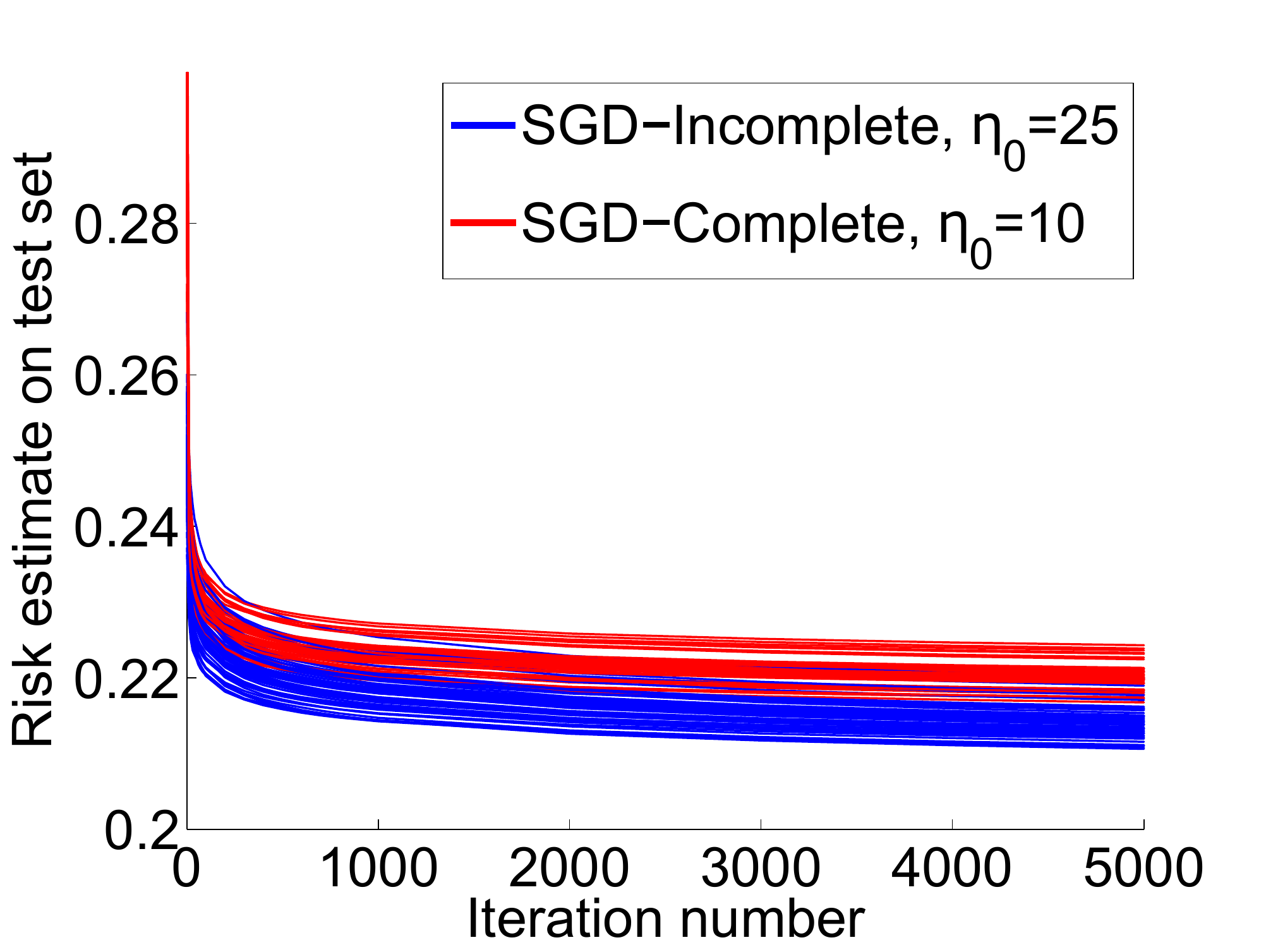} 
\label{m253ind}}
 \caption{SGD results on the MNIST data set for various mini-batch size $m$. The top row shows the means and standard deviations over 50 runs, while the bottom row shows each run separately.}
 \label{fig:sgd_mnist}
\end{figure}

For all mini-batch sizes, SGD-Incomplete achieves significantly better test risk than SGD-Complete. Detailed results are shown in Figure~\ref{fig:sgd_mnist} for three mini-batch sizes, where we plot the evolution of the test risk with respect to the iteration number.\footnote{We point out that the figures look the same if we plot the runtime instead of the iteration number. Indeed, the time spent on computing the gradients (which is the same for both variants) largely dominates the time spent on the random draws.} We make several comments. First, notice that the best learning rate is often larger for SGD-Incomplete than for SGD-Complete ($m=10$ and $m=253$). This confirms that gradient estimates from the former strategy are generally more reliable. This is further supported by the fact that even though larger learning rates increase the variance of SGD, in these two cases SGD-Complete and SGD-Incomplete have similar variance. On the other hand, for $m=55$ the learning rate is the same for both strategies. SGD-Incomplete again performs significantly better on average and also has smaller variance. Lastly, as one should expect, the gap between SGD-Complete and SGD-Incomplete reduces as the size of the mini-batch increases. Note however that in practical implementations, the relatively small mini-batch sizes (in the order of a few tens or hundreds) are generally those which achieve the best error/time trade-off.

\subsection{Model Selection in Clustering}

In this section, we are interested in the clustering problem described in Section~\ref{sec:clustering}. Specifically, let $X_1,\; \ldots,\; X_n\in\mathbb{R}^d$ be the set of points to be clustered. Let the clustering risk associated with a partition $\mathcal{P}$ into $M$ groups $\mathcal{C}_1,\dots,\mathcal{C}_M$ be:
\begin{equation}\label{eq:exp_clust_risk}
\widehat{W}_{n}(\Pcal)=\frac{2}{n(n-1)}\sum_{m=1}^{M}\sum_{1\leq i<j \leq n}D(X_{i},X_{j})\cdot\mathbb{I}\{(X_i,X_j)\in \mathcal{C}_m^2  \}.
\end{equation}
In this experiment, given a set of candidate partitions, we want to perform model selection by picking the partition which minimizes the risk \eqref{eq:exp_clust_risk} plus some term penalizing the complexity of the partition. When the number of points $n$ is large, the complete risk is very expensive to compute. Our strategy is to replace it with an incomplete approximation with much fewer terms. Like in the approach theoretically investigated in  Section~\ref{subsec:modelselect}, the goal here is to show that using the incomplete approximation instead of the complete version as the goodness-of-fit measure in a complexity penalized criterion does not damage the selection, while reducing the computational cost. For simplicity, the complexity penalty we use below is not of the same type as the structural {\sc VC} dimension-based penalty considered in Theorem \ref{thm:selec}, but we will see that the incomplete approximation is very accurate and can thus effectively replace the complete version regardless of the penalty used.

The experimental setup is as follows. We used the forest cover type data set,\footnote{\url{https://archive.ics.uci.edu/ml/datasets/Covertype}} which is popular to benchmark clustering algorithms \citep[see for instance][]{Kanungo2004a}. To be able to evaluate the complete risk, we work with $n=5,000$ points subsampled at random from the entire data set of 581,012 points in dimension 54. We then generated a hierarchical clustering of these points using agglomerative clustering with Ward's criterion \citep{Ward1963a} as implemented in the \texttt{scikit-learn} Python library \citep{Pedregosa2011a}. This defines $n$ partitions $\mathcal{P}_1,\dots,\mathcal{P}_n$ where $\mathcal{P}_m$ consists of $m$ clusters ($\mathcal{P}_1$ corresponds to a single cluster containing all points, while in $\mathcal{P}_n$ each point has its own cluster).

\begin{figure}[t]
\centering
\subfigure[Risk]{\includegraphics[width=0.48\columnwidth]{./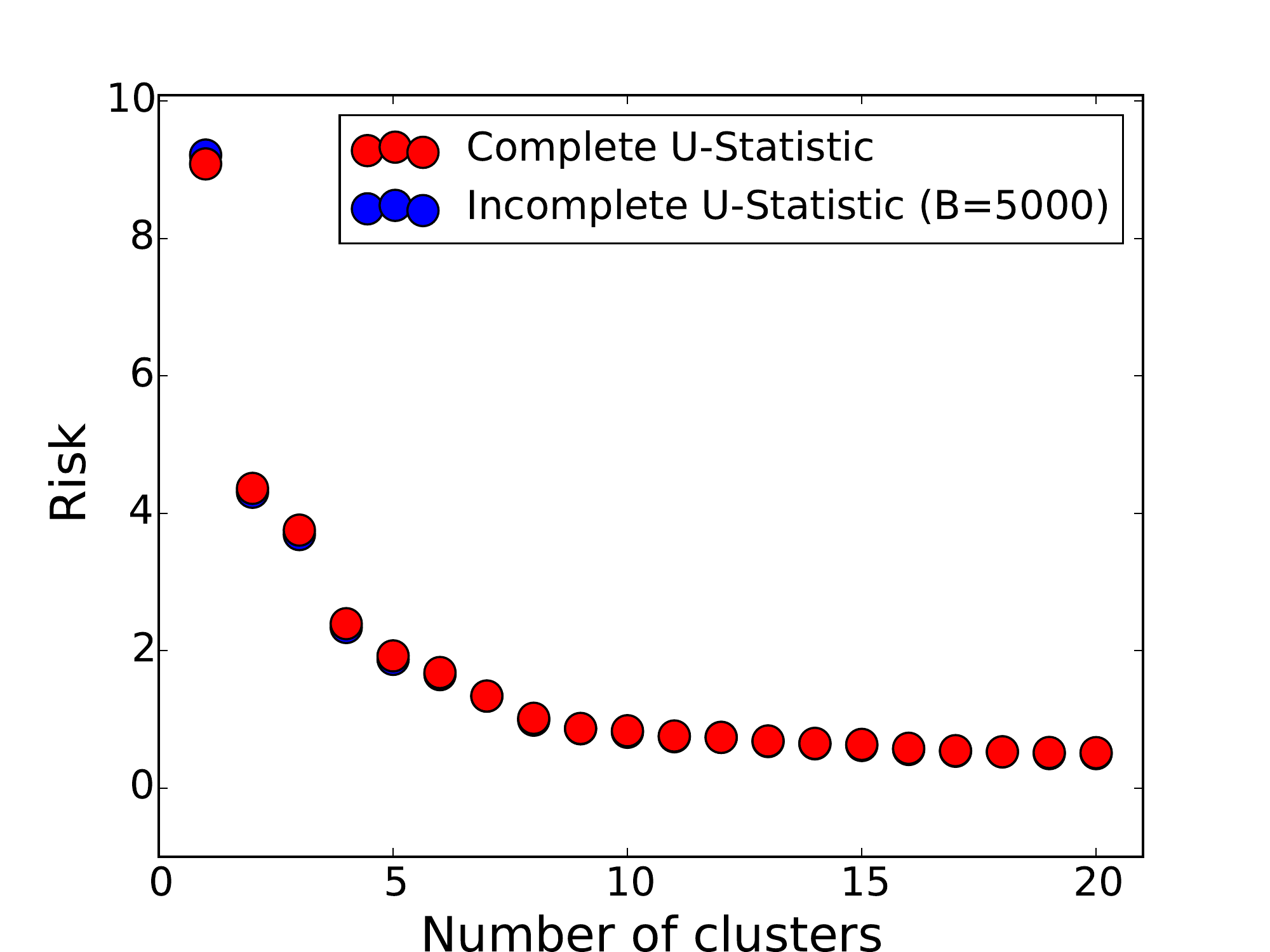}
\label{fig:model_select_risk}}
\subfigure[Penalized risk]{\includegraphics[width=0.48\columnwidth]{./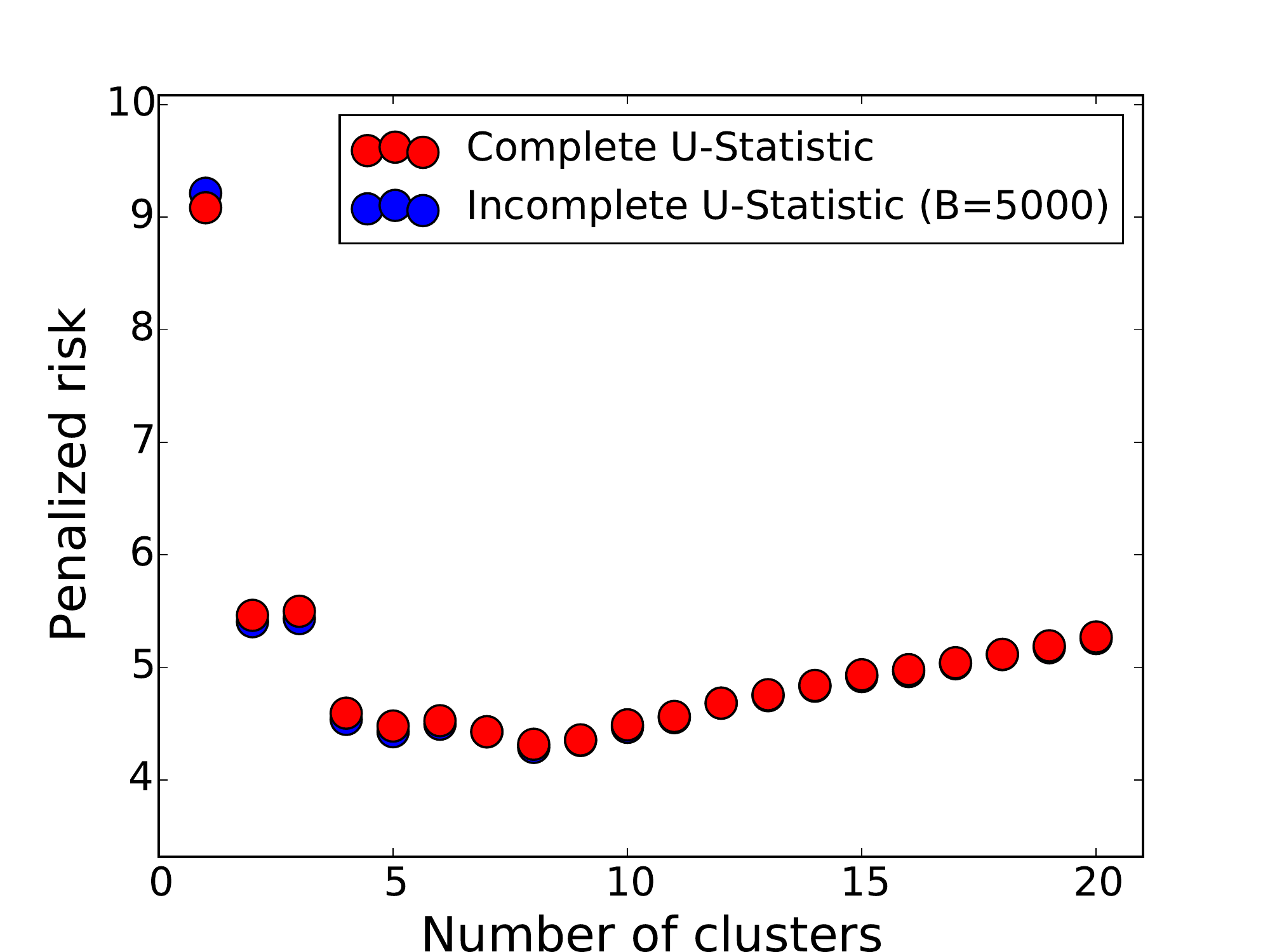} 
\label{fig:model_select_risk_pen}}
 \caption{Clustering model selection results on the forest cover type data set. Figure~\ref{fig:model_select_risk} shows the risk (complete and incomplete with $B=5,000$ terms) for the first 20 partitions, while Figure~\ref{fig:model_select_risk_pen} shows the penalized risk for $c=1.1$.}
 \label{fig:model_select}
\end{figure}

For each partition size, we first compare the value of the complete risk \eqref{eq:exp_clust_risk} with $n(n-1)= 24,995,000$ terms with that of an incomplete version with only $B=n=5,000$ pairs drawn at random. As shown in Figure~\ref{fig:model_select_risk}, the incomplete $U$-statistic is a very accurate approximation of the complete one, despite consisting of 5000 times less terms. It will thus lead to similar results in model selection. To illustrate, we use a simple penalty term of the form $\text{pen}(\mathcal{P}_m)=c\cdot\log(m)$ where $c$ is a scaling constant. Figure~\ref{fig:model_select_risk_pen} shows that both selection criteria choose the same model $\mathcal{P}_8$. Performing this model selection over $\mathcal{P}_1,\dots,\mathcal{P}_{20}$ took about 66 seconds for the complete $U$-statistic, compared to only 0.1 seconds for the incomplete version.\footnote{The $n\times n$ distance matrix was precomputed before running the agglomerative clustering algorithm. The associated runtime is thus not taken into account in these timing results.}

Finally, we generated 100 incomplete $U$-statistics with different random seeds ; all of them correctly identified $\mathcal{P}_8$ as the best model. Using $B=5,000$ pairs is thus sufficient to obtain reliable results with an incomplete $U$-statistic for this data set. In contrast, the complete $U$-statistics based on a subsample (leading to the same number of pairs) selected the correct model in only 57\% of cases.


\section{Conclusion}
\label{sec:conclu}

In a wide variety of statistical learning problems, $U$-statistics are natural estimates of the risk measure one seeks to optimize. As the sizes of the samples increase, the computation of such functionals involves summing a rapidly exploding number of terms and becomes numerically unfeasible. In this paper, we argue that for such problems, \textit{Empirical Risk Minimization} can be implemented using statistical counterparts of the risk based on much less terms (picked randomly by means of sampling with replacement), referred to as \textit{incomplete $U$-statistics}. Using a novel deviation inequality, we have shown that this approximation scheme does not deteriorate the learning rates, even preserving fast rates in certain situations where they are proved to occur. Furthermore, we have extended these results to $U$-statistics based on different sampling schemes (Bernoulli sampling, sampling without replacement) and shown how such functionals can be used for the purpose of model selection and for implementing ERM iterative procedures based on stochastic gradient descent. Beyond theoretical rate bounds, the efficiency of the approach we promote is illustrated by several numerical experiments.


\paragraph{Acknowledgements} This work is supported by the Chair ``Machine Learning for Big Data'' of T\'el\'ecom ParisTech, and was conducted while A. Bellet was affiliated with T\'el\'ecom ParisTech. The authors are grateful to the reviewers for their careful reading of the paper, which permitted to improve significantly the presentation of the results.


\appendix

\section{Proof of Proposition \ref{prop:Uproc}} 
\label{app:propUproc}
  Set $N=\min\{ \lfloor n_1/d_1\rfloor,\ldots,
  \lfloor n_K/d_K \rfloor\}$ and let
  \begin{multline*}
  V_{H}\left(X^{(1)}_1,\;\ldots,\; X^{(1)}_{n_1},\;\ldots,\; X^{(K)}_1,\;\ldots,\; X^{(K)}_{n_K}\right)=\frac{1}{N}\Big[H\left(X^{(1)}_1,\;\ldots,\; X^{(1)}_{d_1},\;\ldots,\; X^{(K)}_1,\;\ldots,\; X^{(K)}_{d_K}\right)\\
  +H\left(X^{(1)}_{d_1+1},\;\ldots,\; X^{(1)}_{2d_1},\;\ldots,\; X^{(K)}_{d_K+1},\;\ldots,\; X^{(K)}_{2d_K}\right)  +\ldots\\
   +H\left(X^{(1)}_{N d_1-d_1+1},\;\ldots,\; X^{(1)}_{N d_1},\;\ldots,\; X^{(K)}_{N d_K-d_K+1},\;\ldots,\; X^{(K)}_{N d_K}\right)\Big],
  \end{multline*}
  for any $H\in \mathcal{H}$
  Recall that the $K$-sample $U$-statistic $U_{
  \mathbf{n}}(H)$ can be expressed as
  \begin{equation}
  U_{\mathbf{n}}(H)=\frac{1}{n_1!\cdots n_K!}\sum_{\sigma_1\in \mathfrak{S}_{n_1},\;\ldots,\; \sigma_K\in \mathfrak{S}_{n_K}}V_H\left(X^{(1)}_{\sigma_1(1)},\; \ldots,\; X^{(1)}_{\sigma_1(n_1)},\;\ldots,\;  X^{(K)}_{\sigma_K(1)},\; \ldots,\; X^{(K)}_{\sigma_K(n_K)} \right),
  \end{equation}
  where $\mathfrak{S}_m$ denotes the symmetric group of order $m$ for any $m\geq 1$.
  This representation as an average of sums of $N$ independent terms is known as the (first) Hoeffding's decomposition, see \cite{Hoeffding48}. Then, using Jensen's inequality in particular, one may easily show that, for any nondecreasing convex function $\psi:\mathbb{R}_+\rightarrow \mathbb{R}$, we have:
  \begin{equation}\label{eq:tool1}
  \mathbb{E}\left[ \psi\left(\sup_{H\in \mathcal{H}}\left\vert U_{\mathbf{n}}(\bar{H})  \right\vert  \right)  \right]\leq
  \mathbb{E}\left[ \psi\left(\sup_{H\in \mathcal{H}}\left\vert V_{\bar{H}}(X^{(1)}_1,\;\ldots,\; X^{(1)}_{n_1},\;\ldots,\; X^{(K)}_1,\;\ldots,\; X^{(K)}_{n_K})  \right\vert  \right)  \right],
  \end{equation}
  where we set $\bar{H}=H-\mu(H)$ for all $H\in \mathcal{H}$.
  Now, using standard symmetrization and randomization arguments (see \cite{GineZinn} for instance) and \eqref{eq:tool1}, we obtain that
  \begin{equation}\label{eq:tool2}
  \mathbb{E}\left[ \psi\left(\sup_{H\in \mathcal{H}}\left\vert U_{\mathbf{n}}(\bar{H})  \right\vert  \right)  \right]\leq \mathbb{E}\left[ \psi\left(2\mathcal{R}_{N}\right) \right],
  \end{equation}
  where
  $$
  \mathcal{R}_{N}=\sup_{H\in\mathcal{H}}\frac{1}{N}\sum_{l=1}^{N}\epsilon_l
  H\left(X^{(1)}_{(l-1)d_1+1},\;\ldots,\; X^{(1)}_{ld_1},\;\ldots,\; X^{(K)}_{(l-1)d_K+1},\;\ldots,\; X^{(K)}_{ld_K}\right),
  $$
  is a Rademacher average based on the Rademacher chaos $\epsilon_1,\; \ldots,\; \epsilon_{N}$ (independent random symmetric sign variables), independent from the $X^{(k)}_i$'s. We now apply the bounded difference inequality (see \cite{McDiarmid}) to the functional $\mathcal{R}_{N}$, seen as a function of the i.i.d. random variables $(\epsilon_l, X^{(1)}_{(l-1)d_1+1},\;\ldots,\; X^{(1)}_{ld_1},\;\ldots,\; X^{(K)}_{(l-1)d_K+1},\;\ldots,\; X^{(K)}_{ld_K})$, $1\leq l\leq N$: changing any of these random variables change the value of $\mathcal{R}_{N}$ by at most $\mathcal{M}_{\mathcal{H}}/N$. One thus obtains from \eqref{eq:tool2} with $\psi(x)=\exp(\lambda x)$, where $\lambda>0$ is a parameter which shall be chosen later, that:
  \begin{equation}
  \mathbb{E}\left[ \exp\left(\lambda\sup_{H\in \mathcal{H}}\left\vert U_{\mathbf{n}}(\bar{H})  \right\vert  \right)  \right]\leq \exp \left(2\lambda\mathbb{E}[\mathcal{R}_{N}] +\frac{\mathcal{M}^2_{\mathcal{H}}\lambda^2}{4N} \right).
  \end{equation}
  Applying Chernoff's method, one then gets:
  \begin{equation}\label{eq:tool3}
  \mathbb{P}\left\{ \sup_{H\in \mathcal{H}}\left\vert U_{\mathbf{n}}(\bar{H}) \right\vert>\eta \right\} \leq \exp\left(-\lambda\eta +2\lambda\mathbb{E}[\mathcal{R}_{N}]+\frac{\mathcal{M}^2_{\mathcal{H}}\lambda^2}{4N}\right).
  \end{equation}
  Using the bound (see Eq. (6) in \cite{BBL05} for instance)
  $$
  \mathbb{E}[\mathcal{R}_{N}]\leq \mathcal{M}_{\mathcal{H}}\sqrt{\frac{2V\log(1+N)}{N}}
  $$
  and taking $\lambda=2N (\eta-2\mathbb{E}[\mathcal{R}_{N}])/\mathcal{M}^2_{\mathcal{H}}$ in \eqref{eq:tool3}, one finally establishes the desired result. 
%

\section{Proof of Theorem \ref{thm:main}}
\label{app:thmmain}

For convenience, we introduce the random sequence $%
\zeta=((\zeta_k(I))_{I\in \Lambda})_{1\leq k\leq B}$, where $%
\zeta_k(I)$ is equal to $1$ if the tuple $I=(I_1,\; \ldots,\; I_K)$ has
been selected at the $k$-th draw and to $0$ otherwise: the $\zeta_k$'s
are i.i.d. random vectors and, for all $(k,I)\in\{1,\;\ldots,\; B\}\times
\Lambda$, the r.v. $\zeta_k(I)$ has a Bernoulli distribution with
parameter $1/\#\Lambda$. We also set $\mathbf{X}_I=(\mathbf{X}%
^{(1)}_{I_1},\; \ldots,\; \mathbf{X}^{(K)}_{I_K})$ for any $I$ in $\Lambda$.
Equipped with these notations, observe first that one may write: $\forall
B\geq 1$, $\forall \mathbf{n}\in \mathbb{N}^{*K}$, 
\begin{equation*}
\widetilde{U}_B(H)-U_{\mathbf{n}}(H)=\frac{1}{B}\sum_{k=1}^B \mathcal{Z}%
_k(H),
\end{equation*}
where $\mathcal{Z}_k(H)=\sum_{I\in \Lambda} ( \zeta_k(I)-1/\# \Lambda )H(%
\mathbf{X}_I)$ for any $(k,I)\in\{1,\;\ldots,\; B\}\times \Lambda$. It
follows from the independence between the $\mathbf{X}_I$'s and the $%
\zeta(I)$'s that, for all $H\in \mathcal{H}$, conditioned upon the $%
\mathbf{X}_I$'s, the variables $\mathcal{Z}_1(H),\; \ldots,\; \mathcal{Z}%
_B(H)$ are independent, centered and almost-surely bounded by $2\mathcal{M}_{%
\mathcal{H}}$ (notice that $\sum_{I\in\Lambda}\zeta_k(I)=1$ for all $%
k\geq1$). By virtue of Sauer's lemma, since $\mathcal{H}$ is a {\sc VC} major class with finite {\sc VC} dimension $V$, we have, for fixed $\mathbf{X}_I$'s:
$$
\#\{(H(\mathbf{X}_I))_{I\in \Lambda}:\; H\in \mathcal{H}\}\leq (1+\#\Lambda)^V.
$$
Hence, conditioned upon the $\mathbf{X}_I$'s, using the union bound and next Hoeffding's inequality applied to the independent sequence $\mathcal{Z}_1(H),\;\ldots,\; \mathcal{Z}_B(H)$, for
all $\eta>0$, we obtain that: 
\begin{eqnarray*}
\mathbb{P}\left\{ \sup_{H\in \mathcal{H}} \left\vert \widetilde{U}_B(H)-U_{%
\mathbf{n}}(H) \right\vert>\eta\;\mid\; (\mathbf{X}_I)_{I\in \Lambda} \right\}
&\leq& \mathbb{P}\left\{\sup_{H\in \mathcal{H}} \left\vert \frac{1}{B}\sum_{k=1}^B\mathcal{Z}%
_k(H) \right\vert >\eta\; {\mid}\; (\mathbf{X}_I)_{I\in \Lambda}\right\} \\
&\leq&  2(1+\#\Lambda)^Ve^{-B\eta^2/\left(2\mathcal{M}^2_{\mathcal{H}}\right)}.
\end{eqnarray*}
Taking the expectation, this proves the first assertion of the theorem. Notice that this can be formulated: for any $\delta\in (0,1)$, we have with probability at least $1-\delta$:
\begin{equation*}
 \sup_{H\in \mathcal{H}} \left\vert \widetilde{U}_B(H)-U_{
\mathbf{n}}(H) \right\vert\leq \mathcal{M}_{\mathcal{H}}\times
\sqrt{2\frac{V\log(1+\# \Lambda)+\log (2/\delta)}{B}}.
\end{equation*}
 
 Turning to the second part of the theorem, it straightforwardly results from the first part combined with Proposition \ref{prop:Uproc}.

\section{Proof of Corollary \ref{cor}}\label{subsec:proofcor}

Assertion $(i)$ is a direct application of Assertion $(ii)$ in Theorem \ref{thm:main} combined with the bound
$\mu(\widehat{H}_{B}) - \inf_{H\in \mathcal{H}}   \mu(H)  \leq 2 \sup_{H \in \mathcal{H}}\vert \widetilde{U}_{B}(H) -  \mu(H)\vert$.

Turning next to Assertion $(ii)$, observe that by triangle inequality we have:
\begin{equation}\label{eq:triangular}
\mathbb{E}\left[ \sup_{H\in \mathcal{H}_m}\vert \widetilde{U}_B(H)-\mu(H)  \vert  \right]\leq \mathbb{E}\left[ \sup_{H\in \mathcal{H}_m}\vert \widetilde{U}_B(H)-U_{\mathbf{n}}(H)  \vert  \right]+\mathbb{E}\left[ \sup_{H\in \mathcal{H}_m}\vert U_{\mathbf{n}}(H)-\mu(H)  \vert  \right].
\end{equation}
The same argument as that used in Theorem \ref{thm:main} (with $\psi(u)=u$ for any $u\geq 0$) yields a bound for the second term on the right hand side of Eq. \eqref{eq:triangular}:
\begin{equation}\label{eq:expec1}
\mathbb{E}\left[ \sup_{H\in \mathcal{H}}\vert U_{\mathbf{n}}(H)-\mu(H)  \vert  \right]\leq 2\mathcal{M}_{\mathcal{H}}\sqrt{\frac{2V\log (1+N)}{N}}.
\end{equation}
The first term can be controlled by means of the following lemma, whose proof can be found for instance in \citet[][Lemmas 1.2 and 1.3]{Gabor}. 
\begin{lemma} The following assertions hold true.
\begin{itemize}
\item[(i)] Hoeffding's lemma. Let $Z$ be an integrable r.v. with mean zero such that $a\leq Z \leq b$ almost-surely. Then, we have: $\forall s>0$
$$
\mathbb{E}[\exp(sZ)]\leq \exp\left(s^2(b-a)^2/8\right).
$$
\item[(ii)] Let $M\geq 1$ and $Z_1,\; \ldots,\; Z_M$ be real valued random variables. Suppose that there exists $\sigma>0$ such that $\forall s \in \mathbb{R}$: $\mathbb{E}[\exp(sZ_i)]\leq e^{s^2\sigma^2/2}$ for all $i\in \{1,\; \ldots,\; M \}$. Then, we have:
\begin{equation}
\mathbb{E}\left[ \max_{1\leq i \leq M}\vert Z_i\vert \right]\leq \sigma \sqrt{2\log (2M)}.
\end{equation}
\end{itemize}
\end{lemma}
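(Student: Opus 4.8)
The plan is to prove the two assertions by the classical arguments underlying Hoeffding-type bounds; both are standard (and, as the text notes, are borrowed from \cite{Gabor}), so the work lies purely in assembling elementary steps rather than in any new idea.

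For assertion $(i)$, I would exploit the convexity of $x\mapsto e^{sx}$. Since $a\le Z\le b$ almost surely, write $Z=\frac{b-Z}{b-a}\,a+\frac{Z-a}{b-a}\,b$ as a convex combination of $a$ and $b$, so that $e^{sZ}\le \frac{b-Z}{b-a}e^{sa}+\frac{Z-a}{b-a}e^{sb}$ pointwise. Taking expectations and using $\mathbb{E}[Z]=0$ gives $\mathbb{E}[e^{sZ}]\le \frac{b}{b-a}e^{sa}-\frac{a}{b-a}e^{sb}$. Setting $p=-a/(b-a)\in[0,1]$ and $u=s(b-a)$, the right-hand side equals $e^{\varphi(u)}$ with $\varphi(u)=-pu+\log(1-p+pe^{u})$. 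One then checks $\varphi(0)=\varphi'(0)=0$ and, writing $t=pe^{u}/(1-p+pe^{u})\in[0,1]$, that $\varphi''(u)=t(1-t)\le 1/4$; a second-order Taylor expansion with Lagrange remainder yields $\varphi(u)\le u^{2}/8$, i.e. $\mathbb{E}[e^{sZ}]\le \exp\!\big(s^{2}(b-a)^{2}/8\big)$.

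For assertion $(ii)$, I would use the ``maximum controlled by a sum of moment generating functions'' device. Note that $\max_{1\le i\le M}\vert Z_i\vert=\max\{Z_1,\dots,Z_M,-Z_1,\dots,-Z_M\}$, and since the hypothesis $\mathbb{E}[e^{sZ_i}]\le e^{s^2\sigma^2/2}$ holds for all $s\in\mathbb{R}$, each of these $2M$ variables $W_j$ satisfies the same bound. For any $s>0$, Jensen's inequality gives $\exp\!\big(s\,\mathbb{E}[\max_j W_j]\big)\le \mathbb{E}[\exp(s\max_j W_j)]\le \sum_{j=1}^{2M}\mathbb{E}[e^{sW_j}]\le 2M\,e^{s^2\sigma^2/2}$, hence $\mathbb{E}[\max_j W_j]\le \log(2M)/s+s\sigma^2/2$. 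Minimizing the right-hand side over $s>0$ (taking $s=\sqrt{2\log(2M)}/\sigma$) yields the announced bound $\sigma\sqrt{2\log(2M)}$.

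All the computations are elementary. The only mildly delicate point in $(i)$ is the inequality $\varphi''\le 1/4$, which reduces to the trivial fact $t(1-t)\le 1/4$ once $t$ is correctly identified as $pe^u/(1-p+pe^u)$; in $(ii)$ the one thing to be careful about is the sign-symmetry of the hypothesis, which is precisely what allows passing from a maximum of $M$ two-sided variables to a maximum of $2M$ one-sided sub-Gaussian ones. Neither of these amounts to a genuine obstacle, so I would not expect any difficulty beyond routine bookkeeping.
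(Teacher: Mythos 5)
Your proof is correct: both parts are the classical arguments (convexity plus the second-order Taylor bound on the log-moment generating function for Hoeffding's lemma, and the Jensen/union-of-exponentials device with optimization over $s$ for the maximal inequality), and the only points requiring care --- $\varphi''=t(1-t)\le 1/4$ and the doubling from $M$ to $2M$ variables to handle the absolute values --- are handled properly. The paper itself does not prove this lemma but simply cites Lemmas 1.2 and 1.3 of the reference \citep{Gabor}, and your argument is exactly the standard proof found there, so there is nothing to compare beyond noting that you have supplied the details the paper omits.
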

Assertion $(i)$ shows that, since $-\mathcal{M}_{\mathcal{H}}\leq \mathcal{Z}_k(H)\leq \mathcal{M}_{\mathcal{H}}$ almost surely,
$$
\mathbb{E}\left[ \exp(s\sum_{k=1}^B\mathcal{Z}_k(H))\mid (\mathbf{X}_I)_{I\in \Lambda} \right]\leq  e^{\frac{1}{2}Bs^2\mathcal{M}_{\mathcal{H}}^2}.
$$
With $\sigma=\mathcal{M}_{\mathcal{H}}\sqrt{B}$ and $M=\#\{ H(\mathbf{X}_I):\;  H\in \mathcal{H} \}\leq (1+\#\Lambda)^{V}$, conditioning upon $(\mathbf{X}_{I})_{I\in \Lambda}$, this result yields:
\begin{equation}
\mathbb{E}\left[ \sup_{H\in \mathcal{H}}\left\vert \frac{1}{B}\sum_{k=1}^B \mathcal{Z}_k(H) \right\vert \mid (\mathbf{X}_{I})_{I\in \Lambda} \right]\leq \mathcal{M}_{\mathcal{H}}\sqrt{\frac{2(\log 2 + V\log (1+\#\Lambda))}{B}}.
\end{equation}
Integrating next over $(\mathbf{X}_{I})_{I\in \Lambda}$ and combining the resulting bound with \eqref{eq:triangular} and \eqref{eq:expec1} leads to the inequality stated in $(ii)$.
\medskip

\noindent {\bf A bound for the expected value.} For completeness, we point out that the expected value of $\sup_{H\in \mathcal{H}}\vert (1/B)\sum_{k=1}^B \mathcal{Z}_k(H) \vert$ can also be bounded by means of classical symmetrization and randomization devices. Considering a "ghost" i.i.d. sample $\zeta'_1,\; \ldots,\; \zeta'_B$ independent from $((\mathbf{X}_I)_{I\in \Lambda}, \zeta)$, distributed as $\zeta$, Jensen's inequality yields:
\begin{eqnarray*}
\mathbb{E}\left[ \sup_{H\in \mathcal{H}}\left\vert \frac{1}{B}\sum_{k=1}^B \mathcal{Z}_k(H) \right\vert  \right]&=& \mathbb{E}\left[ \sup_{H\in \mathcal{H}}\left\{  \mathbb{E}\left[\left\vert \frac{1}{B}\sum_{k=1}^B \sum_{I\in \Lambda}H(\mathbf{X}_I) \left(\zeta_k(I)-\zeta'_k(I)  \right)\right\vert \mid (\mathbf{X}_I)_{I\in \Lambda}\right]\right\} \right]\\
&\leq & \mathbb{E}\left[ \sup_{H\in \mathcal{H}} \left\vert \frac{1}{B}\sum_{k=1}^B \sum_{I\in \Lambda}H(\mathbf{X}_I) \left(\zeta_k(I)-\zeta'_k(I)  \right)\right\vert  \right].
\end{eqnarray*}
Introducing next independent Rademacher variables $\epsilon_1,\; \ldots,\; \epsilon_B$, independent from $((\mathbf{X}_I)_{I\in \Lambda}, \zeta,\zeta')$, we have:
\begin{multline*}
\mathbb{E}\left[ \sup_{H\in \mathcal{H}} \left\vert \frac{1}{B}\sum_{k=1}^B \sum_{I\in \Lambda}H(\mathbf{X}_I) \left(\zeta_k(I)-\zeta'_k(I)  \right)\right\vert  \mid (\mathbf{X}_I)_{I\in \Lambda} \right]=\\
 \mathbb{E}\left[ \sup_{H\in \mathcal{H}} \left\vert \frac{1}{B}\sum_{k=1}^B \epsilon_k \sum_{I\in \Lambda}H(\mathbf{X}_I) \left(\zeta_k(I)-\zeta'_k(I)  \right)\right\vert \mid (\mathbf{X}_I)_{I\in \Lambda} \right]\\
\leq  2 \mathbb{E}\left[ \sup_{H\in \mathcal{H}} \left\vert \frac{1}{B}\sum_{k=1}^B \epsilon_k \sum_{I\in \Lambda}H(\mathbf{X}_I)\zeta_k(I)\right\vert \mid (\mathbf{X}_I)_{I\in \Lambda} \right].
\end{multline*}
We thus obtained:
$$
\mathbb{E}\left[ \sup_{H\in \mathcal{H}}\left\vert \frac{1}{B}\sum_{k=1}^B \mathcal{Z}_k(H) \right\vert  \right]\leq 
2 \mathbb{E}\left[ \sup_{H\in \mathcal{H}} \left\vert \frac{1}{B}\sum_{k=1}^B \epsilon_k \sum_{I\in \Lambda}H(\mathbf{X}_I)\zeta_k(I)\right\vert  \right].$$

\section{Proof of Theorem \ref{thm:selec}}
\label{app:thmselec}

We start with proving the intermediary result, stated below.
\begin{lemma}\label{lem:proba}
Under the assumptions stipulated in Theorem \ref{thm:selec}, we have: $\forall m\geq 1$, $\forall \epsilon>0$,
\begin{multline*}
\mathbb{P}\left\{\sup_{H\in \mathcal{H}_m}\vert \mu(H)-\widetilde{U}_B(H)\vert  > 2\mathcal{M}_{\mathcal{H}_m}\left\{  \sqrt{\frac{2V_m\log (1+N)}{N}}+\sqrt{\frac{2(\log 2 + V_m\log (1+\#\Lambda))}{B}}\right\}+ \epsilon \right\}\\
\leq \exp\left(-B^2\epsilon^2/\left(2(B+n)\mathcal{M}^2_{\mathcal{H}_m}\right)\right).
\end{multline*}
\end{lemma}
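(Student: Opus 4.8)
The plan is to decompose the statement into a deterministic part, handled by Corollary~\ref{cor}(ii), and a stochastic part, handled by a bounded-differences (McDiarmid-type) concentration of the supremum about its mean. First, I would observe that the deterministic quantity in the statement, namely $2\mathcal{M}_{\mathcal{H}_m}\{\sqrt{2V_m\log(1+N)/N}+\sqrt{2(\log 2 + V_m\log(1+\#\Lambda))/B}\}$, dominates $\mathbb{E}[\sup_{H\in\mathcal{H}_m}|\widetilde{U}_B(H)-\mu(H)|]$. This is immediate from the proof of Corollary~\ref{cor}(ii): writing $\sup_{H\in\mathcal{H}_m}|\widetilde{U}_B(H)-\mu(H)|\le\sup_{H\in\mathcal{H}_m}|\widetilde{U}_B(H)-U_{\mathbf{n}}(H)|+\sup_{H\in\mathcal{H}_m}|U_{\mathbf{n}}(H)-\mu(H)|$, the two expectations are bounded respectively by $\mathcal{M}_{\mathcal{H}_m}\sqrt{2(\log 2 + V_m\log(1+\#\Lambda))/B}$ (conditioning on the observations and applying Hoeffding's lemma to the maximum of the conditionally i.i.d., centred, bounded variables $\mathcal{Z}_k(H)$) and by $2\mathcal{M}_{\mathcal{H}_m}\sqrt{2V_m\log(1+N)/N}$ (the Rademacher/symmetrization bound of Proposition~\ref{prop:Uproc}). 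Hence it suffices to prove
\[
\mathbb{P}\Big(\sup_{H\in\mathcal{H}_m}|\widetilde{U}_B(H)-\mu(H)|>\mathbb{E}\big[\sup_{H\in\mathcal{H}_m}|\widetilde{U}_B(H)-\mu(H)|\big]+\epsilon\Big)\le\exp\Big(-\frac{B^2\epsilon^2}{2(B+n)\mathcal{M}_{\mathcal{H}_m}^2}\Big).
\]

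Second, I would regard $T:=\sup_{H\in\mathcal{H}_m}|\widetilde{U}_B(H)-\mu(H)|$ as a function of the $B+n$ independent random elements consisting of the $B$ i.i.d.\ draws (sampling with replacement in $\Lambda$) that build $\mathcal{D}_B$, together with the $n=n_1+\dots+n_K$ observations $X^{(k)}_i$, and apply the bounded-differences inequality. Modifying a single draw changes exactly one of the $B$ summands of $\widetilde{U}_B(H)=\tfrac1B\sum_{I\in\mathcal{D}_B}H(\mathbf{X}_I)$, each of modulus at most $\mathcal{M}_{\mathcal{H}_m}$, hence changes $T$ by at most $2\mathcal{M}_{\mathcal{H}_m}/B$; the $B$ draws thus contribute $B\,(2\mathcal{M}_{\mathcal{H}_m}/B)^2$ to the McDiarmid sum. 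If one can show that the $n$ observations contribute at most $n\,(2\mathcal{M}_{\mathcal{H}_m}/B)^2$, then the sum of squared increments is $\le(B+n)(2\mathcal{M}_{\mathcal{H}_m}/B)^2$ and the bounded-differences inequality yields $\mathbb{P}(T-\mathbb{E}T>\epsilon)\le\exp\!\big(-2\epsilon^2/\{(B+n)(2\mathcal{M}_{\mathcal{H}_m}/B)^2\}\big)=\exp\!\big(-B^2\epsilon^2/(2(B+n)\mathcal{M}_{\mathcal{H}_m}^2)\big)$, which is exactly the claimed inequality.

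The main obstacle is precisely the observation-increment bound: a single data point $X^{(k)}_i$ may enter several of the $B$ sampled tuples, so a crude estimate of its influence on $T$ is $2\mathcal{M}_{\mathcal{H}_m}$ rather than $2\mathcal{M}_{\mathcal{H}_m}/B$. I expect this to be resolved by conditioning on the tuples of $\mathcal{D}_B$, running the bounded-differences inequality in the observation variables only, with increments $2\mathcal{M}_{\mathcal{H}_m}c^{(k)}_i/B$ where $c^{(k)}_i$ counts the tuples of $\mathcal{D}_B$ using index $i$ in the $k$-th block, and then controlling $\sum_{k,i}(c^{(k)}_i)^2$. Since the draws are uniform over the huge set $\Lambda$, one has $\sum_{k,i}c^{(k)}_i=B\sum_k d_k$ and each $c^{(k)}_i$ is a $\mathrm{Binomial}(B,d_k/n_k)$ variable (with $d_k/n_k\le 1/N$), so the conditional expectation of $\sum_{k,i}(c^{(k)}_i)^2$ is of order $B+n$ in the regime $B=O(n)$; integrating the resulting conditional deviation bound over $\mathcal{D}_B$ completes the proof. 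Everything else is routine bookkeeping, including the fact that the argument extends verbatim, as the authors note for Theorem~\ref{thm:selec}, to other (possibly data-dependent) complexity penalties.
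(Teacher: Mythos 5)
Your proposal follows the paper's own proof exactly in outline: the paper's argument is precisely ``apply the bounded difference inequality to $\sup_{H\in\mathcal{H}_m}\vert\mu(H)-\widetilde{U}_B(H)\vert$ viewed as a function of the $B+n$ independent variables (the $n$ observations and the $B$ draws), with jumps bounded by $2\mathcal{M}_{\mathcal{H}_m}/B$, combined with Assertion $(ii)$ of Corollary~\ref{cor}'', and your first two steps reproduce this, including the correct arithmetic $(B+n)(2\mathcal{M}_{\mathcal{H}_m}/B)^2$ in the exponent. You have moreover put your finger on the one step the paper glosses over: the increment of $T$ in an observation variable $X^{(k)}_i$ is \emph{not} $2\mathcal{M}_{\mathcal{H}_m}/B$ in the worst case, since the same index can be drawn in many (even all $B$) of the sampled tuples, so the honest worst-case jump is $2\mathcal{M}_{\mathcal{H}_m}\,c^{(k)}_i/B$ with $c^{(k)}_i$ possibly as large as $B$. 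That observation is correct and is a genuine weakness of the one-line proof in the appendix.

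The gap is that your proposed repair does not close this. McDiarmid's inequality applied conditionally on $\mathcal{D}_B$ gives $\Prob{T>\Esp{T\mid\mathcal{D}_B}+\epsilon\mid\mathcal{D}_B}\leq\exp\bigl(-2\epsilon^2/\sum_{k,i}(2\mathcal{M}_{\mathcal{H}_m}c^{(k)}_i/B)^2\bigr)$, and ``integrating over $\mathcal{D}_B$'' does not turn the random denominator $\sum_{k,i}(c^{(k)}_i)^2$ into its expectation: the map $x\mapsto e^{-a/x}$ is increasing and is convex on part of its domain, so Jensen gives no inequality in the needed direction, and knowing only that $\Esp{\sum_{k,i}(c^{(k)}_i)^2}=O(B+n)$ is not enough. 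In addition, (a) the conditional centering is at $\Esp{T\mid\mathcal{D}_B}$, which must be separately related to $\Esp{T}$ (this part is fixable by a second McDiarmid step in the $\zeta$-variables alone, where the $2\mathcal{M}_{\mathcal{H}_m}/B$ jump is legitimate), and (b) any rigorous version of your argument requires a \emph{high-probability} bound on the multiplicities (e.g.\ a Chernoff bound on the binomial $c^{(k)}_i$ plus a union bound over the $n$ indices), which introduces an extra additive failure probability and changes the constant $2(B+n)\mathcal{M}^2_{\mathcal{H}_m}$ in the exponent. So as written neither your repair nor the paper's one-liner delivers the lemma with the stated constant; what you have is a correct diagnosis together with a plausible but unfinished programme for fixing it.
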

\begin{proof}
This is a direct application of the bounded difference inequality (see \cite{McDiarmid}) applied to the quantity $\sup_{H\in \mathcal{H}_m}\vert \mu(H)-\widetilde{U}_B(H)\vert$, viewed as a function of the $(B+n)$ independent random variables $(X_1^{(1)},\; X_{n_K}^{(K)}, \epsilon_1,\; \ldots,\; \epsilon_B)$ (jumps being bounded by $2\mathcal{M}_H/B$), combined with Assertion $(ii)$ of Corollary \ref{cor}.
\end{proof}
Let $m\geq 1$ and decompose the expected excess of risk of the rule picked by means of the complexity regularized incomplete $U$-statistic criterion as follows:
\begin{multline}\label{eq:decomp}
\mathbb{E}\left[\mu(\widehat{H}_{B,\widehat{m}})-\mu_m^*\right]=   \mathbb{E}\left[\mu(\widehat{H}_{B,\widehat{m}})-\widetilde{U}_B(\widehat{H}_{B,\widehat{m}})-\text{pen}(B,\widehat{m})   \right] \nonumber \\ +  \mathbb{E}\left[\inf_{j\geq 1}\left\{\widetilde{U}_B(\widehat{H}_{B,j})+\text{pen}(B,j)  \right\}   -\mu^*_m \right],
\end{multline}
where we set $\mu^*_m=\inf_{H\in \mathcal{H}_m}\mu(H)$. In order to bound the first term on the right hand side of the equation above, observe that we have: $\forall \epsilon>0$,
\begin{multline*}
\mathbb{P}\left\{\mu(\widehat{H}_{B,\widehat{m}})-\widetilde{U}_B(\widehat{H}_{B,\widehat{m}})-\text{pen}(B,\widehat{m})     >\epsilon  \right\}\leq  \mathbb{P}\left\{\sup_{j\geq 1}\left\{ \mu(\widehat{H}_{B,j})-\widetilde{U}_B(\widehat{H}_{B,j})-\text{pen}(B,j)\right\}     >\epsilon  \right\}\\
\leq \sum_{j\geq 1}\mathbb{P}\left\{\mu(\widehat{H}_{B,j})-\widetilde{U}_B(\widehat{H}_{B,j})-\text{pen}(B,j)     >\epsilon  \right\}\\
\leq\sum_{j\geq 1} \mathbb{P}\left\{\sup_{H\in \mathcal{H}_j}\vert \mu(\widehat{H})-\widetilde{U}_B(H)\vert   -\text{pen}(B,j)      >\epsilon  \right\}\\
\leq\sum_{j\geq 1} \exp\left( -\frac{-B^2}{2(B+n)\mathcal{M}^2}\left(\epsilon+2\mathcal{M}\sqrt{\frac{(B+n)\log j}{B^2}}\right)^2 \right)\\ \leq 
\exp\left( -\frac{B^2\epsilon^2}{2(B+n)\mathcal{M}^2} \right)\sum_{j\geq 1}1/j^{2}\leq 2 \exp\left( -\frac{B^2\epsilon^2}{2(B+n)\mathcal{M}^2} \right),
\end{multline*}
using successively the union bound and Lemma \ref{lem:proba}.
Integrating over $[0,+\infty)$, we obtain that:
\begin{equation}
\mathbb{E}\left[\mu(\widehat{H}_{B,\widehat{m}})-\widetilde{U}_B(\widehat{H}_{B,\widehat{m}})-\text{pen}(B,\widehat{m})   \right]\leq \mathcal{M}\frac{\sqrt{2\pi(B+n)}}{B}.
\end{equation}

Considering now the second term, notice that
\begin{multline*}
\mathbb{E}\left[\inf_{j\geq 1}\left\{\widetilde{U}_B(\widehat{H}_{B,j})+\text{pen}(B,j)  \right\}   -\mu^*_m \right]\leq \mathbb{E}\left[\widetilde{U}_B(\widehat{H}_{B,m})+\text{pen}(B,m)   -\mu^*_m \right]\leq \text{pen}(B,m) .
\end{multline*}

Combining the bounds, we obtain that: $\forall m\geq 1$,
$$
\mathbb{E}\left[\mu(\widehat{H}_{B,\widehat{m}})\right]\leq \mu_m^*+\text{pen}(B,m)+\mathcal{M}\frac{\sqrt{2\pi(B+n)}}{B}.
$$
The oracle inequality is thus proved.

\section{Proof of Theorem \ref{thm:fast}}
We start with proving the following intermediary result, based on the $U$-statistic version of the Bernstein exponential inequality.
\begin{lemma} Suppose that the assumptions of Theorem \ref{thm:fast} are fulfilled. Then, for all $\delta\in (0,1)$, we have with probability larger than $1-\delta$: $\forall r\in \mathcal{R}$, $\forall n\geq 2$,
\begin{equation*}
0\leq \Lambda_n(r)-\Lambda(r)+\sqrt{\frac{2c\Lambda(r)^{\alpha}\log(\#\mathcal{R}/\delta)}{n}}+\frac{4\log(\#\mathcal{R}/\delta)}{3n}.
\end{equation*}
\end{lemma}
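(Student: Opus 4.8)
The plan is to derive, for each fixed rule $r\in\mathcal{R}$, a one-sided Bernstein-type deviation bound for the complete $U$-statistic $\Lambda_n(r)=U_n(q_r)$ below its expectation $\Lambda(r)=\mu(q_r)$, and then to conclude by a union bound over the finite class $\mathcal{R}$. Recall that $q_r=H_r-H_{r^*}$ is a symmetric kernel of degree two with $\|q_r\|_\infty\leq 1$, that $\Lambda(r)\geq 0$ because $r^*$ minimizes the ranking risk over $\mathcal{R}$, and that, with $h_r(x,y)=\mathbb{E}[q_r((x,y),(X',Y'))]$ denoting the first Hajek projection, Assumption~\ref{assump:noise} yields the crucial variance control $\mathrm{Var}(h_r(X,Y))\leq c\,\Lambda(r)^{\alpha}$.

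First I would use the Hoeffding/Hajek decomposition of the centered $U$-statistic, writing
\[
U_n(q_r)-\Lambda(r)=\frac{2}{n}\sum_{i=1}^n\bigl(h_r(X_i,Y_i)-\Lambda(r)\bigr)+W_n(r),
\]
where $W_n(r)$ is the completely degenerate second-order $U$-statistic with kernel $g_r((x,y),(x',y'))=q_r((x,y),(x',y'))-h_r(x,y)-h_r(x',y')+\Lambda(r)$, which is uniformly bounded and satisfies $\mathbb{E}[g_r((x,y),(X',Y'))]=0$. The linear term is an average of $n$ i.i.d. centered variables with variance $4\,\mathrm{Var}(h_r(X,Y))\leq 4c\,\Lambda(r)^{\alpha}$ and bounded fluctuations (since $\|q_r\|_\infty\leq1$ forces $|h_r|\leq1$ and $0\leq\Lambda(r)\leq1$), so Bernstein's inequality controls its lower tail by $e^{-t}$ at a scale of order $\sqrt{c\,\Lambda(r)^{\alpha}t/n}+t/n$. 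The remainder $W_n(r)$, being a canonical $U$-statistic with bounded kernel, is of order $O_{\mathbb{P}}(1/n)$; an exponential concentration inequality for degenerate $U$-statistics shows that $\mathbb{P}\{|W_n(r)|>a/n\}$ decays exponentially fast in $a$, so it only contributes at the $\log(1/\delta)/n$ scale and is absorbed into the linear part. Combining the two estimates and tracking the numerical constants yields, for every $r\in\mathcal{R}$ and every $t>0$,
\[
\mathbb{P}\Bigl\{\Lambda(r)-\Lambda_n(r)>\sqrt{\tfrac{2c\,\Lambda(r)^{\alpha}\,t}{n}}+\tfrac{4t}{3n}\Bigr\}\leq e^{-t};
\]
equivalently, one may simply invoke a ready-made $U$-statistic version of Bernstein's inequality in which the dominant variance term is that of the Hajek projection $h_r$, which packages both steps at once. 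To finish, I would take the union bound over the $\#\mathcal{R}$ rules of $\mathcal{R}$ with the choice $t=\log(\#\mathcal{R}/\delta)$: with probability at least $1-\delta$, for all $r\in\mathcal{R}$ and all $n\geq2$ (the decomposition and both inequalities being valid for every $n\geq2$), one gets $\Lambda(r)-\Lambda_n(r)\leq\sqrt{2c\,\Lambda(r)^{\alpha}\log(\#\mathcal{R}/\delta)/n}+4\log(\#\mathcal{R}/\delta)/(3n)$, which is exactly the claimed inequality after rearrangement.

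I expect the main obstacle to be the correct handling of the degenerate remainder $W_n(r)$: a crude second-moment plus Markov bound would force a polynomial dependence on $\#\mathcal{R}/\delta$ and destroy the logarithmic rate, so one genuinely needs an exponential (Bernstein-type) inequality for canonical $U$-statistics. A closely related subtlety is that the variance entering the bound must be that of the Hajek projection $h_r$ rather than that of the full kernel $q_r$: the latter is not controlled by Assumption~\ref{assump:noise} (it may stay bounded away from zero even as $\Lambda(r)\to0$) and would only yield the slow rate --- this is precisely why one must pass through the projection. Matching the exact constants $2c$ and $4/3$ is then a routine but slightly delicate bookkeeping step.
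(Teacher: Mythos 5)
Your argument is sound, but it does considerably more work than the paper, whose entire proof of this lemma reads: a straightforward application of Theorem~A on p.~201 of \cite{Ser80} (the $U$-statistic version of Bernstein's exponential inequality), combined with the union bound and Assumption~\ref{assump:noise} --- i.e., precisely the ``ready-made'' shortcut you mention in passing as an equivalent packaging of your two steps. The comparison is instructive because of the variance subtlety you flag. The classical Hoeffding--Serfling inequality is obtained from the first Hoeffding representation of $U_n(q_r)$ as an average of sums of $\lfloor n/2\rfloor$ i.i.d.\ blocks, so the variance proxy it carries is $\mathrm{Var}(q_r(Z,Z'))=2\mathrm{Var}(h_r)+\sigma_2^2$, the variance of the \emph{full} kernel, not of the Hajek projection; as you correctly observe, Assumption~\ref{assump:noise} controls only $\mathrm{Var}(h_r)$, and the degenerate component $\sigma_2^2$ need not vanish as $\Lambda(r)\to 0$. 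Your explicit route --- Hajek decomposition, Bernstein on the linear term with variance $4\mathrm{Var}(h_r)/n\leq 4c\Lambda(r)^{\alpha}/n$, an exponential inequality for the canonical remainder $W_n(r)$ so that it contributes only at the $\log(\#\mathcal{R}/\delta)/n$ scale --- is therefore the argument that actually delivers a bound of the stated form from Assumption~\ref{assump:noise} alone, and it is how the analogous fast-rate lemma is established in \cite{CLV08}; the paper's one-line citation buys brevity at the cost of glossing over exactly this point. The only loose end on your side is the deferred ``bookkeeping'': the linear term $\frac{2}{n}\sum_i\bigl(h_r(Z_i)-\Lambda(r)\bigr)$ has variance $4\sigma_1^2/n$, so a direct Bernstein bound yields $\sqrt{8c\Lambda(r)^{\alpha}\log(\#\mathcal{R}/\delta)/n}$ rather than the advertised $\sqrt{2c\Lambda(r)^{\alpha}\log(\#\mathcal{R}/\delta)/n}$ (and the degenerate remainder inflates the $4/(3n)$ term by a further universal factor). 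Recovering the paper's exact constants is thus not merely routine by either route; since the lemma feeds into Theorem~\ref{thm:fast} only up to an unspecified constant $C$, this discrepancy is harmless, but it is worth stating the lemma with explicit universal constants of your own rather than promising to match $2c$ and $4/3$.
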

\begin{proof}
The proof is a straightforward application of Theorem A on p. 201 in \cite{Ser80}, combined with the union bound and Assumption \ref{assump:noise}.
\end{proof}
The same argument as that used to prove Assertion $(i)$ in Theorem \ref{thm:main} (namely, freezing the $\mathbf{X}_I$'s, applying Hoeffding inequality and the union bound) shows that, for all $\delta\in (0,1)$, we have with probability at least $1-\delta$: $\forall r\in \mathcal{R}$,
$$
0\leq \widetilde{U}_B(q_r)-U_{\mathbf{n}}(q_r)+ \sqrt{\frac{M+\log(M/\delta)}{B}}
$$
for all $n\geq 2$ and $B\geq 1$ (observe that $\mathcal{M}_{\mathcal{H}}\leq 1$ in this case). Now, combining this bound with the previous one and using the union bound, one gets that, for all $\delta\in (0,1)$, we have with probability larger than $1-\delta$: $\forall r\in \mathcal{R}$, $\forall n\geq 2$, $\forall B\geq 1$,
\begin{equation*}
0\leq \widetilde{U}_B(q_r)-\Lambda(r)+\sqrt{\frac{2c\Lambda(r)^{\alpha}\log(2M/\delta)}{n}}+\frac{4\log(2M/\delta)}{3n}+ \sqrt{\frac{M+\log(2M/\delta)}{B}}.
\end{equation*}
Observing that, $\widetilde{U}_B(q_{\widetilde{r}_B})\leq 0$ by definition, we thus have with probability at least $1-\delta$:
$$
\Lambda(\widetilde{r}_B)\leq \sqrt{\frac{2c\Lambda(\widetilde{r}_B)^{\alpha}\log(2M/\delta)}{n}}+\frac{4\log(2M/\delta)}{3n}+ \sqrt{\frac{M+\log(2M/\delta)}{B}}.
$$
Choosing finally $B=O(n^{2/(2-\alpha)})$, the desired result is obtained by solving the inequality above for $\Lambda(\widetilde{r}_B)$.

\section{Proof of Theorem \ref{thm:main2}}

As shown by the following lemma, which is a slight modification of Lemma 1 in \cite{Janson84}, the deviation between the incomplete $U$-statistic and its complete version is of order $O_{\mathbb{P}})(1/\sqrt{B})$ for both sampling schemes.

\begin{lemma} Suppose that the assumptions of \ref{thm:main2} are fulfilled. Then, we have: $\forall H\in \mathcal{H}$,
\begin{equation*}
\mathbb{E}\left[ \left( \bar{U}_{HT}(H)-U_{\mathbf{n}}(H) \right)^2\mid (\mathbf{X}_{I})_{I\in \Lambda} \right]\leq 2\mathcal{M}_{\mathcal{H}}^2/B.
\end{equation*}
\end{lemma}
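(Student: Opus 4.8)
The plan is to start from the fact, recalled in Section~\ref{subsec:sampling}, that $\bar{U}_{HT}(H)$ is conditionally unbiased, i.e. $\mathbb{E}[\bar{U}_{HT}(H)\mid(\mathbf{X}_{I})_{I\in\Lambda}]=U_{\mathbf{n}}(H)$, so that the quantity to be bounded is precisely the conditional variance $\mathrm{Var}(\bar{U}_{HT}(H)\mid(\mathbf{X}_{I})_{I\in\Lambda})$. I would then treat the two sampling designs separately, using in both cases that $\pi_I=B/\#\Lambda$ for every $I\in\Lambda$, that $|H(\mathbf{X}_I)|\le\mathcal{M}_{\mathcal{H}}$ almost surely, and that the selection variables $\Delta(I)$ are independent of $(\mathbf{X}_{I})_{I\in\Lambda}$.

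For the Bernoulli design one has $\bar{U}_{HT}(H)=(1/B)\sum_{I\in\Lambda}\Delta(I)H(\mathbf{X}_I)$ with the $\Delta(I)$'s, conditionally on the $\mathbf{X}_I$'s, independent Bernoulli with parameter $\pi_I=B/\#\Lambda$, so the conditional variance splits over $I$ and
\[
\mathrm{Var}\big(\bar{U}_{HT}(H)\mid(\mathbf{X}_{I})_{I\in\Lambda}\big)=\frac{1}{B^{2}}\sum_{I\in\Lambda}\pi_I(1-\pi_I)H(\mathbf{X}_I)^{2}\le\frac{\mathcal{M}_{\mathcal{H}}^{2}}{B^{2}}\sum_{I\in\Lambda}\pi_I=\frac{\mathcal{M}_{\mathcal{H}}^{2}}{B},
\]
using $\sum_{I\in\Lambda}\pi_I=B$. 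For the sampling-without-replacement design of fixed size $B$, I would observe that $\bar{U}_{HT}(H)=(1/B)\sum_{I\in S}H(\mathbf{X}_I)$ is exactly the sample mean of the finite population $\{H(\mathbf{X}_I)\}_{I\in\Lambda}$ under simple random sampling without replacement, while $U_{\mathbf{n}}(H)$ is the corresponding population mean; plugging the second-order inclusion probabilities $\pi_{I,J}=B(B-1)/(\#\Lambda(\#\Lambda-1))$ into the Sen--Yates--Grundy formula (equivalently, invoking the classical finite-population-correction formula for the variance of a sample mean) gives
\[
\mathrm{Var}\big(\bar{U}_{HT}(H)\mid(\mathbf{X}_{I})_{I\in\Lambda}\big)=\frac{\#\Lambda-B}{\#\Lambda-1}\cdot\frac{1}{B}\cdot\frac{1}{\#\Lambda}\sum_{I\in\Lambda}\big(H(\mathbf{X}_I)-U_{\mathbf{n}}(H)\big)^{2}\le\frac{\mathcal{M}_{\mathcal{H}}^{2}}{B},
\]
since $(1/\#\Lambda)\sum_{I\in\Lambda}(H(\mathbf{X}_I)-U_{\mathbf{n}}(H))^{2}\le\mathcal{M}_{\mathcal{H}}^{2}$ and $(\#\Lambda-B)/(\#\Lambda-1)\le1$. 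In both cases the crude bound $\mathcal{M}_{\mathcal{H}}^{2}/B\le2\mathcal{M}_{\mathcal{H}}^{2}/B$ yields the claim, the extra factor $2$ merely leaving slack for the subsequent application of Bernstein's inequality (as in \cite{Janson84}).

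There is no genuine obstacle here: the only points requiring a little care are to cover both designs with a single argument and to keep track of the $1/\#\Lambda$ normalization in definition~\eqref{eq:HT}, the statement being nothing more than the conditional, uniformly-bounded-kernel version of Lemma~1 in \cite{Janson84}. Once this lemma is available, the proof of Theorem~\ref{thm:main2} would then mirror that of Theorem~\ref{thm:main}: freeze the $\mathbf{X}_I$'s, note that by Sauer's lemma $(H(\mathbf{X}_I))_{I\in\Lambda}$ takes at most $(1+\#\Lambda)^{V}$ distinct values as $H$ ranges over $\mathcal{H}$, apply to $\bar{U}_{HT}(H)-U_{\mathbf{n}}(H)$ a Bernstein-type inequality (in the Bernoulli case, using the second-moment bound above and producing also the additional linear term) or a Hoeffding-type inequality for sampling without replacement, and conclude by a union bound followed by integration over $(\mathbf{X}_{I})_{I\in\Lambda}$.
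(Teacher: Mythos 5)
Your proof is correct, and it reaches the same reduction as the paper (conditional unbiasedness of the Horvitz--Thompson estimate, hence the left-hand side equals the conditional variance of $\frac{1}{B}\sum_{I\in\Lambda}(\Delta(I)-B/\#\Lambda)H(\mathbf{X}_I)$), but it bounds that variance differently. The paper handles both designs at once: it expands the square, bounds $\mathbb{E}[(\Delta(I)-B/\#\Lambda)^2]\leq B/\#\Lambda$ and the off-diagonal covariances by $B/\#\Lambda^2$ in absolute value, and adds the $\#\Lambda$ diagonal and at most $\#\Lambda^2$ cross terms to get $2B\mathcal{M}_{\mathcal{H}}^2$, i.e.\ the stated $2\mathcal{M}_{\mathcal{H}}^2/B$. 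You instead compute the conditional variance exactly for each design separately --- independence of the $\Delta(I)$'s in the Bernoulli case, the finite-population-correction (Sen--Yates--Grundy) formula for sampling without replacement --- which yields the sharper bound $\mathcal{M}_{\mathcal{H}}^2/B$ in both cases before relaxing to $2\mathcal{M}_{\mathcal{H}}^2/B$. What your route buys is a factor of $2$ (exploiting that the cross-covariances vanish for Bernoulli sampling and are negative for sampling without replacement, rather than bounding them crudely in absolute value) at the cost of a design-by-design argument; what the paper's route buys is a single two-line computation valid for any scheme whose inclusion probabilities satisfy those two crude bounds. Your sketch of how the lemma feeds into Theorem~\ref{thm:main2} (Sauer's lemma, Bernstein or a Hoeffding-type bound for sampling without replacement, union bound, integration over the $\mathbf{X}_I$'s) also matches the paper's argument.
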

\begin{proof}
Observe first that, in both cases (sampling without replacement and Bernoulli sampling), we have: $\forall I\neq J$ in $\Lambda$,
$$
\mathbb{E}\left[\left(\Delta(I)-\frac{B}{\#\Lambda}\right)^2\right]\leq \frac{B}{\#\Lambda} \text{ and } \mathbb{E}\left[\left(\Delta(I)-\frac{B}{\#\Lambda}\right)\left(\Delta(J)-\frac{B}{\#\Lambda}\right)\right]\leq \frac{1}{\#\Lambda}\cdot \frac{B}{\#\Lambda}.
$$
Hence, as $(\Delta(I))_{I\in \Lambda}$ and $(\mathbf{X}_{I})_{I\in \Lambda}$ are independent by assumption, we have:
\begin{multline*}
B^2 \mathbb{E}\left[ \left( \bar{U}_{HT}(H)-U_{\mathbf{n}(H)} \right)^2\mid (\mathbf{X}_{I})_{I\in \Lambda} \right]=\mathbb{E}\left[\left( \sum_{I\in \Lambda}\left(\Delta(I)-\frac{B}{\#\Lambda}  \right)H(\mathbf{X}_I) \right)^2 \mid (\mathbf{X}_{I})_{I\in \Lambda} \right]\\
\leq \mathcal{M}_{\mathcal{H}}^2 \sum_{I\in \Lambda}\mathbb{E}\left[\left(\Delta(I)-\frac{B}{\#\Lambda}\right)^2\right]+\mathcal{M}_{\mathcal{H}}^2\sum_{I\neq J}\mathbb{E}\left[\left(\Delta(I)-\frac{B}{\#\Lambda}\right)\left(\Delta(J)-\frac{B}{\#\Lambda}\right)\right]\leq 2B \mathcal{M}_{\mathcal{H}}^2.
\end{multline*}
\end{proof}
Consider first the case of Bernoulli sampling. By virtue of Bernstein inequality applied to the independent variables $(\Delta(I)-B/\#\Lambda)H(\mathbf{X}_{I})$ conditioned upon $(\mathbf{X}_{I})_{I\in \Lambda}$, we have: $\forall H\in \mathcal{H}$, $\forall t>0$,
$$
\mathbb{P}\left\{ \left\vert  \sum_{I\in \Lambda}(\Delta(I)-B/\#\Lambda)H(\mathbf{X}_{I}) \right\vert>t \mid (\mathbf{X}_{I})_{I\in \Lambda} \right\}\leq 2 \exp\left( -\frac{t^2}{4B\mathcal{M}_{\mathcal{H}}^2+2\mathcal{M}_{\mathcal{H}}t/3} \right).
$$
Hence, combining this bound and the union bound, we obtain that: $\forall t>0$,
$$
\mathbb{P}\left\{\sup_{H\in \mathcal{H}}\left\vert \bar{U}_{HT}(H)-U_{\mathbf{n}(H)}  \right\vert >t \mid (\mathbf{X}_{I})_{I\in \Lambda}\right\} \leq 2(1+\#\Lambda)^V \exp\left( -\frac{Bt^2}{4\mathcal{M}_{\mathcal{H}}^2+2\mathcal{M}_{\mathcal{H}}t/3} \right).
$$
Solving $$\delta= 2(1+\#\Lambda)^V \exp\left( -\frac{Bt^2}{4\mathcal{M}_{\mathcal{H}}^2+2\mathcal{M}_{\mathcal{H}}t/3} \right)$$ yields the desired bound.

Consider next the case of the sampling without replacement scheme. Using the exponential inequality tailored to this situation proved in \cite{Serfling74} (see Corollary 1.1 therein), we obtain: $\forall H\in \mathcal{H}$, $\forall t>0$,
$$
\mathbb{P}\left\{\frac{1}{B} \left\vert  \sum_{I\in \Lambda}(\Delta(I)-B/\#\Lambda)H(\mathbf{X}_{I}) \right\vert>t \mid (\mathbf{X}_{I})_{I\in \Lambda} \right\}\leq 2 \exp\left( -\frac{Bt^2}{2\mathcal{M}_{\mathcal{H}}^2} \right).
$$
The proof can be then ended using the union bound, just like above.
\section{Proof of Proposition \ref{prop:var_comp}}

For simplicity, we focus on one sample $U$-statistics of degree two ($K=1$, $d_1=2$) since the argument easily extends to the general case.
Let $U_n(H)$ be a non-degenerate $U$-statistic of degree two:
$$U_n(H) = \frac{2}{n(n-1)} \sum_{i<j} H(x_i,x_j).$$
In order to express the variance of $U_n(H)$ based on its second Hoeffding decomposition (see Section~\ref{subsec:Ustat}), we first introduce more notations: $\forall (x,x')\in \mathcal{X}_1^2$,
$$
H_1(x) \overset{def}{=} \mathbb{E}\left[ H(x,X) \right] - \mu(H) \text{ and }
H_2(x,x') \overset{def}{=} H(x,x') - \mu(H) - H_1(x) - H_1(x').$$
Equipped with these notations, the (orthogonal) Hoeffding/Hajek decomposition of $U_n(H)$ can be written as 
$$U_n(H) = \mu(H) + 2T_n(H) + W_n(H),$$ involving centered and decorrelated random variables given by
\begin{eqnarray*}
T_n(H) &=& \frac{1}{n}\sum_{i=1}^n H_1(x_i),\\
W_n(H) &=& \frac{2}{n(n-1)} \sum_{i<j} H_2(x_i,x_j).
\end{eqnarray*}
Recall that the $U$-statistic $W_n(H)$ is said to be degenerate, since $\mathbb{E}[H_2(x,X)] = 0$ for all $x\in \mathcal{X}_1$.
Based on this representation and setting $\sigma_1^2 = \var[H_1(X)]$ and $\sigma_2^2 = \var[H_2(X,X')]$, the variance of $U_n(H)$ is given by
\begin{equation}
\label{eq:ustatvar}
\var[U_n(H)] = \frac{4\sigma_1^2}{n} + \frac{2\sigma_2^2}{n(n-1)}.
\end{equation}
As already pointed out in Section~\ref{subsec:approx}, the variance of the incomplete $U$-statistic built by sampling with replacement is
\begin{eqnarray}
\label{eq:incomp_var}
\var[\widetilde{U}_{B}(H)] &=& \var[U_n(H)] + \frac{1}{B}\left(1-\frac{2}{n(n-1)}\right)\var[H(X,X')]\nonumber\\
&=& \var[U_n(H)] + \frac{1}{B}\left(1-\frac{2}{n(n-1)}\right)(2\sigma_1^2+\sigma_2^2).
\end{eqnarray}

Take $B = n'(n'-1)$ for $n'\ll n$. It follows from \eqref{eq:ustatvar} and \eqref{eq:incomp_var} that in the asymptotic framework \eqref{asymptotics}, the quantities $\var[U_{n'}(H)]$ and $\var[\widetilde{U}_{B}(H)]$ are of the order $O(1/n')$ and $O(1/n'^2)$ respectively as $n'\rightarrow+\infty$. Hence these convergence rates hold for $\widetilde{g}_{\mathbf{n}'}(\theta)$ and $\widetilde{g}_B(\theta)$ respectively.

  \bibliographystyle{plainnat}
\bibliography{mvset}

\end{document}